%% file: main.tex
\documentclass{article}
\usepackage{ijcai26}

\usepackage{times}
\usepackage{soul}
\usepackage{url}
\usepackage[hidelinks]{hyperref}
\usepackage[utf8]{inputenc}
\usepackage[small]{caption}
\usepackage{graphicx}
\usepackage{amsmath}
\usepackage{amsthm}
\usepackage{booktabs}
\usepackage{multirow}
\usepackage{algorithm}
\usepackage{algorithmic}
\usepackage[switch]{lineno}

\usepackage{amsmath}
\usepackage{amssymb}

\newtheorem{theorem}{Theorem}
\newtheorem{remark}{Remark}
\newtheorem{proposition}{Proposition}

\title{AdaKernel: Learning Adaptive Kernel Parameters for Spatiotemporal Graph Neural Networks}

\author{
Zhongyue Zhang$^1$
\and
Guangyin Jin$^2$
\and
Yuxuan Liang$^3$
\and
Suwan Yin$^1$
\and
Yuankai Wu$^1$\
\affiliations
$^1$Sichuan University\
$^2$PLA Academy of Military Science\
$^3$The Hong Kong University of Science and Technology (Guangzhou)\
\emails
zzy080968@gmail.com,
jinguangyin96@gmail.com,
yuxliang@outlook.com,\
syin@scu.edu.cn,
kaimaogege@gmail.com
}

\begin{document}

\maketitle

\begin{abstract}
    Modeling spatial dependencies is central to spatiotemporal data analysis using Graph Neural Networks (GNNs). Traditional methods rely on distance-based kernels with predefined parameters, which restricts model capacity. Although generic adaptive mechanisms (e.g., Graph Attention Networks) offer flexibility, they often fail to capture the underlying geometric structure, performing worse than distance-based models in data-sparse scenarios. Addressing this, we revisit the kernel parameterization problem and theoretically prove that \textbf{misspecified kernel parameters introduce unavoidable approximation errors} in GNNs. To overcome this, we propose \textbf{AdaKernel}, a simple yet effective approach that learns adaptive kernel parameters within the neural network. Unlike methods that learn graph structures from scratch, AdaKernel adopts a \textbf{structure-preserving strategy} that optimizes the scale of physical interactions rather than discarding them. Extensive experiments on Kriging, Imputation, and Forecasting demonstrate that AdaKernel \textbf{consistently improves various GNN architectures} and outperforms model-agnostic adaptive baselines, validating that accurately learned kernel parameters are superior to both fixed priors and fully latent graph structures.
\end{abstract}

\section{Introduction}
\label{Introduction}

Spatiotemporal data mining has emerged as a critical tool for understanding and predicting complex phenomena across diverse domains, from environmental monitoring to urban planning and public health~\cite{atluri2018spatio}. The ability to effectively analyze and model spatiotemporal data is particularly crucial in three fundamental tasks: Kriging, which enables accurate spatial interpolation of missing measurements~\cite{cressie2015statistics}; Imputation, which recovers incomplete spatiotemporal data~\cite{yi2016st}; and Forecasting, which predicts future evolutionary patterns~\cite{yu2018spatio}. These capabilities have profound real-world implications—Kriging helps meteorologists construct high-resolution weather maps from sparse sensor networks~\cite{daley1993atmospheric}, Imputation allows climate scientists to reconstruct historical temperature records from partial observations~\cite{mann1998global}, and Forecasting enables urban planners to anticipate traffic patterns and optimize infrastructure development~\cite{ben1998dynamit}.

\begin{figure}[t]
  \centering
  \includegraphics[width=0.95\linewidth]{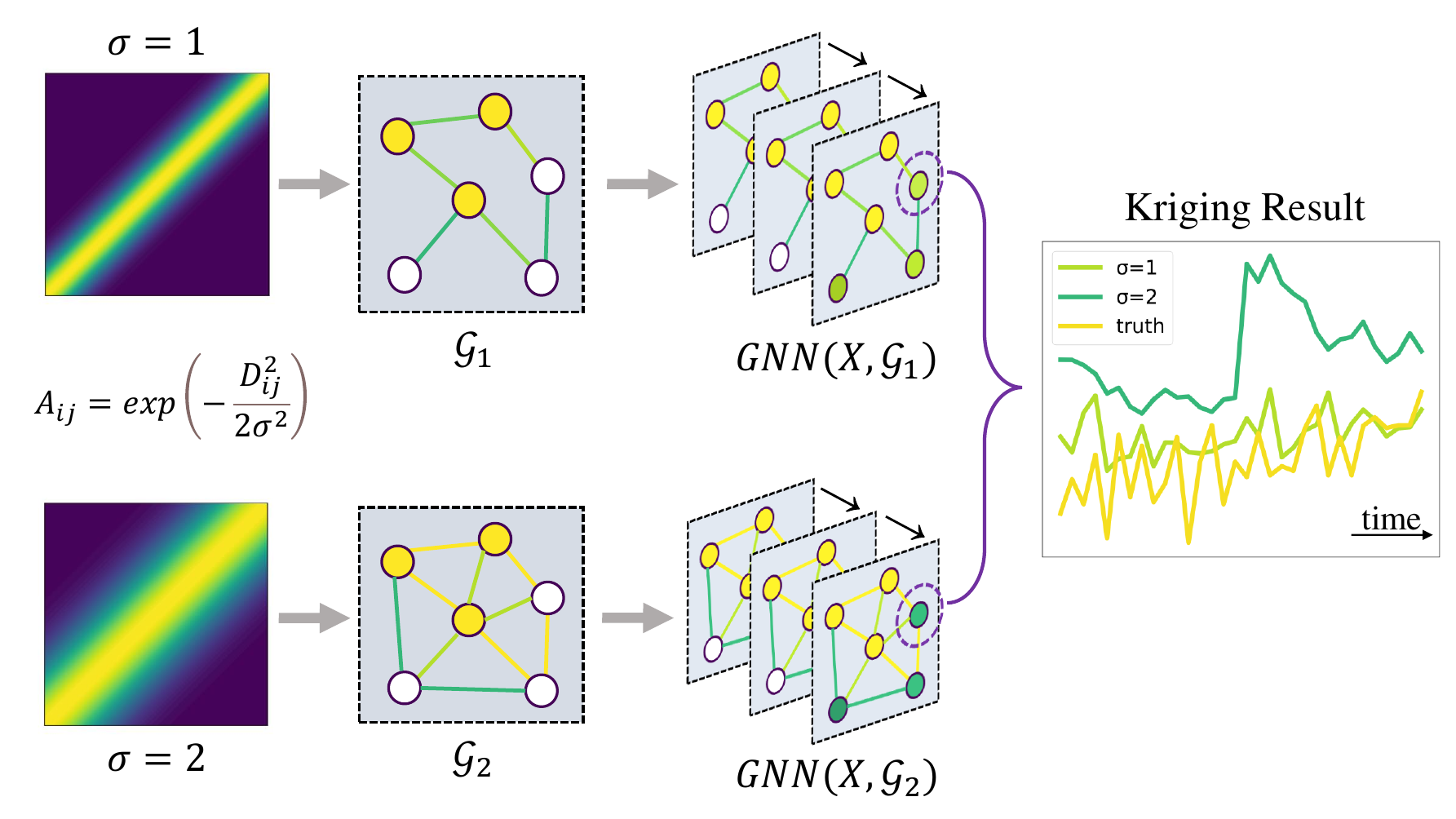}
  \caption{Impact of kernel parameter selection on GNN-based kriging. The figure illustrates how different kernel parameters in the Gaussian kernel function affect the performance of GNN-based kriging results. Top row shows the case with $\sigma = 1$, producing a sparser graph ($\mathcal{G}_1$), while bottom row with $\sigma = 2$ yields a denser graph ($\mathcal{G}_2$). The rightmost plot demonstrates that the GNN with $\sigma = 1$ significantly outperforms $\sigma = 2$.}
  \label{fig:kernel_impact}
\end{figure}

 By representing spatial relationships as graphs and leveraging message passing mechanisms, GNNs~\cite{gilmer2017neural,wu2019graph} have become the de facto backbone for many state-of-the-art spatiotemporal models~\cite{jin2023spatio}. Their success stems from their ability to integrate spatial structure with temporal dynamics, enabling effective learning of both local and global patterns in spatiotemporal data. However, a critical limitation exists in current GNN-based approaches: they typically rely on predefined kernel parameters to construct the adjacency matrix, which determines the strength of spatial connections. These fixed parameters, often based on simple distance metrics or domain heuristics. For instance, pioneering models such as DCRNN~\cite{li2018diffusion} and STGCN~\cite{yu2018spatio} construct the adjacency matrix using a Gaussian kernel based on spatial distances. Specifically, the edge weight between nodes $i$ and $j$ is computed as:
\(
    A_{ij} = \exp\left(-\frac{D_{ij}^2}{2\sigma^2}\right),
\)
where \(D_{ij}\) represents the spatial distance between nodes $i$ and $j$, and the \textbf{fixed} \(\sigma^2\) is the variance of all pairwise distances in the graph.  

While subsequent research has produced numerous improved models, recognizing the limitations of fixed kernel parameterization, the proposed solutions have largely focused on enhancing flexibility through data-driven approaches. 
These include {learning latent graph structures from scratch}~\cite{wu2019graph,bai2020adaptive,shang2021discrete}, employing {attention mechanisms} to dynamically compute edge weights~\cite{zheng2020gman}, or even {abandoning graph-based approaches entirely} to use MLP-based architectures~\cite{shao2022spatial}. 
However, although these generic adaptive mechanisms offer flexibility, they often fail to capture the underlying geometric structure and lack the necessary inductive bias, performing worse than distance-based models in data-sparse scenarios~\cite{battaglia2018relational}. Moreover, unconstrained attention mechanisms often succumb to \textit{structural myopia}; they are prone to capturing spurious empirical correlations—statistical artifacts that satisfy the objective function—without manifesting the true causal dependencies~\cite{jain-wallace-2019-attention}.

To the best of our knowledge, no study has systematically investigated whether the suboptimal performance of GNN-based models stems from \textit{inappropriate kernel parameter specifications} rather than inherent limitations of the distance-based approach itself (see Figure~\ref{fig:kernel_impact}). 
This question is particularly relevant given the rich history of kernel parameter estimation in geostatistics, where accurate parameter estimation is crucial for spatial prediction~\cite{cressie1988spatial,stein1988asymptotically,zimmerman1991comparison}.

In this paper, we made the first attempt to theoretically demonstrate that incorrectly specified kernel parameters introduce unavoidable approximation errors in GNN-based spatiotemporal modeling. To address this limitation, we propose \textbf{AdaKernel}, a straightforward end-to-end kernel parameter learning approach for GNNs. Our method is remarkably simple: given a distance-based kernel function $\kappa_{\boldsymbol{\alpha}}(D_{ij})$, we treat $\boldsymbol{\alpha}$ as a learnable parameter within the GNN framework. For non-inductive tasks (excluding Kriging), we can assign unique learnable parameters to each pair of nodes. This edge-specific parameterization enables the model to capture heterogeneous spatial relationships, acknowledging that spatial dependencies may vary significantly across different regions and geographical contexts. Through comprehensive experiments across Kriging, Imputation, and Forecasting tasks, we demonstrate that AdaKernel consistently enhances the performance of various GNN architectures.

The contribution of this paper can be summarized as the following:
\begin{itemize}
\item We theoretically analyze how misspecified kernel parameters lead to unavoidable approximation errors in GNN-based spatiotemporal modeling, providing explicit error bounds that scale with the magnitude of parameter misspecification.
\item We propose AdaKernel, a simple yet effective approach that transforms fixed kernel parameters into learnable components of the neural network.
\item We demonstrate that AdaKernel consistently enhances the performance of diverse GNN-based models, especially in imputation and kriging tasks, which are more sensitive to the accurate modeling of spatial dependencies.
\end{itemize} 

\section{Related Works}
\label{Related Works}

\subsection{Kernel Parameter Estimation in Spatial Modeling}

The importance of kernel parameter estimation in spatial modeling has been established across multiple disciplines. In geostatistics, Matheron~\cite{matheron1963principles} showed that variogram parameters fundamentally determine spatial prediction characteristics, while Zimmerman and Zimmerman~\cite{zimmerman1991comparison} demonstrated how parameter misspecification significantly impacts kriging performance. Cressie~\cite{cressie2015statistics,cressie1988spatial} formalized the relationship between parameter estimation and prediction accuracy. In parallel, the Gaussian Process literature has emphasized similar insights. Williams and Rasmussen~\cite{williams2006gaussian} showed how length-scale parameters influence spatial predictions, while MacKay et al.~\cite{mackay1998introduction} analyzed their impact on uncertainty quantification. Wilson and Adams~\cite{wilson2013gaussian} later demonstrated the benefits of adaptive kernel parameterization. Theoretical work by Stein~\cite{stein1988asymptotically} and Zhang~\cite{zhang2004inconsistent} unified these perspectives, establishing formal guarantees for parameter estimation in spatial statistics. However, unlike geostatistics or Gaussian Process literature where kernel parameter tuning is standard practice, deep GNN models rarely revisit kernel choice, instead relying on fixed Gaussian kernels without adaptation. AdaKernel closes this gap by enabling learnable kernel parameters within the GNN framework.

\subsection{GNN-based Approaches in Spatiotemporal Tasks}

\textbf{Kriging.}
 Several GNN methods have been proposed for ``inductive Kriging''. KCN~\cite{appleby2020kriging} employs graph convolutional networks to perform kriging by learning spatial correlations from observed data. IGNNK~\cite{wu2021inductive} constructs random subgraphs and reconstructs their adjacency matrices to learn spatial message-passing mechanisms. LSJSTN~\cite{hu2023decoupling} decouples the learning of short-term and long-term temporal patterns for improving kriging performance. Zhang et al.~\cite{zhang2022speckriging} introduced SpecKriging, which leverages spectral graph theory to perform kriging. SATCN~\cite{wu2021spatial} combines spatial aggregation with temporal convolutional networks to model spatiotemporal dependencies. Zheng et al.~\cite{zheng2023increase} developed INCREASE, an inductive spatiotemporal kriging method that aligns each node with its K-nearest observed nodes. Xu et al.~\cite{xu2023kits} proposed KITS, which addresses the graph gap issue in inductive kriging by introducing an increment training strategy.

\textbf{Imputation.}
In the context of spatiotemporal imputation, several GNN architectures have been developed. NET³~\cite{jing2021network} pioneered the use of tensor graph neural networks to model high-order relationships among time series. GRIN~\cite{cini2022filling} introduced a bidirectional message passing RNN with a spatial decoder, while SPIN~\cite{marisca2022learning} presented a sparse spatiotemporal graph neural network specifically designed for imputation tasks. More recently, PoGeVon~\cite{wang2023networked} proposed a position-aware graph neural network-based variational auto-encoder. While these methods effectively leverage spatial relationships for imputation, they typically treat the underlying spatial structure as fixed rather than learnable. Recent work by Deng et al.~\cite{deng2024learning} introduced the OPCR model, which enhances spatiotemporal imputation by using sparse attention and confidence-based refinement to improve accuracy in highly sparse data.

\textbf{Forecasting.}
In the context of time series processing, the evolution of STGNNs started with recurrent architectures~\cite{li2018diffusion}, followed by convolutional approaches~\cite{yu2018spatio,wu2019graph} and attention-based methods~\cite{zheng2020gman,wu2022traversenet}. To capture node-specific dynamics, different strategies have been proposed: Bai et al.~\cite{bai2020adaptive} introduced matrix factorization in recurrent STGNNs, while Chen et al.~\cite{chen2021graph} combined global GNN with local modeling inspired by~\cite{wang2019deepfactors}. Node embeddings have primarily served two purposes: optimizing structure learning~\cite{wu2019graph,shang2021discrete,deng2021graph} and providing positional encoding in attention mechanisms~\cite{zheng2020gman,satorras2022multivariate}. Recent advancements include spatiotemporal identification mechanisms~\cite{shao2022spatial} and cluster-based regularization~\cite{yin2022nodetrans} for model adaptation. 

All the aforementioned works overlook a crucial middle ground: the possibility of learning optimal kernel parameters while maintaining the physically meaningful distance-based structure. This gap is particularly noteworthy given the established importance of kernel parameter estimation in both geostatistics and Gaussian Process literature, as discussed earlier. 

\section{Preliminaries}
\label{Preliminaries}

Consider a set of sensors \( \{s_1, s_2, \dots, s_n\} \) deployed in a geographical space. The pairwise distance matrix \( \mathbf{D} \in \mathbb{R}^{n \times n} \) is defined as:
\begin{equation}
    D_{ij} = \| s_i - s_j \|_d, \quad \forall i, j \in \{1, 2, \dots, n\},
\end{equation}
where \( \| \cdot \|_d \) denotes a distance measure(e.g., Euclidean distance) between sensors \( s_i \) and \( s_j \).

At each sensor \( i \), we collected signals \( \mathbf{x}_i \in \mathbb{R}^T \) over time steps 1 to T. By aggregating signals from all sensors, we obtain the observation matrix \( \mathbf{X} = [\mathbf{x}_1, \mathbf{x}_2, \cdots, \mathbf{x}_n] \in \mathbb{R}^{n \times T} \). Generally, we utilize \( \mathbf{X} \) for the following three tasks:

\begin{enumerate}
\item \textbf{Kriging}: During time steps t-h to t, we aim to estimate values \(\mathbf{Y}^{(t-h:t)} \in \mathbb{R}^{m \times h}\) at $m$ unobserved locations based on the observations \(\mathbf{X}^{(t-h:t)} \in \mathbb{R}^{n \times h}\) from $n$ existing sensors.
\item \textbf{Forecasting}: Given historical observations up to time $t$, \(\mathbf{X}^{(t-h:t)} \in \mathbb{R}^{n \times h}\), we aim to predict future values \(\mathbf{X}^{(t+1:t+\tau)} \in \mathbb{R}^{n \times \tau}\) for the next $\tau$ time steps.
\item \textbf{Imputation}: Given a partially observed matrix \(\mathbf{X}_\Omega\) with missing values set, we aim to reconstruct the complete matrix by estimating values at the missing entries $\Omega$.
\end{enumerate}

These tasks can generally be addressed through a combination of GNN and temporal structures, where the mapping relationship can be expressed as:
\(
\text{STGNN}(\mathbf{X}, \mathbf{A}).
\)
Here $\mathbf{A}$ is a predefined adjacency matrix and STGNN denotes a combination of GNN and some temporal modules. According to Tobler's First Law of Geography~\cite{tobler1970computer}, \textit{"Everything is related to everything else, but near things are more related than distant things."} To formalize this, Gaussian kernel functions are frequently utilized to transform the distance matrix. Leveraging the Gaussian kernel, the adjacency matrix \( \mathbf{A} \in \mathbb{R}^{n \times n} \) is constructed as:
\begin{equation}
A_{ij} = \exp\left( -\frac{D_{ij}^2}{2\sigma^2} \right), \quad \forall i, j \in {1, 2, \dots, n},
\label{eq:adjacency}
\end{equation}
where \( \sigma > 0 \) is the bandwidth parameter that controls the influence of distance on the adjacency weight. {A substantial issue in existing research is that the Gaussian kernel parameter \( \sigma \) is often fixed or selected through heuristic methods without a thorough theoretical understanding of its impact on the STGNN's performance.}

\section{Theoretical Analysis}
\label{Theoretical Analysis}

In this section, we will theoretically establish that misspecifying the Gaussian kernel parameter used to construct the adjacency matrix introduces an intrinsic approximation error in GNNs. Specifically, we first prove that even for simple identity features and a single-layer GCN, \textbf{different kernel parameters lead to fundamentally different results} that cannot be aligned through weight learning alone. Furthermore, we derive an upper bound for the approximation error, showing that it grows proportionally with the difference in kernel parameters and accumulates over GCN depth. These results highlight the necessity of \textbf{learning kernel parameters} rather than fixing them a priori.

Given the input identity feature matrix \( \mathbf{I} \in \mathbb{R}^{n \times d} \), there exists an adjacency matrix \( \mathbf{A}_a \) constructed using a Gaussian kernel function with parameter \( \sigma_a \) such that a single-layer linear GCN can perfectly fit the target function \( f(\mathbf{I}) \). We assume that we have incorrectly set a Gaussian kernel function with $\sigma_b \neq \sigma_a$. In this case, we have the following theorem:

\begin{theorem}
\label{thm:unique_sigma}
Consider two distinct Gaussian kernel parameters \(\sigma_a\) and \(\sigma_b\), with \(\sigma_a \neq \sigma_b\). Let \(\mathbf{A}_a\) and \(\mathbf{A}_b\) be the corresponding adjacency matrices constructed using these parameters, and let \(\hat{\mathbf{A}}_a\) and \(\hat{\mathbf{A}}_b\) be their symmetrically normalized forms. Assume \(\mathbf{W}_a \in \mathbb{R}^{d \times k}\) is a fixed non-zero weight matrix. Then, there does not exist a non-zero weight matrix \(\mathbf{W}_b \in \mathbb{R}^{d \times k}\) such that:
\(
\hat{\mathbf{A}}_a \mathbf{W}_a = \hat{\mathbf{A}}_b \mathbf{W}_b.
\)
\end{theorem}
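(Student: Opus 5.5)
The statement cannot hold for a single fixed distance matrix. If $\hat{\mathbf{A}}_b$ is invertible, which is the generic case for a Gaussian-kernel adjacency with self-loops, then $\mathbf{W}_b=\hat{\mathbf{A}}_b^{-1}\hat{\mathbf{A}}_a\mathbf{W}_a$ is a non-zero solution. So the plan first fixes the quantifier that makes the theorem meaningful for kernel parameters. The weight matrix is a learned object that does not depend on the sensor geometry, whereas $\hat{\mathbf{A}}_a=\hat{\mathbf{A}}_a(\mathbf{D})$ and $\hat{\mathbf{A}}_b=\hat{\mathbf{A}}_b(\mathbf{D})$ vary with the distance matrix. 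I will therefore prove the following: there is no non-zero $\mathbf{W}_b$, independent of $\mathbf{D}$, such that $\hat{\mathbf{A}}_a(\mathbf{D})\mathbf{W}_a=\hat{\mathbf{A}}_b(\mathbf{D})\mathbf{W}_b$ for all admissible $\mathbf{D}$. The proof is by contradiction: assume such a $\mathbf{W}_b$ exists, then restrict to a one-parameter family of configurations on which everything can be computed in closed form.

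For the family I take two sensors at distance $t>0$, so $n=d=2$ with identity features. Write $r_\sigma(t)=\exp(-t^2/2\sigma^2)$. Then $\mathbf{A}_\sigma=\begin{pmatrix}1& r_\sigma\\ r_\sigma&1\end{pmatrix}$ has equal degrees $1+r_\sigma$, so $\hat{\mathbf{A}}_\sigma=\mathbf{A}_\sigma/(1+r_\sigma)$. Both matrices are diagonalized by the fixed orthonormal basis $u_+=(1,1)/\sqrt2$, $u_-=(1,-1)/\sqrt2$, with eigenvalues $1$ and $\lambda_\sigma(t)=(1-r_\sigma)/(1+r_\sigma)$. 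The latter is non-zero for $t>0$, so $\hat{\mathbf{A}}_b$ is invertible and the assumed identity forces
\[
\mathbf{W}_b=\hat{\mathbf{A}}_b^{-1}\hat{\mathbf{A}}_a\mathbf{W}_a=\Big(u_+u_+^\top+g(t)\,u_-u_-^\top\Big)\mathbf{W}_a,\qquad g(t)=\frac{\lambda_a(t)}{\lambda_b(t)} .
\]
Because $\mathbf{W}_b$ does not depend on $t$, projecting onto $u_-$ gives that $g(t)\,u_-^\top\mathbf{W}_a$ is constant in $t$.

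Now suppose $u_-^\top\mathbf{W}_a\neq 0$. Then $g$ must be constant. Compare its limits:
\begin{itemize}
\item As $t\to\infty$, $r_\sigma\to0$, so $g(t)\to1$.
\item As $t\to0$, $1-r_\sigma\sim t^2/2\sigma^2$, so $g(t)\to\sigma_b^2/\sigma_a^2\neq1$ because $\sigma_a\neq\sigma_b$.
\end{itemize}
This is the desired contradiction. The same asymptotic comparison can be phrased by monotonicity of $\lambda_a/\lambda_b$ in $t$, but the two limits are enough.

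The step I expect to be the main obstacle is the degenerate case $u_-^\top\mathbf{W}_a=0$. There every row of $\mathbf{W}_a$ is equal, the output lies in the common eigenvector $u_+$, and $\mathbf{W}_b=\mathbf{W}_a$ genuinely works on the two-node family. To exclude it I would move to a three-node family, for instance a path or an isosceles configuration. There the normalized adjacencies no longer share an eigenbasis independent of $t$, because the degrees differ. The argument then repeats: show that $\hat{\mathbf{A}}_b(t)^{-1}\hat{\mathbf{A}}_a(t)$ is non-constant in $t$ on every non-zero direction that $\mathbf{W}_a$ can occupy, again by comparing the small-$t$ expansion, where the ratio $\sigma_b^2/\sigma_a^2$ appears, with the large-$t$ limit, where both matrices tend to $\mathbf{I}$. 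If this residual case cannot be closed for all $\mathbf{W}_a$, I would state the theorem with the explicit hypothesis that $\mathbf{W}_a$ is not annihilated by every $\mathbf{D}$-dependent eigendirection. That hypothesis holds for all $\mathbf{W}_a$ outside a measure-zero set.
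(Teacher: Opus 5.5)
Your diagnosis of the statement is correct, and it can be sharpened. As typed, $\hat{\mathbf{A}}_a\mathbf{W}_a$ forces $d=n$, and there your objection is decisive. With Euclidean distances and distinct sensors, the Gaussian kernel matrix is strictly positive definite, and $\hat{\mathbf{A}}_\sigma$ is congruent to it. So $\mathbf{W}_b=\hat{\mathbf{A}}_b^{-1}\hat{\mathbf{A}}_a\mathbf{W}_a$ is a non-zero solution for \emph{every} configuration, not just generically. The paper's own two-node configuration, taken with $d=2$, is already a counterexample.

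The paper's proof escapes this through a different specialization from yours. It takes $n=2$, $d=1$ and one fixed distance $c$, so the weight is a single scalar. Matching outputs then forces the two normalized propagation patterns to be proportional. Comparing entries, a step the paper leaves implicit, gives $e^{-c^2/2\sigma_a^2}=e^{-c^2/2\sigma_b^2}$. That is only an existential claim about one graph in an under-parameterized regime ($d<n$), where a one-dimensional weight cannot undo a change of direction of $\hat{\mathbf{A}}\mathbf{X}$. It does not cover $d=n$.

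You keep full-rank weights and instead require $\mathbf{W}_b$ to be shared across geometries. This yields a true statement exactly where the literal one fails, and it fits the inductive (kriging) use of the kernel better. Your two-node computation is right: $\hat{\mathbf{A}}_b^{-1}\hat{\mathbf{A}}_a=u_+u_+^\top+g(t)u_-u_-^\top$, with $g\to1$ as $t\to\infty$ and $g\to\sigma_b^2/\sigma_a^2$ as $t\to0$.

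The genuine gap is the degenerate case $\mathbf{W}_a=\mathbf{1}v^\top$. Your plan of moving to a three-node family does not close it as written, because the shape $\mathbf{W}_a\in\mathbb{R}^{n\times k}$ fixes $n$. For $n=2$ your reformulated theorem is false for such $\mathbf{W}_a$, as you note yourself ($\mathbf{W}_b=\mathbf{W}_a$ works for all $t$), so there the extra hypothesis is unavoidable. For $n\ge3$ you must set everything up with that $n$ from the start.

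Also, the small-$t$ end is the wrong place to look. There $\mathbf{A}_\sigma$ tends to the rank-one all-ones matrix and $\hat{\mathbf{A}}_b^{-1}$ blows up, so $\sigma_b^2/\sigma_a^2$ does not appear automatically. Use large distances instead. Scaling any configuration to infinity gives $\hat{\mathbf{A}}_\sigma\to\mathbf{I}$, which forces $\mathbf{W}_b=\mathbf{W}_a$. It then suffices to find, for each non-zero $\mathbf{W}_a$, one configuration with $(\hat{\mathbf{A}}_a-\hat{\mathbf{A}}_b)\mathbf{W}_a\neq0$.

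\emph{Case 1: rows $i\neq j$ of $\mathbf{W}_a$ differ.} Put sensors $i,j$ at distance $t$ and all others far apart. In the limit the difference is $(\lambda_a-\lambda_b)$ times the projector onto $e_i-e_j$. This is non-zero because $(1-r)/(1+r)$ is strictly monotone in $r$.

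\emph{Case 2: all rows of $\mathbf{W}_a$ agree.} Put three sensors collinearly at $0,t,2t$, with the rest far away. The middle entry of $\hat{\mathbf{A}}_\sigma\mathbf{1}$ on that triple is
\[
h(r)=\frac{1}{1+2r}+\frac{2r}{\sqrt{(1+2r)(1+r+r^4)}}=1+r^2+O(r^3),\qquad r=e^{-t^2/2\sigma^2}.
\]
When $\sigma_a>\sigma_b$ we have $r_b/r_a\to0$, so $h(r_a)-h(r_b)\sim r_a^2\neq0$ for large $t$.
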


\begin{proof}
We prove the theorem by contradiction. We first set the number of nodes $n$ to 2, the feature dimension $d =1$, the distance as a constant $c$. Assume \(\mathbf{W}_a = [w]\) is a \(1 \times 1\) scalar matrix.
We seek \(\mathbf{W}_b = [w_b]\) such that:
\[
\hat{\mathbf{A}}_a \mathbf{W}_a = \hat{\mathbf{A}}_b \mathbf{W}_b.
\]

Assuming \( w \neq 0 \):
\[
e^{-\frac{c^2}{2\sigma_a^2}} = e^{-\frac{c^2}{2\sigma_b^2}}.
\]

Taking the natural logarithm of both sides:
\[
-\frac{c^2}{2\sigma_a^2} = -\frac{c^2}{2\sigma_b^2} \quad \Rightarrow \quad \frac{1}{\sigma_a^2} = \frac{1}{\sigma_b^2} \quad \Rightarrow \quad \sigma_a = \sigma_b.
\]

Therefore, the equality \( \hat{\mathbf{A}}_a \mathbf{W}_a = \hat{\mathbf{A}}_b \mathbf{W}_b \) always holds \textbf{if and only if} \( \sigma_a = \sigma_b \). 
\end{proof}

\begin{remark}
The proof highlights that the graph structure induced by the kernel width $\sigma$ dictates the information propagation path in a way that is invariant to feature projections. While $\mathbf{W}$ transforms the node features within a given subspace, $\sigma$ determines the subspace itself by defining the adjacency relationships. Therefore, distinct values of $\sigma$ produce representations that cannot be aligned via linear transformation, confirming that adaptive kernel designs provide a strictly richer hypothesis space than a fixed-kernel model.
\end{remark}



Further, we can analyze the upper bound of the approximation error caused by misspecified kernel parameter settings. Through derivation, we find that this upper bound is a function of the error of kernel parameter.

\begin{theorem}
\label{thm:error_bound}
Let $\mathbf{X} \in \mathbb{R}^{n \times d}$ be the input feature matrix, and let $\hat{\mathbf{A}}_a$ and $\hat{\mathbf{A}}_b$ be the normalized adjacency matrices defined by Gaussian kernel parameters $\sigma_a$ and $\sigma_b$, respectively. Given a weight matrix $\mathbf{W}_a \in \mathbb{R}^{d \times k}$, the minimum error
\[
\epsilon = \min_{\mathbf{W}_b} \left\| \hat{\mathbf{A}}_b \mathbf{X} \mathbf{W}_b - \hat{\mathbf{A}}_a \mathbf{X} \mathbf{W}_a \right\|_F
\]
satisfies
\[
\epsilon \leq C \left\| \mathbf{X} \right\|_F \left| \sigma_a - \sigma_b \right| \left\| \mathbf{W}_a \right\|_F,
\]
where $C$ is a positive constant that depends on the input matrix $\mathbf{X}$ and the parameter $\sigma_a$.
\end{theorem}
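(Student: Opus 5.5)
Since $\epsilon$ is a minimum over $\mathbf{W}_b$, any particular choice gives an upper bound. I would take the simplest one, $\mathbf{W}_b = \mathbf{W}_a$. This gives
\[
\epsilon \le \bigl\| (\hat{\mathbf{A}}_b - \hat{\mathbf{A}}_a)\mathbf{X}\mathbf{W}_a \bigr\|_F \le \bigl\| \hat{\mathbf{A}}_b - \hat{\mathbf{A}}_a \bigr\|_F \, \|\mathbf{X}\|_F \, \|\mathbf{W}_a\|_F ,
\]
using submultiplicativity of the Frobenius norm twice. The statement then reduces to a perturbation estimate $\|\hat{\mathbf{A}}_b - \hat{\mathbf{A}}_a\|_F \le C|\sigma_a - \sigma_b|$ for the normalized Gaussian adjacency, viewed as a function of the bandwidth.

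For that estimate I would apply the mean value theorem entrywise to $\sigma \mapsto \hat{\mathbf{A}}(\sigma) = \mathbf{D}(\sigma)^{-1/2}\mathbf{A}(\sigma)\mathbf{D}(\sigma)^{-1/2}$. The unnormalized entries satisfy
\[
\frac{\partial A_{ij}}{\partial \sigma} = \frac{D_{ij}^2}{\sigma^3} e^{-D_{ij}^2/(2\sigma^2)} .
\]
The function $t \mapsto t e^{-t/2}$ is bounded by $2/e$. Setting $t = D_{ij}^2/\sigma^2$ then gives $|\partial_\sigma A_{ij}| \le 2/(e\sigma)$, uniformly in the distances. This uniform bound is also where the diagonal helps: since $A_{ii} = 1$, every degree satisfies $d_i(\sigma) \ge 1$, and each $d_i$ is a finite sum, so its derivative is bounded by $2n/(e\sigma)$. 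Differentiating the product gives
\[
\partial_\sigma \hat{A}_{ij} = d_i^{-1/2} (\partial_\sigma A_{ij})\, d_j^{-1/2} - \tfrac12 \hat{A}_{ij}\bigl( d_i^{-1}\partial_\sigma d_i + d_j^{-1}\partial_\sigma d_j \bigr).
\]
With $d_i \ge 1$ and $0 \le \hat{A}_{ij} \le 1$, this yields $|\partial_\sigma \hat{A}_{ij}| \le c\, n/\sigma$ for an absolute constant $c$.

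The main obstacle is that this derivative bound blows up as $\sigma \to 0$. The mean value point lies between $\sigma_a$ and $\sigma_b$, so I need a lower bound on $\sigma$ over that interval. I would handle it by splitting into two cases.

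\textbf{Case 1: $|\sigma_b - \sigma_a| \le \sigma_a/2$.} Every intermediate $\sigma$ satisfies $\sigma \ge \sigma_a/2$. Integrating the derivative gives
\[
\|\hat{\mathbf{A}}_b - \hat{\mathbf{A}}_a\|_F \le n \cdot \frac{2cn}{\sigma_a}\,|\sigma_a - \sigma_b| .
\]

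\textbf{Case 2: $|\sigma_b - \sigma_a| > \sigma_a/2$.} Here I use that the entries of the normalized adjacency lie in $[0,1]$. This gives the crude bound $\|\hat{\mathbf{A}}_b - \hat{\mathbf{A}}_a\|_F \le 2n$, and $2n \le (4n/\sigma_a)\,|\sigma_a - \sigma_b|$.

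Combining the two cases gives $C = O(n^2/\sigma_a)$, which depends only on $\sigma_a$ and the size of $\mathbf{X}$, as the theorem allows. If a sharper, distance-dependent constant is wanted, I would keep $\max_{ij} D_{ij}$ inside the derivative bound rather than using the uniform $2/e$ estimate.
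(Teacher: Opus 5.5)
Your proof is correct and follows the same route as the paper's: take $\mathbf{W}_b=\mathbf{W}_a$, reduce to a norm of $\hat{\mathbf{A}}_b-\hat{\mathbf{A}}_a$, and control that norm with the mean value theorem in $\sigma$. Your execution is tighter than the paper's, because differentiating the normalized entries directly (using $A_{ii}=1$, so $d_i\ge 1$, together with $te^{-t/2}\le 2/e$) and splitting cases at $|\sigma_a-\sigma_b|=\sigma_a/2$ gives $C=O(n^2/\sigma_a)$, independent of $\sigma_b$ and of the distances, which is what the statement claims; the paper's constant instead depends explicitly on $\sigma_b$, $D_{\max}$ and the minimum degree, and its argument also loses a factor of $n$ when it bounds the operator norm by the entrywise maximum and omits a cross term in its two-term split of $\hat{\mathbf{A}}_b-\hat{\mathbf{A}}_a$.
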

Full proof is provided in the appendix.
\begin{remark}
   The upper bound of the error is directly related to the absolute difference between the kernel parameters $\sigma_a$ and $\sigma_b$. Therefore, it is essential to incorporate a mechanism that minimizes this discrepancy in order to reduce the approximation error. 
\end{remark}

We can further extend the single-layer adjacency-mismatch bound (Theorem~\ref{thm:error_bound}) to an $L$-layer GCN. Concretely, we have the following error bound on multiple-layer cases:

\begin{proposition}[Multi-Layer Adjacency Mismatch Bound]
\label{thm:MultiLayer}
Let\/ $\mathbf{X}\in\mathbb{R}^{n\times d}$ be the input feature matrix, and let\/ $\widehat{\mathbf{A}}_a,\widehat{\mathbf{A}}_b$ be normalized adjacency matrices coming from Gaussian kernels with parameters\/ $\sigma_a,\sigma_b$, respectively. Consider the $L$-layer GCNs defined above, with fixed weights $\mathbf{W}_a^{(\ell)}$. Then there exists a constant $C>0$ such that
\[
  \min_{\{\mathbf{W}_b^{(\ell)}\}}
  \bigl\|\mathbf{H}_b^{(L)} - \mathbf{H}_a^{(L)}\bigr\|_F
  ~\le~
  C\,\bigl\|\mathbf{X}\bigr\|_F\,
  \bigl|\sigma_a - \sigma_b\bigr|
  \,\prod_{\ell=1}^L \bigl\|\mathbf{W}_a^{(\ell)}\bigr\|_F.
\]
\end{proposition}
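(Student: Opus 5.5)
The approach is to reduce the minimum over all weight choices to one explicit choice, namely $\mathbf{W}_b^{(\ell)}=\mathbf{W}_a^{(\ell)}$ for every $\ell$. This is admissible because the left-hand side is a minimum, so any particular choice gives an upper bound. The task then becomes bounding the forward-propagation discrepancy between two GCNs that share weights and differ only in the normalized adjacency. I take the $L$-layer GCN to be $\mathbf{H}^{(\ell)}=\phi(\widehat{\mathbf{A}}\mathbf{H}^{(\ell-1)}\mathbf{W}^{(\ell)})$ with $\mathbf{H}^{(0)}=\mathbf{X}$, where $\phi$ is either the identity (the linear case, as in Theorem~\ref{thm:error_bound}) or a $1$-Lipschitz activation with $\phi(0)=0$, such as ReLU.

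Two ingredients come from the single-layer analysis behind Theorem~\ref{thm:error_bound}. The first is a perturbation bound $\|\widehat{\mathbf{A}}_b-\widehat{\mathbf{A}}_a\|_F\le C_0|\sigma_a-\sigma_b|$, which follows from the mean value theorem applied to $\sigma\mapsto\widehat{\mathbf{A}}(\sigma)$. This map is smooth, since the Gaussian entries are smooth in $\sigma$ and the degree normalization is smooth whenever degrees are bounded away from zero; degrees are at least $1$ because $A_{ii}=1$. The second is the uniform bound $\|\widehat{\mathbf{A}}\|_2\le 1$ for symmetrically normalized nonnegative kernels, which holds because the spectrum lies in $[-1,1]$. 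I would then set $\Delta^{(\ell)}=\mathbf{H}_b^{(\ell)}-\mathbf{H}_a^{(\ell)}$ and write
\[
\widehat{\mathbf{A}}_b\mathbf{H}_b^{(\ell-1)}-\widehat{\mathbf{A}}_a\mathbf{H}_a^{(\ell-1)}=(\widehat{\mathbf{A}}_b-\widehat{\mathbf{A}}_a)\mathbf{H}_b^{(\ell-1)}+\widehat{\mathbf{A}}_a\Delta^{(\ell-1)}.
\]
Using the Lipschitz property of $\phi$ and $\|\mathbf{M}\mathbf{N}\|_F\le\|\mathbf{M}\|_F\|\mathbf{N}\|_F$, this gives the recursion
\[
\|\Delta^{(\ell)}\|_F\le\bigl(C_0|\sigma_a-\sigma_b|\,\|\mathbf{H}_b^{(\ell-1)}\|_F+\|\Delta^{(\ell-1)}\|_F\bigr)\|\mathbf{W}_a^{(\ell)}\|_F.
\]

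Next I need a bound on the activations, $\|\mathbf{H}_b^{(\ell)}\|_F\le\|\mathbf{X}\|_F\prod_{j\le\ell}\|\mathbf{W}_a^{(j)}\|_F$, which follows by induction from $\|\widehat{\mathbf{A}}_b\|_2\le1$. Substituting it into the recursion and unrolling from $\Delta^{(0)}=0$ produces $L$ terms, each equal to $C_0|\sigma_a-\sigma_b|\,\|\mathbf{X}\|_F\prod_{\ell=1}^L\|\mathbf{W}_a^{(\ell)}\|_F$. The total is therefore bounded by $L\,C_0\,\|\mathbf{X}\|_F|\sigma_a-\sigma_b|\prod_{\ell=1}^L\|\mathbf{W}_a^{(\ell)}\|_F$, so the claim holds with $C=LC_0$. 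This also makes precise the remark that the error accumulates with depth.

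The main obstacle is the constant $C_0$. The derivative $\partial_\sigma\widehat{\mathbf{A}}$ has to be bounded uniformly along the segment between $\sigma_a$ and $\sigma_b$, and the kernel derivative $\frac{D_{ij}^2}{\sigma^3}e^{-D_{ij}^2/2\sigma^2}$ degenerates as $\sigma\to0$. My plan is to restrict to $\sigma_b$ in a compact interval $[\sigma_{\min},\sigma_{\max}]\subset(0,\infty)$ containing $\sigma_a$; equivalently, if $|\sigma_a-\sigma_b|$ is large, use the trivial bound $\|\Delta^{(L)}\|_F\le2\|\mathbf{X}\|_F\prod_{\ell}\|\mathbf{W}_a^{(\ell)}\|_F$ instead. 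On that interval I would bound $\sup_{x\ge0}x^2e^{-x^2/2\sigma^2}/\sigma^3$ by $2/(e\,\sigma_{\min})$, and differentiate $\mathbf{D}_{\deg}^{-1/2}\mathbf{A}\mathbf{D}_{\deg}^{-1/2}$ by the product rule, using degrees $\ge1$. This gives $C_0$ depending only on $n$, the distances, and the interval around $\sigma_a$, which matches the dependence allowed for the constant $C$ in Theorem~\ref{thm:error_bound}.
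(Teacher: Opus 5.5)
Your proposal is correct and follows essentially the same route as the paper's proof. Both set $\mathbf{W}_b^{(\ell)}=\mathbf{W}_a^{(\ell)}$, split the pre-activation difference into the adjacency-mismatch term $(\widehat{\mathbf{A}}_b-\widehat{\mathbf{A}}_a)\mathbf{H}_b^{(\ell-1)}$ and the propagated term $\widehat{\mathbf{A}}_a(\mathbf{H}_b^{(\ell-1)}-\mathbf{H}_a^{(\ell-1)})$, and iterate through the Lipschitz activation. Your version is more complete than the paper's sketch: you make its ``bounding norms of intermediate products'' step explicit via $\|\widehat{\mathbf{A}}\|_2\le 1$ and $\phi(0)=0$, giving $C=LC_0$ (or $L\,L_\phi^{L}C_0$ for an $L_\phi$-Lipschitz activation), and you bound $\partial_\sigma\widehat{\mathbf{A}}$ properly instead of relying on the paper's incorrect claim that the operator norm is dominated by the largest matrix entry.
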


The detailed proof is provided in the appendix.

\section{Methodology}
\label{Methodology}
To address the approximation errors caused by misspecified kernel parameters discussed earlier, we propose AdaKernel. Taking the Gaussian kernel as an example, we introduce an adaptive parameter $\alpha$ into the kernel function:
\begin{equation}
\begin{aligned}
    & A_{ij} = \begin{cases}
        \exp\left(-\frac{D_{ij}^2}{\text{scale}_{ij}^2}\right) & \text{if } \exp\left(-\frac{D_{ij}^2}{\text{scale}_{ij}^2}\right) > \theta \\
        0 & \text{otherwise} 
    \end{cases} \\
    & \text{scale}_{ij} = \text{ReLU}(\alpha_{ij}\sigma) + \epsilon 
\end{aligned}
\end{equation}
where $\alpha_{ij}$ is a learnable parameter initialized to 1 to maintain consistency with conventional approaches, and the threshold $\theta$ is initialized to 0.1 as a moderate sparsification level. The ReLU activation function and a small constant $\epsilon$ (e.g., $10^{-8}$) are introduced as numerical safeguards, preventing potential instabilities during training when $\alpha$ approaches zero. While we primarily focus on Gaussian kernels in our theoretical analysis, our framework generalizes to other kernel functions. We provide experimental results using Matérn and Rational Quadratic (RQ) kernels in our experimental section. The sensitivity of the initialization of the learnable parameter $\alpha$ and the threshold $\theta$ is discussed in the appendix.

We propose two distinct implementation strategies to accommodate different spatiotemporal tasks:

\paragraph{Edge-specific Parameterization.}
For tasks with given graph structures, such as forecasting and imputation, we implement $\alpha$ as a learnable parameter matrix:
\begin{equation}
    \boldsymbol{\alpha} \in \mathbb{R}^{n \times n}, \quad \boldsymbol{\alpha}_{init} = \mathbf{1}^{n \times n},
\end{equation}
where $n$ is the number of nodes and $\boldsymbol{\alpha}_{ij}$ represents an individual kernel parameter for each edge.

\paragraph{Global Parameter Learning.}
For kriging tasks that require generalization to unseen spatial locations, we adopt a global kernel scaling approach:
\begin{equation}
\alpha \in \mathbb{R}, \quad \alpha_{\text{init}} = 1,
\end{equation}
where a single scalar parameter $\alpha$ uniformly modulates all spatial dependencies.
This design choice is motivated by the inductive nature of the kriging task—since the test locations are not known during training, it is infeasible to assign or learn location-specific parameters for their pairwise interactions with training locations. A global parameter therefore offers a simple yet effective mechanism to adapt the overall sensitivity of the spatial kernel without overfitting to specific location pairs.

\vspace{0.5em}
\noindent\textbf{Computational Complexity.}  
Let \(n\) be the number of nodes. In both edge-specific and global variants, building the adjacency matrix still requires computing all pairwise distances, which is \(\mathcal{O}(n^2)\). Likewise, the subsequent neural-network forward pass on this (sparse or dense) adjacency incurs the same cost as in a non-adaptive kernel model. The only extra overhead lies in the number of learnable parameters:  
\begin{itemize}
    \item \textbf{Edge-specific:} adds \(\mathcal{O}(n^2)\) scalar parameters \(\{\alpha_{ij}\}\) (and their gradients), but does \textit{not} change the asymptotic cost of adjacency computation or the forward pass.
    \item \textbf{Global:} adds a single scalar parameter \(\alpha\), introducing negligible overhead.
\end{itemize}
Thus, adaptation via AdaKernel affects parameter count but preserves the original \(\mathcal{O}(n^2)\) complexity of graph construction and neural-network evaluation. The actual runtime and parameter overhead introduced by AdaKernel in different tasks is reported in the appendix. These results further validate that AdaKernel provides enhanced adaptability with minimal computational cost.

\section{Experiments}
\label{Experiments}

In this section, we evaluate the performance of our proposed AdaKernel across three spatiotemporal tasks, imputation, kriging, and forecasting, using eight real-world datasets with eight STGNN baselines. For each task, we conduct separate experiments to assess the impact of AdaKernel on the performance of baseline models.
\subsection{Experiment Setup}
\paragraph{Datasets}
To evaluate our proposed AdaKernel, we use eight widely adopted real-world spatiotemporal datasets. These datasets are commonly used in the literature and are described as follows:
(1) \textit{METR-LA}.
(2) \textit{PEMS-BAY}.
(3) \textit{AQI}.
(4) \textit{AQI-36}.
(5) \textit{PEMS03}.
(6) \textit{PEMS04}.
(7) \textit{PEMS07}.
(8) \textit{PEMS08}. 
Details of the dataset statistics, missingness configurations, and hyperparameter settings can be found in the appendix. 

\paragraph{Baselines}
To validate the effectiveness of our proposed method, we apply it to various spatiotemporal deep learning models. For imputation tasks, we consider: (1) \textbf{GRIN}~\cite{cini2022filling}, and (2) \textbf{MPGRU}~\cite{li2018diffusion,cini2022filling}. For kriging tasks, we experiment with: (3)\textbf{IGNNK}~\cite{wu2021inductive}, and (4) \textbf{KITS}~\cite{xu2023kits}. For forecasting tasks, we apply our method to: (5) \textbf{DCRNN}~\cite{li2018diffusion}, (6) \textbf{STGCN}~\cite{yu2018spatio}, (7) \textbf{GWNet}~\cite{wu2019graph}, and (8) \textbf{DGCRN}~\cite{li2023dynamic}. See the appendix for details of the baseline models.

In the imputation task, we adopt the GRIN framework~\cite{cini2022filling}\footnote{\url{https://github.com/Graph-Machine-Learning-Group/grin}} and evaluate on METR‑LA, PEMS‑BAY, AQI (including AQI‑36) and PEMS03/04/07/08 under standard Block and Point missingness protocols. For spatiotemporal kriging, we follow the inductive evaluation setting of prior work~\cite{wu2021inductive,xu2023kits,zheng2023increase}, randomly masking nodes with an input length of 16. For forecasting, we use the BasicTS toolkit\footnote{\url{https://github.com/GestaltCogTeam/BasicTS}} as our benchmark. 

Since our focus is on comparing the performance of the baseline models before and after incorporating AdaKernel, rather than achieving state-of-the-art results, we conduct experiments using nearly all baseline models with the parameters provided in the respective open-source frameworks or baseline papers. All experiments are conducted with five random seeds to ensure the robustness of the results. 

\subsection{Main Results}
\paragraph{Imputation Results.}
Table~\ref{tbl:imputation} reports MAE and RMSE for GRIN and MPGRU with and without AdaKernel under block missing, point missing, and simulated failure settings. AdaKernel yields consistent improvements—most notably in block missing and failure scenarios, where spatially correlated gaps make accurate modeling of spatial dependencies critical—while gains are smaller for isolated point missing. In Figure~\ref{fig:graph_analysis}, we show that AdaKernel adaptively prunes redundant edges (e.g., between sensors on different highway directions) and preserves only the most informative spatial connections, aligning with real traffic patterns. Additional visualizations are provided in the appendix.

\begin{table*}[t]
    \caption{Imputation performance (MAE, RMSE) of GRIN and MPGRU with/without AdaKernel across all missingness types.}
    \label{tbl:imputation}
    \small
    \centering
    \setlength{\tabcolsep}{0.16cm}  
    \vskip 0.05in
    \resizebox{\linewidth}{!}{
        \begin{tabular}{lccccc|cccc}
        \toprule
        \multicolumn{2}{c|}{Method} & \multicolumn{2}{c}{GRIN} & \multicolumn{2}{c|}{\textbf{+ AdaKernel}} & \multicolumn{2}{c}{MPGRU} & \multicolumn{2}{c}{\textbf{+ AdaKernel}}\\
        \midrule
        \multicolumn{2}{c|}{Metric} & MAE & RMSE & MAE & RMSE  & MAE & RMSE & MAE & RMSE\\
        \midrule
        \midrule

        \multirow{6}{*}{Block missing}&
        \multicolumn{1}{|l|}{METR-LA} 
        &2.044$\pm$0.047 &3.599$\pm$0.083
        &\textbf{1.849$\pm$0.013} &\textbf{3.274$\pm$0.032} 
        &2.307$\pm$0.100 &4.058$\pm$0.185
        &\textbf{2.172$\pm$0.055}  &\textbf{3.839$\pm$0.096}\\
        
        & \multicolumn{1}{|l|}{PEMS-BAY}
        &1.157$\pm$0.045 &2.577$\pm$0.124
        &\textbf{1.007$\pm$0.028} &\textbf{2.288$\pm$0.098}
        &1.367$\pm$0.073 &3.189$\pm$0.169
        &\textbf{1.353$\pm$0.065}  &\textbf{3.086$\pm$0.096}\\
        
        & \multicolumn{1}{|l|}{PEMS-03} 
        &10.850$\pm$0.299 &19.157$\pm$1.029
        &\textbf{10.142$\pm$0.354} &\textbf{18.856$\pm$1.341} 
        &13.540$\pm$0.229 &21.599$\pm$0.640
        &\textbf{12.525$\pm$0.279} &\textbf{20.631$\pm$0.944} \\
        
        & \multicolumn{1}{|l|}{PEMS-04} 
        &18.353$\pm$0.552 &30.640$\pm$0.849
        &\textbf{17.550$\pm$0.418} &\textbf{29.835$\pm$0.700} 
        &19.449$\pm$0.594 &31.842$\pm$0.811
        &\textbf{18.487$\pm$0.383} &\textbf{30.546$\pm$0.817} \\
        
        & \multicolumn{1}{|l|}{PEMS-07} 
        &15.276$\pm$0.442 &28.477$\pm$1.631
        &\textbf{14.185$\pm$0.507} &\textbf{27.088$\pm$0.731} 
        &19.394$\pm$0.131 &31.903$\pm$0.742
        &\textbf{18.600$\pm$0.018} &\textbf{31.142$\pm$0.535} \\
        
        & \multicolumn{1}{|l|}{PEMS-08} 
        &14.143$\pm$0.291 &22.717$\pm$0.621
        &\textbf{13.430$\pm$0.423} &\textbf{22.359$\pm$1.015} 
        &16.028$\pm$1.108 &25.598$\pm$1.848
        &\textbf{15.477$\pm$0.439} &\textbf{25.201$\pm$0.998} \\
        \midrule
        
        \multirow{2}{*}{Point missing}&
        \multicolumn{1}{|l|}{METR-LA}
        &1.898$\pm$0.031 &3.176$\pm$0.045
        &\textbf{1.761$\pm$0.011} &\textbf{2.972$\pm$0.021}
        &1.990$\pm$0.031 &3.297$\pm$0.048
        &\textbf{1.934$\pm$0.027}  &\textbf{3.214$\pm$0.043}\\
        
        & \multicolumn{1}{|l|}{PEMS-BAY}
        &0.663$\pm$0.010 &1.238$\pm$0.050
        &\textbf{0.615$\pm$0.004} &\textbf{1.145$\pm$0.014}
        &0.703$\pm$0.009 &1.338$\pm$0.037
        &\textbf{0.700$\pm$0.008}  &\textbf{1.327$\pm$0.037}\\
        \midrule
        
        \multirow{2}{*}{Simulated failures}& 
        \multicolumn{1}{|l|}{AQI}
        &16.344$\pm$1.220 &29.840$\pm$2.158
        &\textbf{13.573$\pm$0.343} &\textbf{25.057$\pm$0.468}
        &16.398$\pm$0.646 &29.786$\pm$0.927
        &\textbf{15.185$\pm$0.443}  &\textbf{27.484$\pm$0.633}\\
        
        & \multicolumn{1}{|l|}{AQI36} 
        &12.448$\pm$0.553 &22.831$\pm$1.211
        &\textbf{11.566$\pm$0.247} &\textbf{20.714$\pm$0.116}
        &11.543$\pm$0.195 &20.277$\pm$0.465
        &\textbf{11.313$\pm$0.267}  &\textbf{20.245$\pm$1.038}\\
        \bottomrule
        \end{tabular}
    }
\end{table*}

\begin{figure*}[t]
\centering

\begin{minipage}{0.24\linewidth}
\centering
\includegraphics[width=\linewidth]{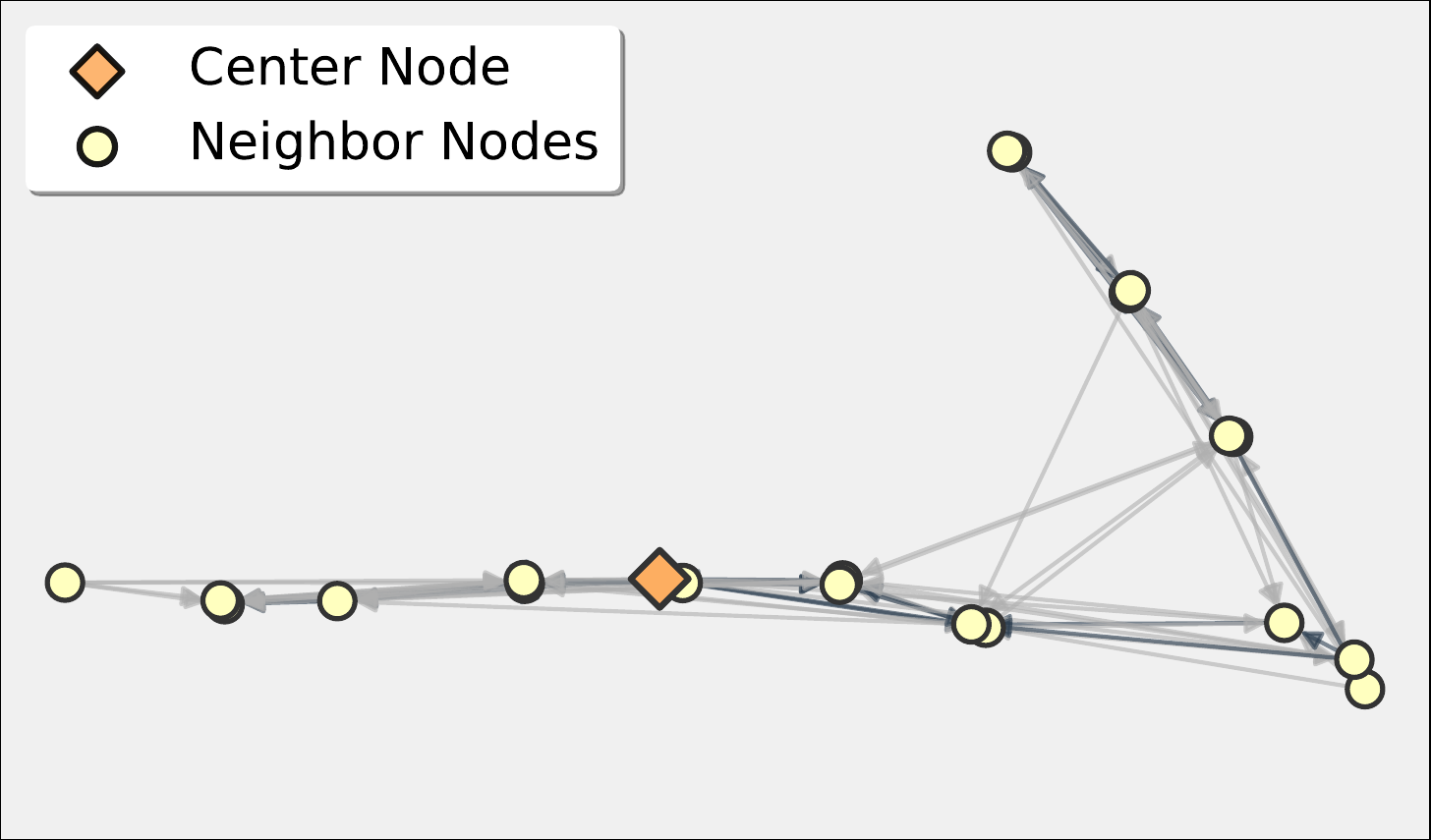}
\\[0.3em]
(a) Fixed graph structure
\end{minipage}
\hfill
\begin{minipage}{0.24\linewidth}
\centering
\includegraphics[width=\linewidth]{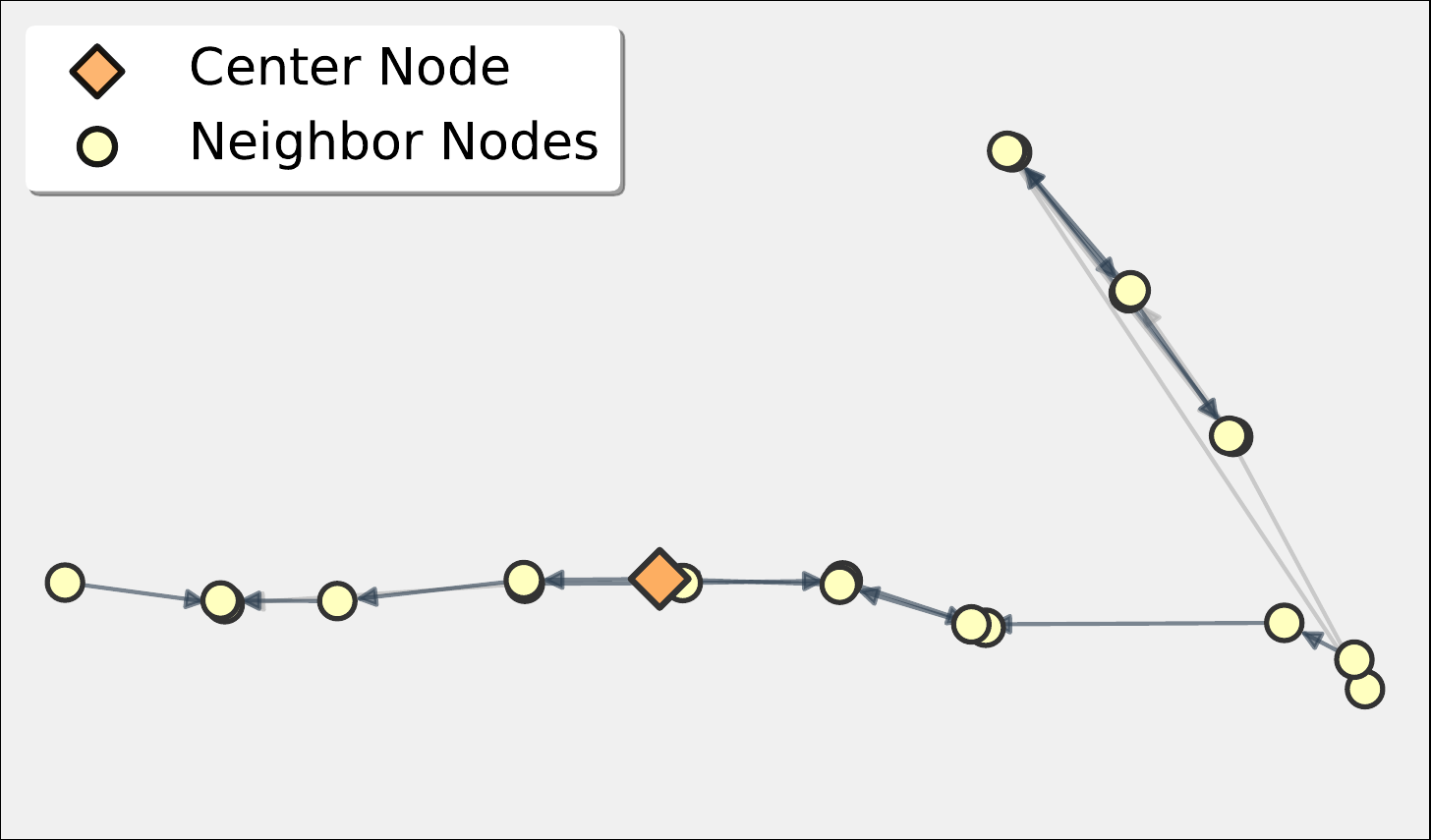}
\\[0.3em]
(b) Learned graph structure
\end{minipage}
\hfill
\begin{minipage}{0.475\linewidth}
\centering
\includegraphics[width=\linewidth]{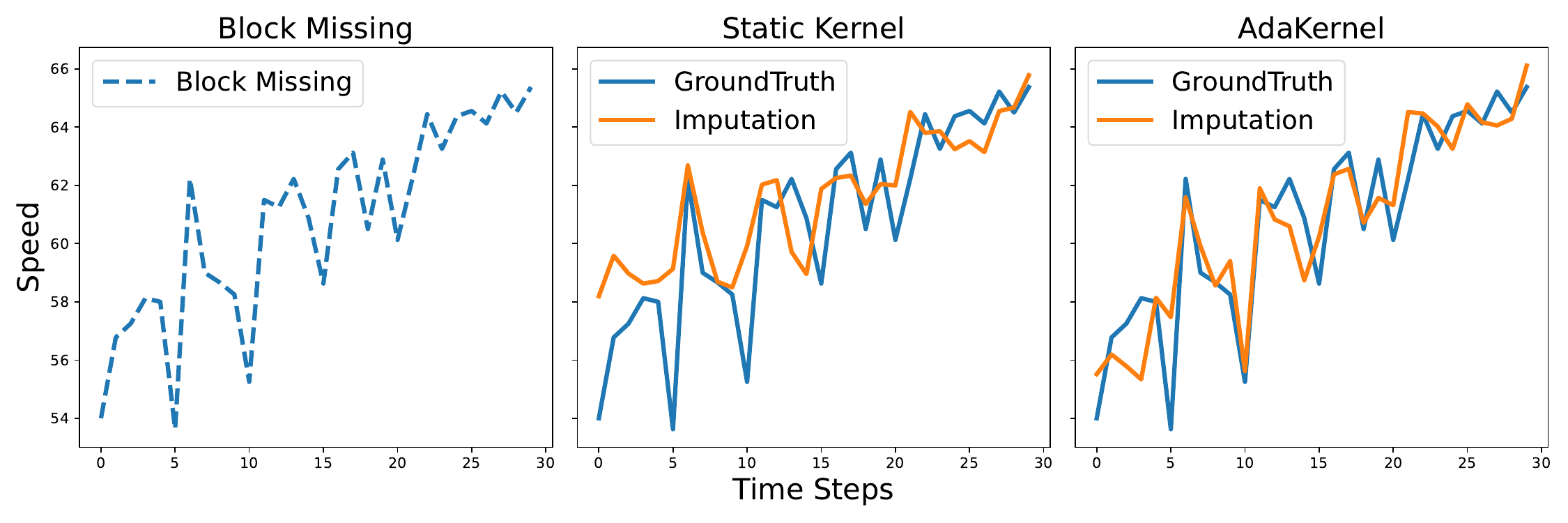}
\\[0.3em]
(c) Imputation results of GRIN
\end{minipage}

\caption{Visualization of graph structures and imputation performance.
(a) The original graph structure with static kernel parameter where the red node represents the imputation target.
(b) The learned adjacency matrix of AdaKernel.
(c) Block missing patterns (left), imputation results of GRIN using static kernel (middle) and AdaKernel (right).}
\label{fig:graph_analysis}

\end{figure*}

\paragraph{Kriging Results.}

Table~\ref{tbl:kriging} presents a comprehensive comparison of the performance of IGNNK and KITS models, with and without the proposed AdaKernel enhancement, across six benchmark datasets. The evaluation metrics are Mean Absolute Error (MAE) and Root Mean Square Error (RMSE). Across most datasets and both base models, the inclusion of AdaKernel generally reduces MAE and RMSE values, demonstrating its generalizability and effectiveness in enhancing existing kriging frameworks. It is worth highlighting that, due to the inductive nature of the kriging task, we introduce only a single global learnable kernel parameter in AdaKernel. Despite this seemingly minimal addition, it brings substantial improvements in performance. This is largely because the kriging task relies almost entirely on spatial dependencies, rather than temporal correlations. As a result, even a lightweight adaptive mechanism focused on spatial relationships can effectively capture the underlying data structure and enhance prediction accuracy. While KITS generally achieves stronger baseline results—particularly in terms of RMSE—owing to its incremental training strategy, the addition of AdaKernel significantly boosts the performance of both models. In several cases, IGNNK combined with AdaKernel even surpasses KITS without AdaKernel, narrowing the performance gap. Notably, although KITS represents the current state-of-the-art for kriging by leveraging an incremental training strategy, IGNNK adopts a decremental training strategy and is generally less competitive. However, with the integration of AdaKernel, IGNNK shows marked improvements, indicating that its previously weaker performance may not solely stem from the training strategy. Instead, it suggests that performance bottlenecks may also arise from misspecified or fixed kernel parameters, and that adaptively learning these parameters is critical for kriging tasks.

\begin{table}
    \caption{Performance comparison of IGNNK and KITS with/without AdaKernel.}
    \label{tbl:kriging}
    \centering
    \resizebox{\linewidth}{!}{
        \begin{tabular}{l|cc|cc|cc|cc}
        \toprule
        \multicolumn{1}{l|}{Method} & \multicolumn{2}{c}{IGNNK} & \multicolumn{2}{c|}{\textbf{+AdaKernel}} & \multicolumn{2}{c}{KITS} & \multicolumn{2}{c}{\textbf{+AdaKernel}} \\
        \midrule
        \multicolumn{1}{l|}{Metric} & MAE & RMSE & MAE & RMSE & MAE & RMSE & MAE & RMSE \\
        \midrule
        \midrule
        \multirow{2}{*}{METR-LA}
        & 6.236 & 9.105
        & \textbf{6.200} & \textbf{9.020}
        & 6.006 & 9.125
        & \textbf{5.910} & \textbf{8.990} \\
        & $\pm$ 0.213 & $\pm$ 0.400
        & $\pm$ 0.370 & $\pm$ 0.560
        & $\pm$ 0.253 & $\pm$ 0.310
        & $\pm$ 0.310 & $\pm$ 0.370 \\
        \midrule
        \multirow{2}{*}{PEMS-BAY}
        & 3.807 & 6.245
        & \textbf{3.770} & \textbf{6.220}
        & 3.893 & 6.349
        & \textbf{3.800} & 6.360 \\
        & $\pm$ 0.152 & $\pm$ 0.331
        & $\pm$ 0.120 & $\pm$ 0.340
        & $\pm$ 0.142 & $\pm$ 0.272
        & $\pm$ 0.140 & $\pm$ 0.250 \\
        \midrule
        \multirow{2}{*}{AQI}
        & 17.192 & 29.984
        & \textbf{16.720} & \textbf{29.200}
        & 17.608 & 30.639
        & \textbf{17.440} & \textbf{30.450} \\
        & $\pm$ 0.684 & $\pm$ 1.290
        & $\pm$ 0.790 & $\pm$ 1.450
        & $\pm$ 0.654 & $\pm$ 1.170
        & $\pm$ 0.690 & $\pm$ 1.220 \\
        \midrule
        \multirow{2}{*}{PEMS-03}
        & 75.364 & 101.289
        & \textbf{66.983} & \textbf{94.074}
        & 68.725 & 95.298
        & \textbf{67.846} & \textbf{93.394} \\
        & $\pm$ 3.412 & $\pm$ 5.821
        & $\pm$ 6.615 & $\pm$ 10.110
        & $\pm$ 1.813 & $\pm$ 2.231
        & $\pm$ 3.309 & $\pm$ 3.247 \\
        \midrule
        \multirow{2}{*}{PEMS-04}
        & 73.763 & 102.522
        & \textbf{70.188} & \textbf{97.497}
        & 74.384 & 103.458
        & \textbf{73.891} & \textbf{102.143} \\
        & $\pm$ 6.818 & $\pm$ 8.100
        & $\pm$ 1.856 & $\pm$ 2.674
        & $\pm$ 4.514 & $\pm$ 5.490
        & $\pm$ 5.432 & $\pm$ 5.865 \\
        \midrule
        \multirow{2}{*}{PEMS-08}
        & 98.320 & 129.451
        & \textbf{91.767} & \textbf{119.661}
        & 89.553 & 117.810
        & \textbf{88.835} & \textbf{116.865} \\
        & $\pm$ 6.993 & $\pm$ 5.604
        & $\pm$ 0.813 & $\pm$ 2.349
        & $\pm$ 9.759 & $\pm$ 11.266
        & $\pm$ 10.989 & $\pm$ 12.827 \\
        \bottomrule
        \end{tabular}
    }
\end{table}

In Figure~\ref{fig:kriging_vis}, we also observe that AdaKernel yields more accurate interpolations in complex traffic regions, and that the {learned} \( \sigma \) {converges below 1}, resulting in a {sparser graph structure} compared to the original fixed-kernel baseline. This behavior is consistent with the phenomenon observed in the imputation tasks.

\begin{figure*}[t]
    \centering
    \includegraphics[width=0.75\linewidth]{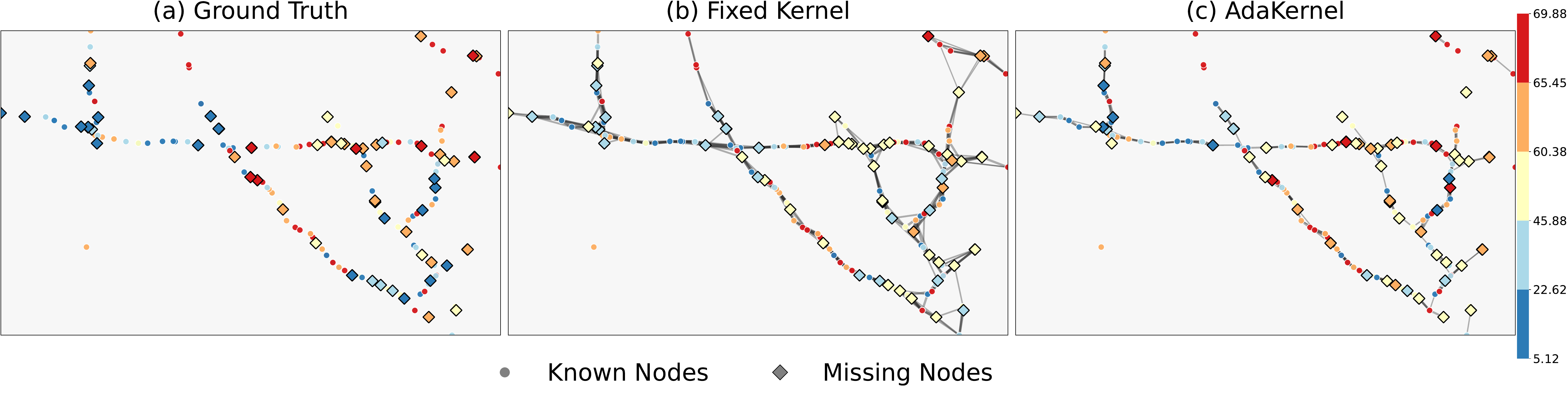}
    \caption{Visualization of kriging results on the METR-LA dataset with 30\% missing nodes. From left to right: ground truth, fixed-kernel results, and AdaKernel results. Nodes are colored by discretized traffic speed levels, with observed nodes shown as circles and missing nodes as diamonds. Edge width reflects the relative kernel strength.}
    \label{fig:kriging_vis}
\end{figure*}

\paragraph{Forecasting Results.}

As shown in Table~\ref{tbl:forecasting}, AdaKernel generally improves forecasting performance on models that rely on predefined graph structures, although its impact varies across models and datasets. This aligns with findings that temporal modeling often dominates forecasting performance~\cite{shao2024exploring}, while AdaKernel specifically enhances spatial dependency modeling via adaptive kernel‐based graphs. Consequently, models with adaptive graph learning modules (e.g., GWNet, DGCRN) see smaller gains, whereas DCRNN—which uses a fixed Gaussian kernel—benefits consistently from AdaKernel’s adaptivity. These results confirm that adaptive spatial representations can further improve spatiotemporal forecasting, particularly in architectures relying on predefined graph structures.

\begin{table*}[t]
    \caption{Forecasting performance (MAE and RMSE) of DCRNN, STGCN, GWNet, and DGCRN with and without AdaKernel on METR-LA, PEMS-BAY, PEMS03, and PEMS08 datasets. The best results on each dataset are underlined.}
    \label{tbl:forecasting}
    \small
    \centering
    \resizebox{0.9\linewidth}{!}{
        \tabcolsep=0.16cm
        \begin{tabular}{cc|cc|cc|cc|cc}
        \toprule
        \multicolumn{2}{c|}{Dataset} & \multicolumn{2}{c|}{METR-LA} & \multicolumn{2}{c|}{PEMS-BAY} & \multicolumn{2}{c|}{PEMS03} & \multicolumn{2}{c}{PEMS08}\\
        \midrule
        \multicolumn{2}{c|}{Metric} & MAE & RMSE
        & MAE & RMSE & MAE & RMSE & MAE & RMSE\\
        \midrule
        \midrule
        \multicolumn{2}{c|}{DCRNN} 
        &3.045$\pm$0.010 &6.279$\pm$0.026
        &1.591$\pm$0.002 &3.69$\pm$0.012
        &15.556$\pm$0.126 &26.783$\pm$0.362
        &17.094$\pm$0.382 &26.524$\pm$0.635 \\
        \multicolumn{2}{c|}{\textbf{+ AdaKernel}} 
        &\underline{\textbf{3.007$\pm$0.009}} &\underline{\textbf{6.218$\pm$0.022}}
        &\textbf{1.587$\pm$0.003} &\textbf{3.702$\pm$0.010}
        &\textbf{15.354$\pm$0.255} &\textbf{26.528$\pm$0.660}
        &\textbf{16.028$\pm$0.457} &\textbf{25.187$\pm$0.567} \\
        \midrule
        
        \multicolumn{2}{c|}{STGCN} 
        &3.171$\pm$0.015 &6.405$\pm$0.038
        &1.711$\pm$0.018 &3.829$\pm$0.049
        &16.062$\pm$0.206 &27.091$\pm$1.243
        &16.862$\pm$0.304 &26.160$\pm$0.349 \\
        \multicolumn{2}{c|}{\textbf{+ AdaKernel}} 
        &\textbf{3.166$\pm$0.027} &\textbf{6.393$\pm$0.069}
        &\textbf{1.695$\pm$0.028} &\textbf{3.803$\pm$0.051}
        &16.196$\pm$0.184 &28.500$\pm$1.464
        &\textbf{16.780$\pm$0.286} &\textbf{26.073$\pm$0.342} \\
        \midrule
        
        \multicolumn{2}{c|}{GWNet} 
        &3.078$\pm$0.017 &6.214$\pm$0.061
        &1.578$\pm$0.015 &3.610$\pm$0.026
        &14.559$\pm$0.093 &25.537$\pm$0.161
        &14.659$\pm$0.056 &23.559$\pm$0.142 \\
        \multicolumn{2}{c|}{\textbf{+ AdaKernel}} 
        &\textbf{3.049$\pm$0.002} &\textbf{6.158$\pm$0.035}
        &1.605$\pm$0.007 &3.689$\pm$0.033
        &14.619$\pm$0.129 &25.342$\pm$0.252
        &\underline{\textbf{14.599$\pm$0.082}} &\underline{\textbf{23.557$\pm$0.072}} \\
        \midrule
        
        \multicolumn{2}{c|}{DGCRN} 
        &3.087$\pm$0.051 &6.376$\pm$0.058
        &\underline{1.576$\pm$0.003} &\underline{3.652$\pm$0.037}
        &14.799$\pm$0.155 &25.821$\pm$0.458
        &15.138$\pm$0.031 &24.167$\pm$0.089 \\
        \multicolumn{2}{c|}{\textbf{+ AdaKernel}} 
        &\textbf{3.074$\pm$0.079} &\textbf{6.355$\pm$0.201} 
        &1.648$\pm$0.064 &3.894$\pm$0.224
        &\underline{\textbf{14.663$\pm$0.121}} &\underline{\textbf{25.740$\pm$0.185}}
        &15.275$\pm$0.236 &24.268$\pm$0.356 \\
        \bottomrule
        \end{tabular}
    }
\end{table*}

\subsection{Model Analysis and Discussion}
\paragraph{Comparison with Model-Agnostic Adaptive Graphs.}
\label{sec:baseline_comparison}
Recent spatiotemporal GNNs often incorporate adaptive graph learning or attention mechanisms to enhance modeling flexibility. A natural question is whether such generic, model-agnostic adaptations are already sufficient to mitigate the limitations of fixed distance-based kernels, making explicit kernel parameter learning unnecessary.

To investigate this, we compare AdaKernel with several representative model-agnostic graph adaptation strategies on the imputation task using the GRIN framework.
Specifically, we consider:
(i) GWNet-style adaptive adjacency, which learns a fully latent graph from node embeddings without distance priors~\cite{wu2019graph};
(ii) a static--adaptive fusion that linearly combines distance-based graphs with learned structures;
(iii) a self-attention-based dynamic adjacency constructed from feature correlations;
and (iv) a GAT backbone that introduces adaptivity within message passing via attention~\cite{velivckovic2018graph}.
Together, these baselines represent commonly used model-agnostic strategies for adaptive graph learning.
Details are provided in the appendix.

\begin{table}[t]
\centering
\caption{Comparison with model-agnostic adaptive graph baselines on the imputation task.}
\label{tbl:model_agnostic_baselines}
\small
\resizebox{\columnwidth}{!}{
    \begin{tabular}{lccc|ccc}
    \toprule
     & \multicolumn{3}{c|}{METR-LA} & \multicolumn{3}{c}{PEMS-BAY} \\
    \cmidrule(lr){2-4} \cmidrule(lr){5-7}
    Method & MAE & MSE & MRE & MAE & MSE & MRE \\
    \midrule
    GRIN (static)           & 2.064 & 13.148 & 0.036 & 1.191 & 6.953 & 0.019 \\
    Static--adaptive fusion & 2.283 & 17.267 & 0.039 & 1.196 & 7.066 & 0.019 \\
    GWNet adaptive adj      & 2.322 & 18.173 & 0.040 & 1.328 & 8.942 & 0.021 \\
    Self-attention adj      & 3.638 & 50.116 & 0.116 & 1.811 & 19.558 & 0.029 \\
    GAT backbone            & 2.453 & 19.899 & 0.043 & 1.919 & 17.760 & 0.031 \\
    \textbf{AdaKernel (ours)} 
    & \textbf{1.897} & \textbf{11.235} & \textbf{0.033}
    & \textbf{1.094} & \textbf{5.823}  & \textbf{0.018} \\
    \bottomrule
    \end{tabular}
}
\end{table}

Remarkably, Table \ref{tbl:model_agnostic_baselines} reveals that generic adaptive strategies (e.g., self-attention or latent graph learning) consistently underperform the static distance-based baseline. This indicates that physical proximity serves as a critical inductive bias for spatiotemporal imputation; discarding it for purely data-driven structures leads to overfitting on sparse signals. AdaKernel achieves the best performance because it adopts a \textit{structure-preserving} strategy: instead of learning a graph from scratch, it calibrates the effective kernel scale $\sigma$, thereby balancing geometric priors with data-dependent flexibility.

\paragraph{Impact of Kernel Functions.}

\begin{figure}[ht]
  \centering
  \includegraphics[width=\linewidth]{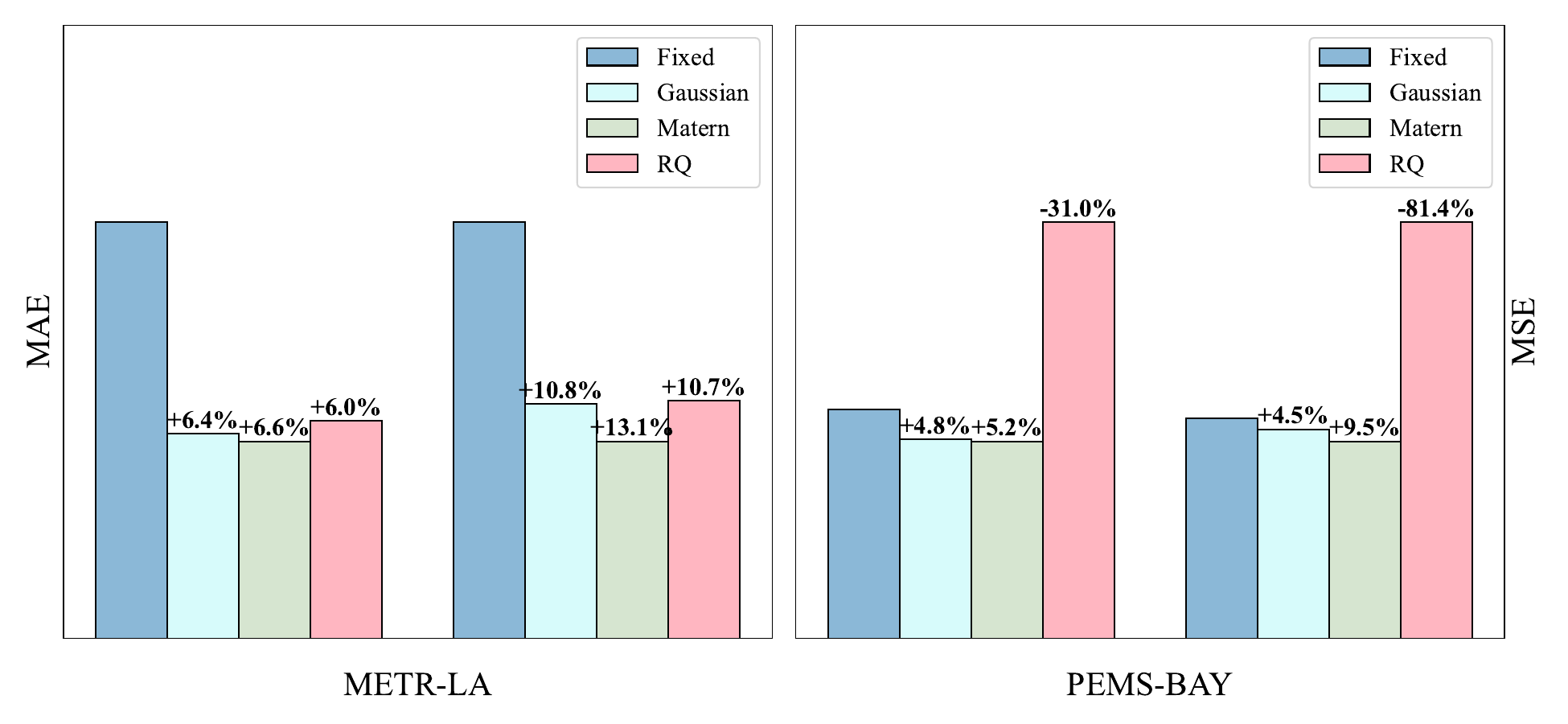}
  \caption{Performance comparison of different kernel functions.}
  \label{fig:diffkernel}
\end{figure}

Beyond the Gaussian kernel used in our main experiments, other kernel families have also been widely adopted to model spatial dependencies. To assess the robustness of the proposed adaptive kernel framework, we evaluate several representative kernel instantiations using GRIN as an example. Specifically, we compare four variants: a fixed Gaussian kernel as the baseline, and three adaptive kernels, including adaptive Gaussian, adaptive Matérn ($\nu=3/2$), and adaptive Rational Quadratic (RQ). All variants are implemented under the same adaptive learning framework described in Section~\ref{Methodology}, with detailed settings provided in the appendix.

The adaptive Gaussian kernel serves as the most basic instantiation, while the Matérn kernel relaxes the strong smoothness assumption of Gaussian kernels and allows more flexible spatial modeling. The RQ kernel can be viewed as a multi-scale generalization of Gaussian kernels, offering increased flexibility for heterogeneous spatial dependencies.

As shown in Figure~\ref{fig:diffkernel}, all adaptive kernels outperform the fixed Gaussian baseline on METR-LA, with adaptive Gaussian and adaptive Matérn achieving stable improvements (MAE reductions of 6.4\% and 6.6\%). Similar trends are observed on PEMS-BAY, where these two kernels yield gains of 4.8\% and 5.2\%. In contrast, the adaptive RQ kernel exhibits unstable behavior on PEMS-BAY, suggesting that excessive multi-scale flexibility may hinder robust learning in some settings.

\section{Conclusion}
\label{Conclusion}
In this paper, we pinpointed a core drawback of GNN‑based spatiotemporal models: fixed kernel parameters induce approximation errors proportional to their misspecification. We introduced AdaKernel, which replaces these fixed scales with learnable parameters. Across kriging, imputation, and forecasting benchmarks, AdaKernel consistently boosts performance—most notably in imputation—by learning edge‑specific kernels that produce sparser, more focused graphs. These findings highlight the benefits of learning adaptive kernel parameters and suggest new GNN enhancements by capturing spatial relationships that respect the underlying distance structure.

However, this work only briefly explores alternative kernel functions. In future work, we plan to incorporate a wider range of kernels to further enhance spatial modeling flexibility.



\clearpage
\section*{Ethical Statement}
This research does not involve human subjects, sensitive personal data, or interventions in real-world decision-making systems. All datasets used in this work are publicly available and comply with their original licenses. The proposed method is intended for general spatiotemporal modeling and does not pose foreseeable ethical or societal risks.

\bibliographystyle{named}
\bibliography{ijcai26}

\clearpage
\appendix
\input{appendix}

\end{document}

%% file: appendix.tex
\section{Proof for Theorem 2}
\label{app:single_bound}

\begin{proof}
Using the triangle inequality, decompose the error into two parts:

\begin{align*}
\epsilon 
&= \min_{\mathbf{W}_b} 
   \left\| \hat{\mathbf{A}}_b \mathbf{X} \mathbf{W}_b 
          - \hat{\mathbf{A}}_a \mathbf{X} \mathbf{W}_a \right\|_F \\
&\leq \min_{\mathbf{W}_b} 
      \left\| \hat{\mathbf{A}}_b \mathbf{X} (\mathbf{W}_b - \mathbf{W}_a) \right\|_F
      + \left\| (\hat{\mathbf{A}}_b - \hat{\mathbf{A}}_a) 
                 \mathbf{X} \mathbf{W}_a \right\|_F.
\end{align*}

For the first term:
\[
\min_{\mathbf{W}_b} \left\| \hat{\mathbf{A}}_b \mathbf{X} (\mathbf{W}_b - \mathbf{W}_a) \right\|_F,
\]
if $\mathbf{W}_b = \mathbf{W}_a$:
\[
\min_{\mathbf{W}_b} \left\| \hat{\mathbf{A}}_b \mathbf{X} (\mathbf{W}_b - \mathbf{W}_a) \right\|_F = 0.
\]

Thus, the error is dominated by the second term:
\[
\epsilon \leq \left\| (\hat{\mathbf{A}}_b - \hat{\mathbf{A}}_a) \mathbf{X} \mathbf{W}_a \right\|_F.
\]

The second term involves the difference \( \hat{\mathbf{A}}_b - \hat{\mathbf{A}}_a \):
\[
\left\| (\hat{\mathbf{A}}_b - \hat{\mathbf{A}}_a) \mathbf{X} \mathbf{W}_a \right\|_F \leq \left\| \hat{\mathbf{A}}_b - \hat{\mathbf{A}}_a \right\| \cdot \left\| \mathbf{X} \mathbf{W}_a \right\|_F,
\]
where \( \| \cdot \| \) is the spectral norm (operator norm). Using the property of Frobenius norms:
\[
\left\| \mathbf{X} \mathbf{W}_a \right\|_F \leq \left\| \mathbf{X} \right\|_F \cdot \left\| \mathbf{W}_a \right\|_F.
\]

Thus:
\[
\left\| (\hat{\mathbf{A}}_b - \hat{\mathbf{A}}_a) \mathbf{X} \mathbf{W}_a \right\|_F \leq \left\| \hat{\mathbf{A}}_b - \hat{\mathbf{A}}_a \right\| \cdot \left\| \mathbf{X} \right\|_F \cdot \left\| \mathbf{W}_a \right\|_F.
\]

The entries of the unnormalized adjacency matrices \( \mathbf{A}_a \) and \( \mathbf{A}_b \) are defined as:
\[
\mathbf{A}_{a,ij} = \exp\left( -\frac{D_{ij}^2}{2\sigma_a^2} \right), \quad \mathbf{A}_{b,ij} = \exp\left( -\frac{D_{ij}^2}{2\sigma_b^2} \right),
\]
where \( D_{ij} \) is the distance between nodes \( i \) and \( j \), and \( \sigma_a, \sigma_b \) are the Gaussian kernel parameters.

The difference between \( \mathbf{A}_a \) and \( \mathbf{A}_b \) is:
\[
\mathbf{A}_{a,ij} - \mathbf{A}_{b,ij} = \exp\left( -\frac{D_{ij}^2}{2\sigma_a^2} \right) - \exp\left( -\frac{D_{ij}^2}{2\sigma_b^2} \right).
\]

Using the mean value theorem for exponential functions, there exists some intermediate value \( \sigma' \) (depending on \( \sigma_a \) and \( \sigma_b \)) such that:
\begin{align*}
    &\exp\left( -\frac{D_{ij}^2}{2\sigma_a^2} \right) 
     - \exp\left( -\frac{D_{ij}^2}{2\sigma_b^2} \right) \\
    &= 
    \frac{\partial}{\partial \sigma} 
    \exp\left( -\frac{D_{ij}^2}{2\sigma^2} \right) 
    \bigg|_{\sigma = \sigma'} \, (\sigma_b - \sigma_a).
\end{align*}

The derivative is given by:
\[
\frac{\partial}{\partial \sigma} \exp\left( -\frac{D_{ij}^2}{2\sigma^2} \right) = \exp\left( -\frac{D_{ij}^2}{2\sigma^2} \right) \cdot \frac{D_{ij}^2}{\sigma^3}.
\]

Thus:
\[
\mathbf{A}_{a,ij} - \mathbf{A}_{b,ij} = \exp\left( -\frac{D_{ij}^2}{2\sigma'^2} \right) \cdot \frac{D_{ij}^2}{\sigma'^3} \cdot (\sigma_b - \sigma_a).
\]

Taking the operator norm \( \| \mathbf{A}_a - \mathbf{A}_b \| \), which is bounded by the element-wise maximum:
\[
\| \mathbf{A}_a - \mathbf{A}_b \| \leq \max_{i,j} \left| \mathbf{A}_{a,ij} - \mathbf{A}_{b,ij} \right|.
\]

Substituting the bound for \( \mathbf{A}_{a,ij} - \mathbf{A}_{b,ij} \):
\[
\| \mathbf{A}_a - \mathbf{A}_b \| \leq \max_{i,j} \left| \exp\left( -\frac{D_{ij}^2}{2\sigma'^2} \right) \cdot \frac{D_{ij}^2}{\sigma'^3} \cdot (\sigma_b - \sigma_a) \right|.
\]

Assuming the distances \( D_{ij} \) are bounded (i.e., \( D_{ij} \leq D_{\max} \)), and the intermediate \( \sigma' \) lies between \( \sigma_a \) and \( \sigma_b \), we can extract the dependence on \( |\sigma_b - \sigma_a| \):
\[
\| \mathbf{A}_a - \mathbf{A}_b \| \leq C |\sigma_b - \sigma_a|,
\]
where \( C \) depends on \( D_{\max} \), \( \sigma_a \), and \( \sigma_b \).

The normalized adjacency matrices are:
\[
\hat{\mathbf{A}}_a = \mathbf{D}_a^{-\frac{1}{2}} \mathbf{A}_a \mathbf{D}_a^{-\frac{1}{2}}, \quad \hat{\mathbf{A}}_b = \mathbf{D}_b^{-\frac{1}{2}} \mathbf{A}_b \mathbf{D}_b^{-\frac{1}{2}},
\]
where \( \mathbf{D}_a \) and \( \mathbf{D}_b \) are degree matrices:
\[
\mathbf{D}_a = \operatorname{diag}(\mathbf{A}_a \mathbf{1}), \quad \mathbf{D}_b = \operatorname{diag}(\mathbf{A}_b \mathbf{1}).
\]

Now, consider the difference:
\[
\hat{\mathbf{A}}_b - \hat{\mathbf{A}}_a = \mathbf{D}_b^{-\frac{1}{2}} \mathbf{A}_b \mathbf{D}_b^{-\frac{1}{2}} - \mathbf{D}_a^{-\frac{1}{2}} \mathbf{A}_a \mathbf{D}_a^{-\frac{1}{2}}.
\]

Using the triangle inequality:
\[
\| \hat{\mathbf{A}}_b - \hat{\mathbf{A}}_a \| \leq \| \mathbf{D}_b^{-\frac{1}{2}} (\mathbf{A}_b - \mathbf{A}_a) \mathbf{D}_b^{-\frac{1}{2}} \| + \| (\mathbf{D}_b^{-\frac{1}{2}} - \mathbf{D}_a^{-\frac{1}{2}}) \mathbf{A}_a \mathbf{D}_a^{-\frac{1}{2}} \|.
\]

   \[
   \| \mathbf{D}_b^{-\frac{1}{2}} (\mathbf{A}_b - \mathbf{A}_a) \mathbf{D}_b^{-\frac{1}{2}} \| \leq \| \mathbf{D}_b^{-\frac{1}{2}} \|^2 \| \mathbf{A}_b - \mathbf{A}_a \|.
   \]
   Since \( \| \mathbf{D}_b^{-\frac{1}{2}} \| \leq d_{\min}^{-\frac{1}{2}} \) (where \( d_{\min} \) is the smallest degree in \( \mathbf{D}_b \)), and \( \| \mathbf{A}_b - \mathbf{A}_a \| \leq C |\sigma_b - \sigma_a| \), we have:
   \[
   \| \mathbf{D}_b^{-\frac{1}{2}} (\mathbf{A}_b - \mathbf{A}_a) \mathbf{D}_b^{-\frac{1}{2}} \| \leq C' |\sigma_b - \sigma_a|,
   \]
   where \( C' \) depends on \( d_{\min} \) and the data.

   The second term \( \| (\mathbf{D}_b^{-\frac{1}{2}} - \mathbf{D}_a^{-\frac{1}{2}}) \mathbf{A}_a \mathbf{D}_a^{-\frac{1}{2}} \| \) can be similarly bounded by the fact that \( \mathbf{D}_b^{-\frac{1}{2}} - \mathbf{D}_a^{-\frac{1}{2}} \) is proportional to \( |\sigma_b - \sigma_a| \), giving:
   \[
   \| (\mathbf{D}_b^{-\frac{1}{2}} - \mathbf{D}_a^{-\frac{1}{2}}) \mathbf{A}_a \mathbf{D}_a^{-\frac{1}{2}} \| \leq C'' |\sigma_b - \sigma_a|.
   \]

Combining the two terms, we get:
\[
\| \hat{\mathbf{A}}_b - \hat{\mathbf{A}}_a \| \leq C |\sigma_b - \sigma_a|,
\]
where \( C \) is a constant that depends on the distances \( D_{ij} \), the degrees in \( \mathbf{D}_a \) and \( \mathbf{D}_b \), and the Gaussian kernel parameters.

The minimum error satisfies:
\[
\epsilon \leq C \left\| \mathbf{X} \right\|_F \left| \sigma_a - \sigma_b \right| \left\| \mathbf{W}_a \right\|_F,
\]
where \( C \) is a positive constant.
\end{proof}

\section{Extension to Multiple Layers}
\label{prop:multilayer}
We now extend the single-layer adjacency-mismatch bound (Theorem 2) to an $L$-layer GCN. Concretely, suppose we have two normalized adjacency matrices $\widehat{\mathbf{A}}_a$ and $\widehat{\mathbf{A}}_b$, derived from Gaussian kernels with parameters $\sigma_a$ and $\sigma_b$, respectively. Let
\[
  \mathbf{H}_a^{(0)} \;=\; \mathbf{X}, 
  \quad
  \mathbf{H}_b^{(0)} \;=\; \mathbf{X},
\]
and define the two $L$-layer GCNs:

\begin{align*}
\mathbf{H}_a^{(\ell)}
&= \sigma\!\bigl(
    \widehat{\mathbf{A}}_a\,
    \mathbf{H}_a^{(\ell-1)}\,
    \mathbf{W}_a^{(\ell)}
\bigr), \\
\mathbf{H}_b^{(\ell)}
&= \sigma\!\bigl(
    \widehat{\mathbf{A}}_b\,
    \mathbf{H}_b^{(\ell-1)}\,
    \mathbf{W}_b^{(\ell)}
\bigr), 
\qquad \ell = 1,\dots,L.
\end{align*}

where $\mathbf{W}_a^{(\ell)}$ and $\mathbf{W}_b^{(\ell)}$ are the layer-$\ell$ weight matrices, and $\sigma(\cdot)$ is a Lipschitz activation. Our goal is to bound
\[
  \varepsilon_L
  \;=\;
  \min_{\{\mathbf{W}_b^{(\ell)}\}}
    \bigl\|\mathbf{H}_b^{(L)} 
           \;-\;
           \mathbf{H}_a^{(L)}\bigr\|_F
\]
in terms of $\bigl\|\widehat{\mathbf{A}}_b - \widehat{\mathbf{A}}_a\bigr\|$ and ultimately in terms of $|\sigma_b - \sigma_a|$ (the difference in the Gaussian kernel parameters).

\begin{proof}
By Theorem 2 (the single-layer result), we already know that for \emph{one} layer,
\[
  \min_{\mathbf{W}_b}
  \bigl\|\widehat{\mathbf{A}}_b\,\mathbf{X}\,\mathbf{W}_b
         \;-\;
         \widehat{\mathbf{A}}_a\,\mathbf{X}\,\mathbf{W}_a
  \bigr\|_F
  \;\le\;
  C_1\,\|\mathbf{X}\|_F\,
        \bigl|\sigma_b - \sigma_a\bigr|\,
        \bigl\|\mathbf{W}_a\bigr\|_F,
\]
where $C_1>0$ depends on $\mathbf{X}$ and $\sigma_a$.

For the $L$-layer case, 
we compare $\mathbf{H}_b^{(\ell)}$ to $\mathbf{H}_a^{(\ell)}$ via a triangle inequality and Lipschitz argument:
\[
\begin{aligned}
  &\bigl\|\mathbf{H}_b^{(\ell)} 
           - 
           \mathbf{H}_a^{(\ell)}\bigr\|_F
  \\[4pt]
  &\quad=\,
  \Bigl\|\sigma\!\bigl(\widehat{\mathbf{A}}_b\,\mathbf{H}_b^{(\ell-1)}\,\mathbf{W}_b^{(\ell)}\bigr)
         \;-\;
         \sigma\!\bigl(\widehat{\mathbf{A}}_a\,\mathbf{H}_a^{(\ell-1)}\,\mathbf{W}_a^{(\ell)}\bigr)
  \Bigr\|_F
  \\[6pt]
  &\quad\le\,
  \Bigl\|\sigma\!\bigl(\widehat{\mathbf{A}}_b\,\mathbf{H}_b^{(\ell-1)}\,\mathbf{W}_b^{(\ell)}\bigr)
         -
         \sigma\!\bigl(\widehat{\mathbf{A}}_a\,\mathbf{H}_b^{(\ell-1)}\,\mathbf{W}_b^{(\ell)}\bigr)\Bigr\|_F
  \\[4pt]
  &\quad\:\
  +\,
  \Bigl\|\sigma\!\bigl(\widehat{\mathbf{A}}_a\,\mathbf{H}_b^{(\ell-1)}\,\mathbf{W}_b^{(\ell)}\bigr)
         -
         \sigma\!\bigl(\widehat{\mathbf{A}}_a\,\mathbf{H}_a^{(\ell-1)}\,\mathbf{W}_a^{(\ell)}\bigr)\Bigr\|_F
  \\[6pt]
  &\quad\le\,
  L_\sigma\,\bigl\|\widehat{\mathbf{A}}_b - \widehat{\mathbf{A}}_a\bigr\|\,
               \|\mathbf{H}_b^{(\ell-1)}\|\,
               \|\mathbf{W}_b^{(\ell)}\|
  \\[4pt]
  &\quad\:\
  \;+\;
  L_\sigma\,\|\widehat{\mathbf{A}}_a\|\,
  \bigl\|\mathbf{H}_b^{(\ell-1)}\,\mathbf{W}_b^{(\ell)} 
          -
          \mathbf{H}_a^{(\ell-1)}\,\mathbf{W}_a^{(\ell)}\bigr\|_F,
\end{aligned}
\]

where $L_\sigma$ is the Lipschitz constant of $\sigma(\cdot)$. We then expand

\begin{align*}
&\mathbf{H}_b^{(\ell-1)}\,\mathbf{W}_b^{(\ell)}
\;-\;
\mathbf{H}_a^{(\ell-1)}\,\mathbf{W}_a^{(\ell)}
\\
&\quad=
\mathbf{H}_b^{(\ell-1)}
\bigl(\mathbf{W}_b^{(\ell)}-\mathbf{W}_a^{(\ell)}\bigr) +
\bigl(\mathbf{H}_b^{(\ell-1)}-\mathbf{H}_a^{(\ell-1)}\bigr)
\mathbf{W}_a^{(\ell)} .
\end{align*}

so the error at layer $\ell$ depends on:
(i)~the adjacency difference $\widehat{\mathbf{A}}_b-\widehat{\mathbf{A}}_a$,
(ii)~the difference $\mathbf{W}_b^{(\ell)}-\mathbf{W}_a^{(\ell)}$, and
(iii)~the preceding error $\|\mathbf{H}_b^{(\ell-1)}-\mathbf{H}_a^{(\ell-1)}\|$.

In order to \emph{upper bound} 
\(\|\mathbf{H}_b^{(L)} - \mathbf{H}_a^{(L)}\|\),
we take $\mathbf{W}_b^{(\ell)} = \mathbf{W}_a^{(\ell)}$ at each layer. This nullifies all weight-difference terms. The only remaining discrepancy is from $\widehat{\mathbf{A}}_b - \widehat{\mathbf{A}}_a$. Iterating the one-layer difference through $\ell=1,\dots,L$ and bounding norms of intermediate products yields

\begin{align*}
\varepsilon_L
&=
\min_{\{\mathbf{W}_b^{(\ell)}\}}
\bigl\|\mathbf{H}_b^{(L)} - \mathbf{H}_a^{(L)}\bigr\|_F \\
&\le\;
C\,\|\mathbf{X}\|_F\,
\bigl|\sigma_b - \sigma_a\bigr|\,
\prod_{\ell=1}^L
\|\mathbf{W}_a^{(\ell)}\|_F .
\end{align*}

for some constant $C>0$ depending on $\mathbf{X}$, $\sigma_a$, and any Lipschitz factors. This completes the proof.

\end{proof}

The key step is that we \emph{choose} $\mathbf{W}_b^{(\ell)} = \mathbf{W}_a^{(\ell)}$ for all layers, so that weight differences vanish. Hence the only source of discrepancy is the difference in adjacency matrices $\widehat{\mathbf{A}}_b - \widehat{\mathbf{A}}_a$, which we can in turn bound by $\bigl|\sigma_b - \sigma_a\bigr|$ when these come from Gaussian kernels. In practice, one might further adjust $\mathbf{W}_b^{(\ell)}$ to reduce the error, often obtaining a bound of similar form but with additional terms that account for partial weight.

\section{Sensitivity Analysis}
\label{app:sensitivity}

We analyze the sensitivity of our kernel-based models to two key hyperparameters: the initialization of the learnable scaling parameter $\alpha_\text{init}$ and the sparsification threshold $\theta$. We conduct experiments on both the imputation task (using GRIN) and the Kriging task (using IGNNK) on the METR-LA and PEMS-03 datasets.

\noindent\textbf{Results and Observations.}
Table~\ref{tab:sensitivity_combined} presents the Mean Absolute Error (MAE) results for both tasks under different settings. We observe that performance is notably more sensitive to the choice of threshold $\theta$ than to the initialization value of $\alpha_\text{init}$. Specifically:

\begin{itemize}
    \item Increasing $\theta$ leads to a sharp degradation in performance, especially on PEMS-03, due to the removal of distant but informative edges. This impairs gradient flow and limits the model's ability to capture long-range dependencies.

    \item In contrast, variations $\alpha_\text{init}$ show relatively minor influence, suggesting that the model can effectively learn appropriate scale parameters during training.

    \item For Kriging with IGNNK, the best performance is consistently obtained when $\theta$=0, supporting our decision to avoid enforcing regularization on $\alpha$, which would suppress useful connections.
\end{itemize}


\begin{table}[t]
\centering
\small
\caption{Sensitivity of MAE to threshold $\theta$ and scaling parameter initialization $\alpha_{\text{init}}$ in GRIN (imputation) and IGNNK (Kriging).}
\label{tab:sensitivity_combined}

\setlength{\tabcolsep}{4pt}  
\resizebox{\linewidth}{!}{
\begin{tabular}{llcccc}
\toprule
\textbf{Task} & \textbf{Dataset} & $\theta{=}0$ & $\theta{=}0.1$ & $\theta{=}0.5$ & $\theta{=}0.8$ \\
\midrule
GRIN (Imputation) & METR-LA  & 1.87 & 1.87 & 1.98 & 2.09 \\
GRIN (Imputation) & PEMS-03 & 10.14 & 11.89 & 13.63 & 16.84 \\
IGNNK (Kriging)   & METR-LA  & 5.70 & 5.96 & 6.44 & 6.96 \\
IGNNK (Kriging)   & PEMS-03 & 65.56 & 65.59 & 86.23 & 91.62 \\
\midrule
\textbf{Task} & \textbf{Dataset} 
& $\alpha_{\text{init}}{=}0.1$ 
& $\alpha_{\text{init}}{=}0.5$ 
& $\alpha_{\text{init}}{=}1$ &  \\
\midrule
GRIN (Imputation) & METR-LA  & 1.84 & 1.85 & 1.84 &  \\
GRIN (Imputation) & PEMS-03 & 9.64 & 9.43 & 9.65 &  \\
IGNNK (Kriging)   & METR-LA  & 5.75 & 5.79 & 5.72 &  \\
IGNNK (Kriging)   & PEMS-03 & 65.58 & 65.70 & 70.80 &  \\
\bottomrule
\end{tabular}
}
\end{table}

\section{Computational Overhead of AdaKernel}
\label{app:adakernel_overhead}
To provide a more concrete understanding of the computational overhead introduced by AdaKernel, we report the per-epoch training time (before and after integration) and the number of additional learnable parameters for three representative spatiotemporal tasks: imputation, kriging, and forecasting. These tasks are implemented respectively using GRIN, IGNNK, and DCRNN on two benchmark datasets: METR-LA and PEMS-BAY.

As shown in Table~\ref{tab:adakernel_overhead}, the additional training time introduced by AdaKernel remains modest across all models and datasets. All reported training times are averaged over 10 epochs (after a brief warm-up phase) to reduce the effect of runtime variability and ensure fair comparison. The parameter growth varies depending on the kernel adaptation strategy: models using edge-specific scaling (e.g., GRIN, DCRNN) introduce more parameters than those with global scaling (e.g., IGNNK). These results validate that AdaKernel provides enhanced adaptability with minimal computational cost.

\begin{table}[t]
\centering
\small
\caption{Comparison of training time (per epoch) and parameter growth after introducing AdaKernel.}
\label{tab:adakernel_overhead}

\setlength{\tabcolsep}{4pt}  
\resizebox{\linewidth}{!}{
\begin{tabular}{lccc}
\toprule
\textbf{Model} & \textbf{Dataset} & \textbf{Time (before\;|\;after)} & \textbf{\#Params (growth)} \\
\midrule
\multirow{2}{*}{GRIN (Imputation)} 
& METR-LA   & 48s \;|\; 55s  & 43K  \\
& PEMS-BAY  & 54s \;|\; 61s  & 106K \\
\midrule
\multirow{2}{*}{IGNNK (Kriging)} 
& METR-LA   & 24s \;|\; 28s  & 1    \\
& PEMS-BAY  & 50s \;|\; 58s  & 1    \\
\midrule
\multirow{2}{*}{DCRNN (Forecasting)} 
& METR-LA   & 102s \;|\; 108s & 43K  \\
& PEMS-BAY  & 198s \;|\; 212s & 106K \\
\bottomrule
\end{tabular}
}
\end{table}

\section{Experiment Detail}
\label{app:exp_settings}

In this section, we provide a detailed description of the datasets used, as well as the experimental settings for each task.

\subsection{Dataset Description}
The specific details of the datasets used in our experiments are summarized in Table~\ref{tbl:dataset}. The datasets span multiple domains, including traffic speed, traffic flow, and air quality. These datasets were chosen to evaluate the performance of the proposed methods in different spatiotemporal tasks, such as imputation, kriging, and forecasting.

\begin{table}[t]
\centering
\footnotesize
\caption{Description of all datasets.}
\label{tbl:dataset}
\setlength{\tabcolsep}{2pt}    
\renewcommand{\arraystretch}{1.0}

\begin{tabular}{lccccc}
\toprule
Dataset & Nodes & Size & Freq. & Domain & Task \\
\midrule
METR-LA  & 207 & 34272$\times$207 & 5 min & Speed        & Imp., Krig., Fct. \\
PEMS-BAY & 325 & 52116$\times$325 & 5 min & Speed        & Imp., Krig., Fct. \\
AQI      & 437 & 8760$\times$437  & 1 h   & Air Quality  & Imp., Krig.       \\
AQI36    & 36  & 8579$\times$36   & 1 h   & Air Quality  & Imp.              \\
PEMS03   & 358 & 26208$\times$358 & 5 min & Flow         & Imp., Krig., Fct. \\
PEMS04   & 307 & 16992$\times$307 & 5 min & Flow         & Imp., Krig.\\
PEMS07   & 883 & 28224$\times$883 & 5 min & Flow         & Imp.              \\
PEMS08   & 170 & 17856$\times$170 & 5 min & Flow         & Imp., Krig., Fct. \\
\bottomrule
\end{tabular}
\end{table}

\subsection{Experimental Setup for Different Tasks}
\label{app:exp_2}

For the imputation and forecasting tasks, we use a standard data split of 70\% for training, 10\% for validation, and 20\% for testing. For the kriging task, we select 30\% of the nodes as the test set and the remaining nodes as the training set. In terms of time, we split the dataset into 70\% for training and 30\% for testing.

\vspace{0.5em}
\noindent\textbf{Frameworks and General Settings.}  
For the imputation task, we use the framework proposed by GRIN~\cite{cini2022filling}\footnote{\url{https://github.com/Graph-Machine-Learning-Group/grin}}. For forecasting, we rely on the open-source spatiotemporal prediction framework BasicTS\footnote{\url{https://github.com/GestaltCogTeam/BasicTS}}, which is a benchmark library designed specifically for time series forecasting. We follow the unified parameters provided by these frameworks for all our experiments.

We set the input window size to 24 for imputation and 16 for kriging. For forecasting, both the historical sequence length and prediction horizon are set to 12. These settings ensure consistency and allow fair comparisons across tasks and datasets.

\vspace{0.5em}
\noindent\textbf{Imputation Task Details.}  
We conduct experiments on several widely used datasets: METR-LA, PEMS-BAY, AQI, and PEMS03/04/07/08. For METR-LA and PEMS-BAY, we apply two missing data injection policies:
\begin{itemize}
    \item \textbf{Block Missing:} Randomly mask 5\% of the data and simulate temporal failures with a failure probability \(p_{failure} = 0.15\).
    \item \textbf{Point Missing:} Randomly drop 25\% of the data uniformly without spatial-temporal correlation.
\end{itemize}
For the AQI dataset, which contains 25.67\% missing values in the original version, we adopt the standard evaluation protocol used in prior work~\cite{yi2016st,cini2022filling,marisca2022learning,cao2018brits}, utilizing the provided missing mask to simulate real-world sensor fault patterns. We also evaluate on a reduced version (AQI-36) to test scalability. For PEMS03/04/07/08, we follow the Block Missing policy used in METR-LA and PEMS-BAY.

\vspace{0.5em}
\noindent\textbf{Kriging Task Details.}  
Following prior work~\cite{wu2021inductive,xu2023kits,zheng2023increase}, we simulate spatially missing nodes by randomly sampling a subset of nodes as the test set. We fix the input sequence length to 16 across all datasets.

\vspace{0.5em}
\noindent\textbf{Forecasting Task Details.}  
All forecasting experiments are conducted within the BasicTS framework. We use the same historical and horizon lengths (12 time steps) for all datasets to ensure comparability.

\section{Detailed Description of Baseline Models}
\label{app:baselines}
In this section, we provide detailed descriptions of the baseline models used in our experiments.

\subsection{Imputation Models}
\textbf{GRIN} (Graph Recurrent Imputation Network) is designed for multivariate time series imputation with graph structure. It employs a bidirectional graph RNN architecture consisting of a forward and backward temporal encoder. The spatial dependencies are captured through message passing mechanisms, where each node aggregates information from its neighbors. The model uses a spatial decoder that combines the encoded temporal information with graph structure to generate imputed values. The bidirectional nature allows it to utilize both past and future information for more accurate imputation.

\textbf{MPGRU} (Message Passing Gated Recurrent Unit) adapts the message passing mechanism for spatiotemporal modeling. It extends the traditional GRU by incorporating graph structure into its update mechanism. At each time step, before the GRU update, each node aggregates information from its neighbors through a message passing layer. This aggregated spatial information is then combined with the temporal features through the GRU gates, enabling the model to capture both spatial and temporal patterns simultaneously.

\subsection{Kriging Models}
\textbf{IGNNK} (Inductive Graph Neural Network Kriging) is specifically designed for spatial interpolation tasks. Its key innovation lies in its inductive architecture that can handle both observed and unobserved nodes during inference. The model processes temporal features of nodes and incorporates spatial information through multiple graph neural network layers. For each target node, it considers a local subgraph including both observed and unobserved neighbors, enabling more comprehensive spatial modeling.

\textbf{KITS} (Kriging with Increment Training Strategy) addresses the gap between training and inference in graph-based kriging. Its main contribution is the increment training strategy that introduces virtual nodes during training to simulate unobserved nodes in inference. This strategy helps the model learn to handle missing nodes more effectively. The model maintains a base graph neural network architecture while implementing this novel training approach.

\subsection{Forecasting Models}
\textbf{DCRNN} (Diffusion Convolutional Recurrent Neural Network) formulates the traffic forecasting problem using a diffusion process on graphs. It introduces diffusion convolution operations that capture spatial dependencies through a random walk on the graph. These operations are integrated with a GRU structure in an encoder-decoder framework. The encoder processes historical data using diffusion convolution GRU cells, while the decoder generates multi-step predictions.

\textbf{STGCN} (Spatial-Temporal Graph Convolutional Networks) decomposes the spatiotemporal modeling into distinct spatial and temporal components. The spatial component uses graph convolutions to capture dependencies between nodes, while the temporal component employs 1D convolutions to model temporal patterns. These components are stacked in alternating layers, allowing the model to capture complex spatiotemporal interactions efficiently.

\textbf{GWNet} (Graph WaveNet) enhances the modeling of spatial-temporal dependencies through several innovations. It introduces a self-adaptive adjacency matrix that can learn hidden spatial dependencies beyond the predefined graph structure. The model combines this adaptive graph learning with dilated casual convolutions for temporal modeling. The dilated convolution structure allows the model to capture long-range temporal dependencies efficiently while maintaining computational efficiency.

\textbf{DGCRN} (Dynamic Graph Convolutional Recurrent Network) focuses on capturing dynamic spatial relationships in traffic networks. It employs hyper-networks to generate dynamic filters based on node attributes at each time step. These dynamic filters are used to process node embeddings, which are then used to generate a dynamic graph structure. This dynamic graph is combined with a static predefined graph to capture both stable and time-varying spatial relationships. The model integrates this dynamic graph modeling with a recurrent structure for temporal dependency modeling.

\section{Evaluation Metrics}
\label{appendix:metrics}
In our experimental evaluation, we employ several standard metrics to assess model performance across different tasks:
\subsection{Mean Absolute Error (MAE)}
MAE measures the average magnitude of errors between predicted and actual values:
\begin{equation}
\text{MAE} = \frac{1}{n} \sum_{i=1}^{n}|y_i - \hat{y}_i|
\end{equation}
where $y_i$ is the actual value and $\hat{y}_i$ is the predicted value.
\subsection{Mean Squared Error (MSE) / Root Mean Square Error (RMSE)}
RMSE is the square root of MSE, providing an error measure in the same units as the original data:
\begin{equation}
\text{RMSE} = \sqrt{\frac{1}{n} \sum_{i=1}^{n}(y_i - \hat{y}_i)^2}
\end{equation}

\section{Details of Model-Agnostic Baseline Comparisons}
\label{app:model_agnostic_baselines}

This appendix provides additional details on the model-agnostic baselines used in Section 6.1 for comparison with AdaKernel.
All experiments are conducted on the \emph{imputation task} using the GRIN framework, where the original message-passing module is replaced or augmented with alternative graph adaptation strategies.
These baselines are selected to examine whether generic adaptive mechanisms can mitigate the limitations of fixed distance-based kernels.

\subsection{GWNet-Style Adaptive Adjacency}
\label{app:gwnet_adj}

Following Graph WaveNet, we construct a purely adaptive graph using learnable node embeddings
$E_1, E_2 \in \mathbb{R}^{N \times d}$, without incorporating any distance-based prior.
The adaptive adjacency matrix is computed as
\begin{equation}
A_{\mathrm{adp}} = \operatorname{softmax}\!\left(\operatorname{ReLU}\!\left(E_1 E_2^{\top}\right)\right).
\end{equation}
This baseline evaluates whether node-embedding-based dynamic graphs alone can match the performance of kernel-based graph construction.

\subsection{Static--Adaptive Graph Fusion}
\label{app:static_adaptive_fusion}

To examine whether combining fixed spatial priors with learned structures can alleviate kernel misspecification,
we adopt a static--adaptive fusion strategy.
Specifically, the final adjacency matrix is defined as
\begin{equation}
A = \alpha A_{\mathrm{static}} + (1 - \alpha) A_{\mathrm{adp}},
\end{equation}
where $\alpha \in (0,1)$ is a learnable mixing coefficient.
This formulation allows the model to balance distance-based inductive bias and adaptive graph learning within a unified framework.

\subsection{Self-Attention-Based Dynamic Adjacency}
\label{app:self_attention_adj}

We further consider a self-attention-based adjacency construction, where temporal node features $X_t$ are projected
into query and key representations to form feature-driven dynamic graphs:
\begin{equation}
A_t = \operatorname{softmax}\!\left(\frac{Q_t K_t^{\top}}{\sqrt{d_k}}\right),
\qquad
A = \frac{1}{T} \sum_{t} A_t .
\end{equation}
This baseline allows time-varying feature correlations to drive edge adaptation and represents a commonly used
attention-based alternative to distance-based graphs.

\subsection{GAT Backbone}
\label{app:gat_backbone}

As a strong attention-based baseline, we replace the original GNN layer in GRIN with a Graph Attention Network (GAT).
In this setting, attention coefficients are computed between nodes and used to perform weighted aggregation of neighbor features.
Unlike the previous baselines, this approach integrates adaptivity directly into the message-passing operation rather than
modifying the adjacency matrix externally.

\subsection{Discussion}
\label{app:baseline_discussion}

Across all baselines, adaptivity is introduced through node embeddings, attention mechanisms, or their combinations
with static graphs.
However, these approaches do not explicitly address the \emph{kernel parameterization stage} of graph construction.
In contrast, AdaKernel introduces adaptivity at the level of kernel parameters while preserving distance-based
structural priors, enabling more targeted correction of kernel-induced structural bias.

\section{Kernel Functions}
\label{appendix:kernels}
In addition to the Gaussian kernel used in our main experiments, we explored two other kernel functions:
\subsection{Matérn Kernel}

We employ the Matérn \( \nu = 3/2 \) kernel, which provides a balance between smoothness and flexibility, allowing for the modeling of rougher spatial dependencies compared to the Gaussian kernel. The adjacency matrix \( \mathbf{A} \in \mathbb{R}^{n \times n} \) is defined as:

\begin{align}
A_{ij}
&= \sigma^2 \frac{2^{1-\nu}}{\Gamma(\nu)}
\left(\sqrt{2\nu}\,\frac{D_{ij}}{l}\right)^\nu
K_\nu\!\left(\sqrt{2\nu}\,\frac{D_{ij}}{l}\right),
\label{eq:matern}
\end{align}
\[
\forall\, i,j \in \{1,2,\dots,n\}.
\]

where \( D_{ij} \) denotes the distance between nodes \( i \) and \( j \), \( \nu > 0 \) is the smoothness parameter controlling the differentiability of the kernel, \( l > 0 \) represents the length-scale parameter, \( \Gamma(\nu) \) is the gamma function, and \( K_\nu \) is the modified Bessel function. The Matérn kernel generalizes various kernel functions, including the exponential kernel (\( \nu = 1/2 \)) and the Gaussian kernel (\( \nu \to \infty \)).

To enhance adaptability, we incorporate a learnable length scale \( l = \alpha \times \text{std} \), where \( \alpha \) is a trainable parameter and \( \text{std} \) is the standard deviation of the data. This enables dynamic adjustment during training. The initial value of \( \sigma \) is set to 1.

\subsection{Rational Quadratic (RQ) Kernel}

Similarly, we construct the adjacency matrix \( \mathbf{A} \in \mathbb{R}^{n \times n} \) for the Rational Quadratic (RQ) kernel as:
\begin{equation}
A_{ij} = \sigma^2 \left(1 + \frac{D_{ij}^2}{2\alpha l^2}\right)^{-\alpha}, \quad \forall i, j \in \{1, 2, \dots, n\}
\end{equation}
where \( \alpha > 0 \) is the scale mixture parameter that governs the relative weighting of different length scales, and \( l > 0 \) is the length-scale parameter. The RQ kernel can be interpreted as an infinite mixture of Gaussian kernels with varying length scales, making it well-suited for capturing spatial dependencies exhibiting multiple characteristic scales.

Consistent with the Matérn kernel, we employ a learnable length scale \( l = \alpha \times \text{std} \), where \( \alpha \) is a trainable parameter and \( \text{std} \) represents the data's standard deviation. This approach ensures adaptability during training. The initial value of \( \alpha \) is set to 1.

\section{Node Degree Comparison Before and After AdaKernel}
\label{app:deg}
To better understand how AdaKernel modifies graph structures, we compare the node degrees of the final adjacency matrices with and without AdaKernel in the DCRNN model. The results for the METR-LA and PEMS-BAY datasets are shown in Figure~\ref{fig:degree_compare_dcrnn}.

\begin{figure}[t]
\centering
\begin{minipage}{\linewidth}
    \centering
    \includegraphics[width=\linewidth]{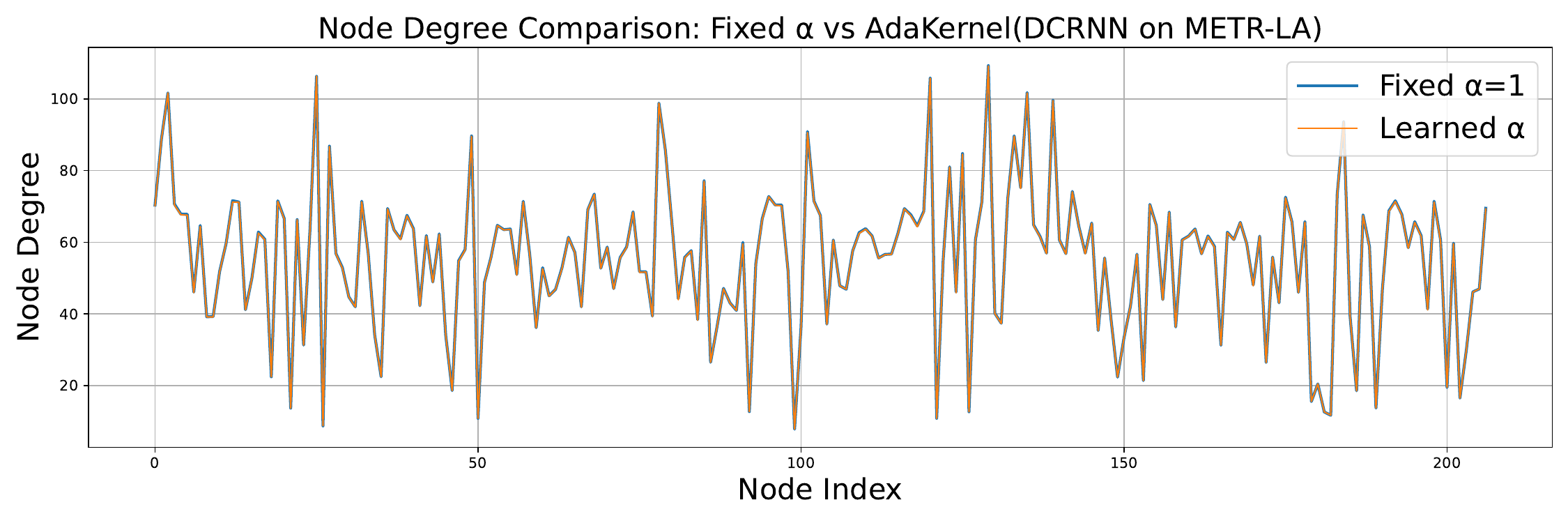}
    \caption*{(a) METR-LA}
\end{minipage}
\begin{minipage}{\linewidth}
    \centering
    \includegraphics[width=\linewidth]{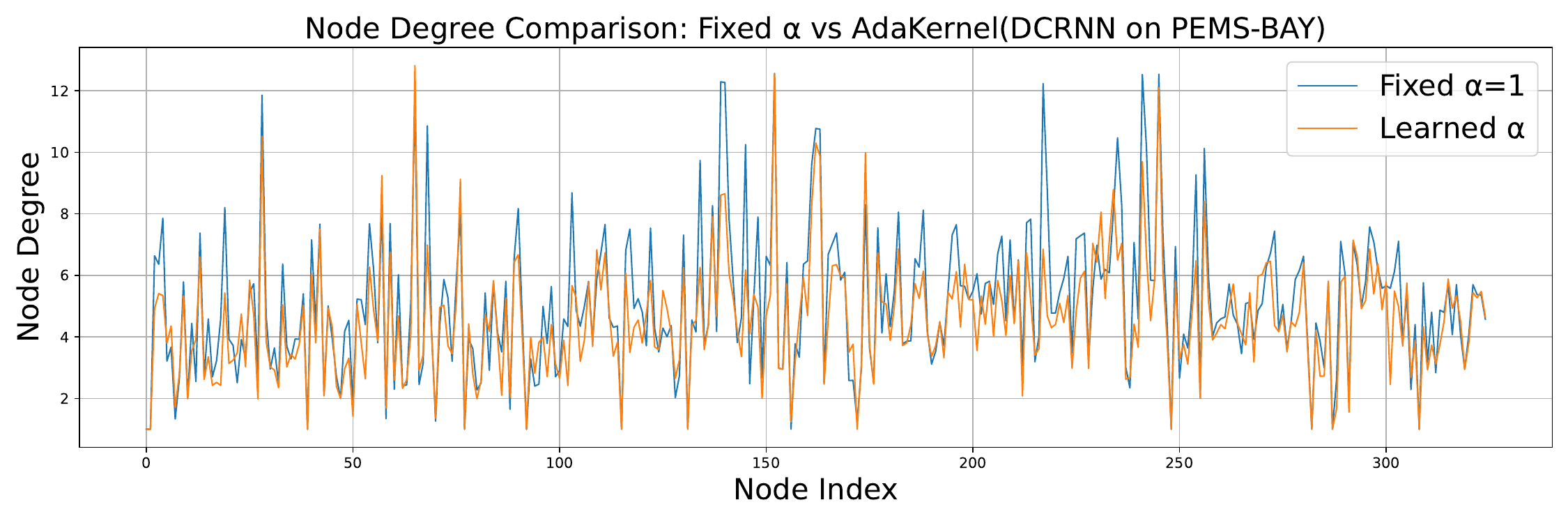}
    \caption*{(b) PEMS-BAY}
\end{minipage}
\caption{Node degree comparison of adjacency matrices with fixed $\alpha=1$ (blue) and AdaKernel (orange) in DCRNN.}
\label{fig:degree_compare_dcrnn}
\end{figure}

On the METR-LA dataset, the two curves appear to almost overlap, indicating that the fixed $\alpha=1$ already provides a near-optimal degree distribution. However, AdaKernel still brings improvement by introducing edge-specific parameterization: instead of applying global changes, it selectively adjusts a small subset of informative edges. These fine-grained modifications may seem negligible at the global graph level but are sufficient to enhance the structure and ultimately improve model performance.

In contrast, on the PEMS-BAY dataset, some nodes show increased degrees while others decrease. Overall, the original graph with fixed $\alpha$ (blue) generally exhibits higher degrees. This indicates that AdaKernel performs **selective sparsification and enhancement**, refining graph connectivity in a data-driven manner, particularly to capture long-range dependencies critical to prediction accuracy.

\section{More Visualizations}
\label{app:more_vis}

\begin{figure}[htb!]
\centering
\includegraphics[width=\linewidth]{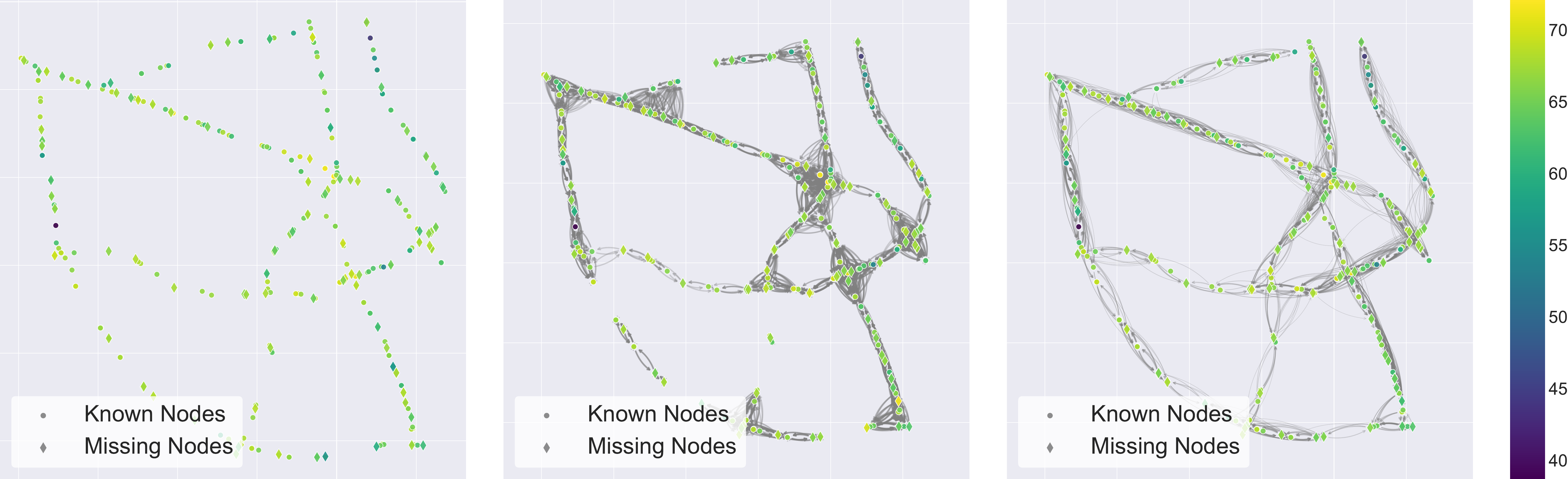}
\caption{Visualization of kriging results on PeMS-Bay dataset with 30\% missing nodes. Right $\to$ left: Ground Truth, results using a fixed kernel parameter and results with AdaKernel (IGNNK).}
\label{app:kriging_vis}
\end{figure}

\begin{figure}[htb!]
\centering
\includegraphics[width=\linewidth]{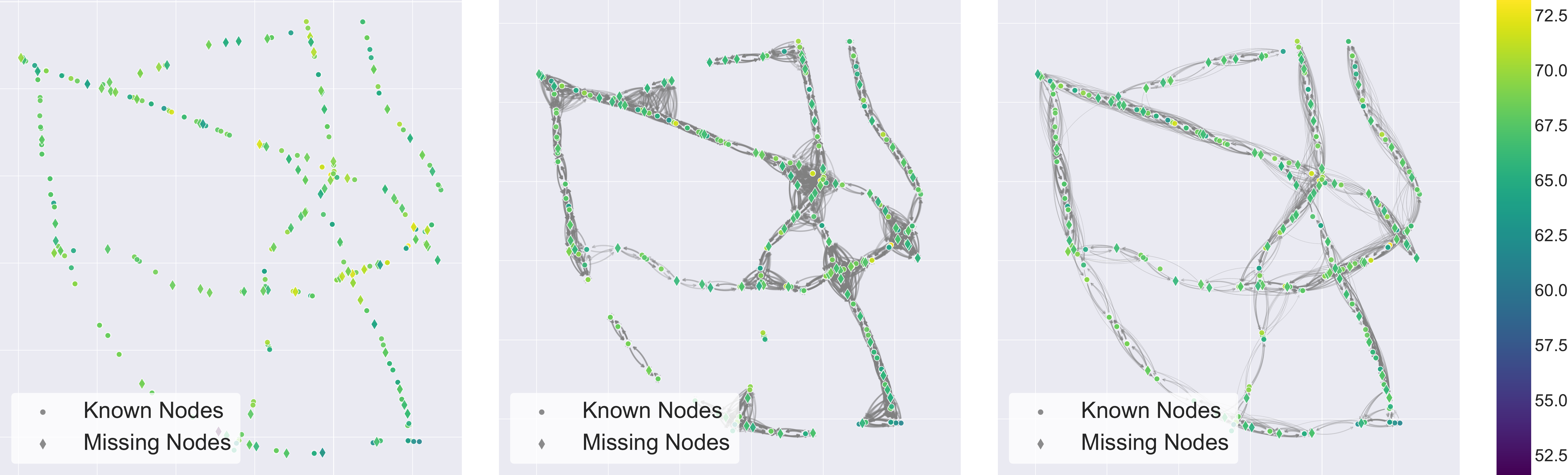}
\caption{Visualization of kriging results on PeMS-Bay dataset with 30\% missing nodes. Right $\to$ left: Ground Truth, results using a fixed kernel parameter and results with AdaKernel (KITS).}
\label{app:kriging_vis}
\end{figure}

\begin{figure*}[htb!]
\centering
\begin{minipage}{0.23\textwidth}
    \centering
    \includegraphics[width=\linewidth]{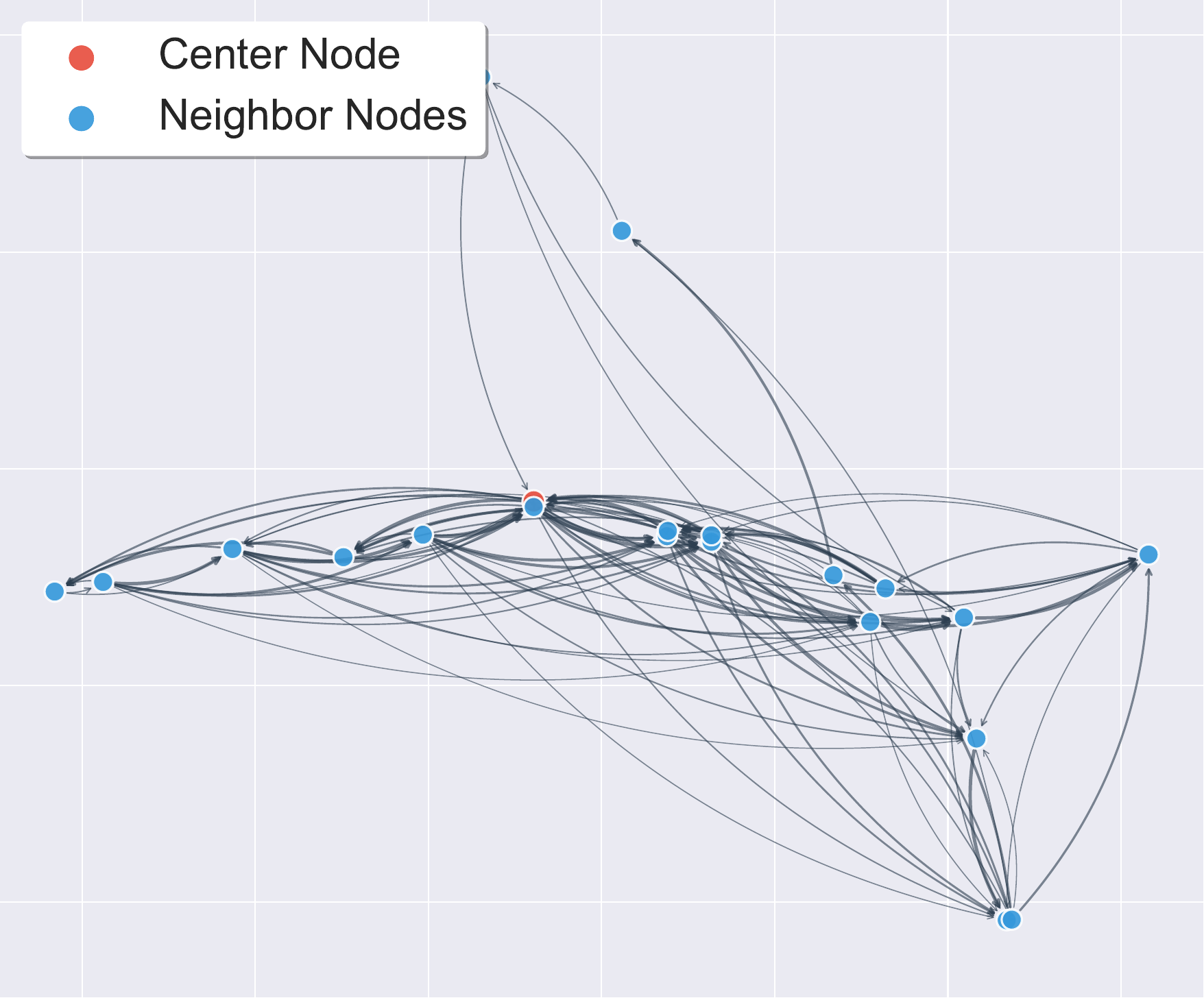}
    \caption*{(a) Fixed graph structure}
\end{minipage}
\hfill
\begin{minipage}{0.23\textwidth}
    \centering
    \includegraphics[width=\linewidth]{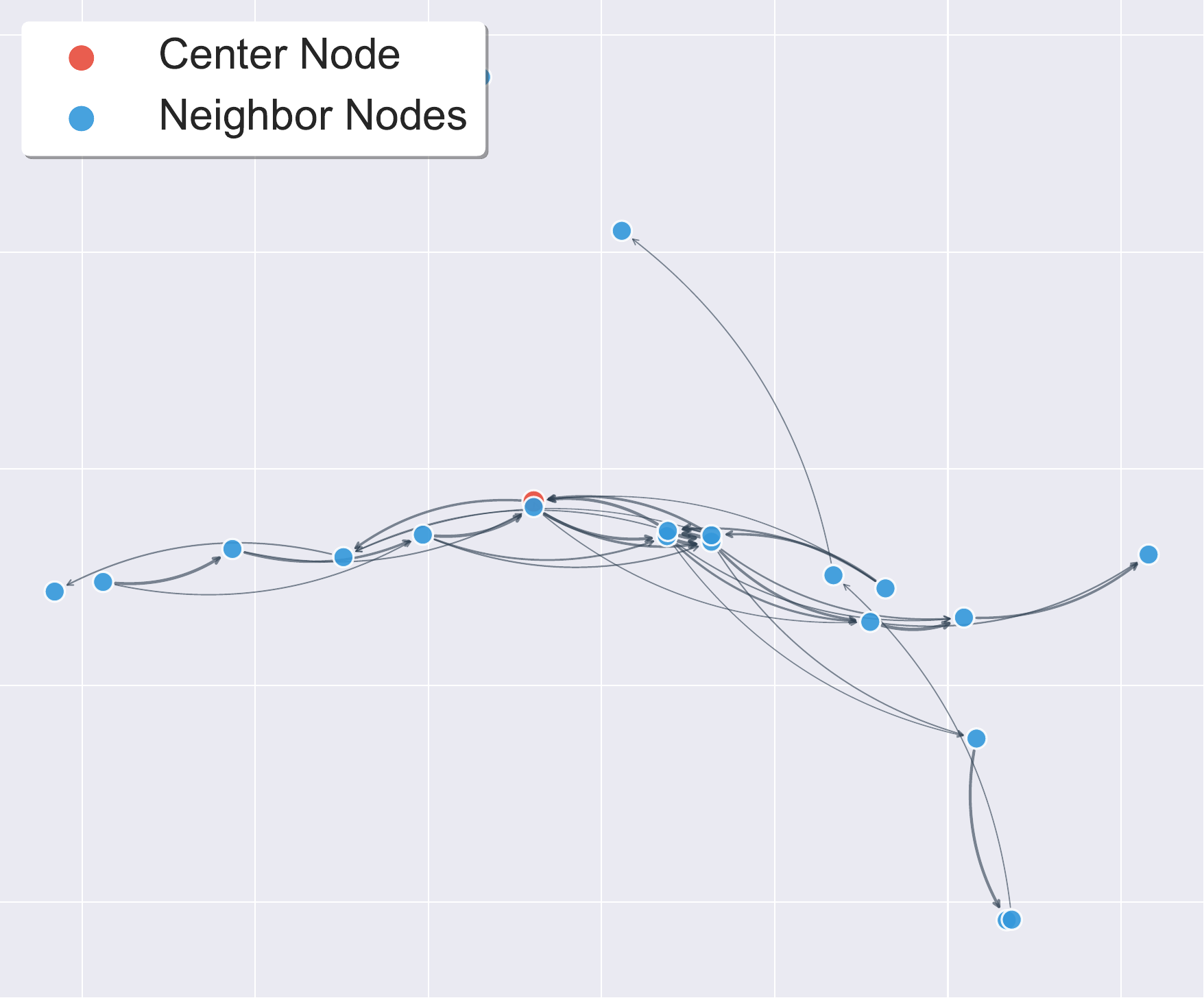}
    \caption*{(b) Learned graph structure}
\end{minipage}
\hfill
\begin{minipage}{0.475\textwidth}
    \centering
    \includegraphics[width=\linewidth]{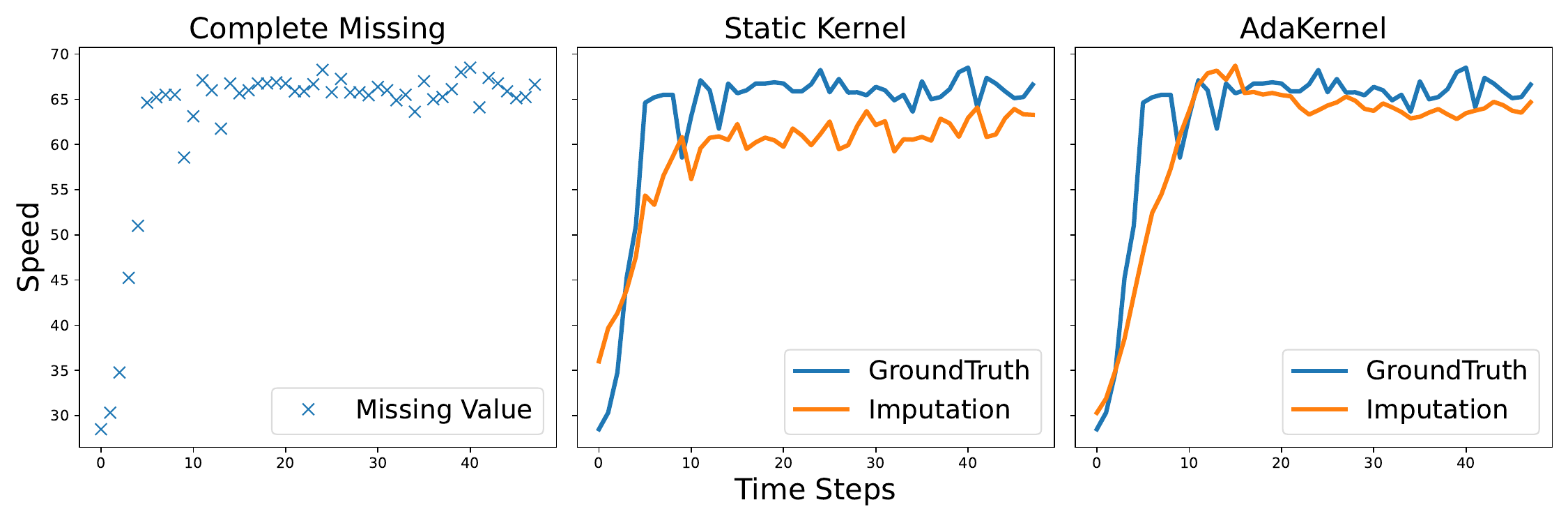}
    \caption*{(c) Kriging results}
\end{minipage}
\caption{IGNNK for kriging task on METR-LA.}
\label{fig:ignnk_kriging_la}
\end{figure*}

\begin{figure*}[t]
\centering
\begin{minipage}{0.23\textwidth}
    \centering
    \includegraphics[width=\linewidth]{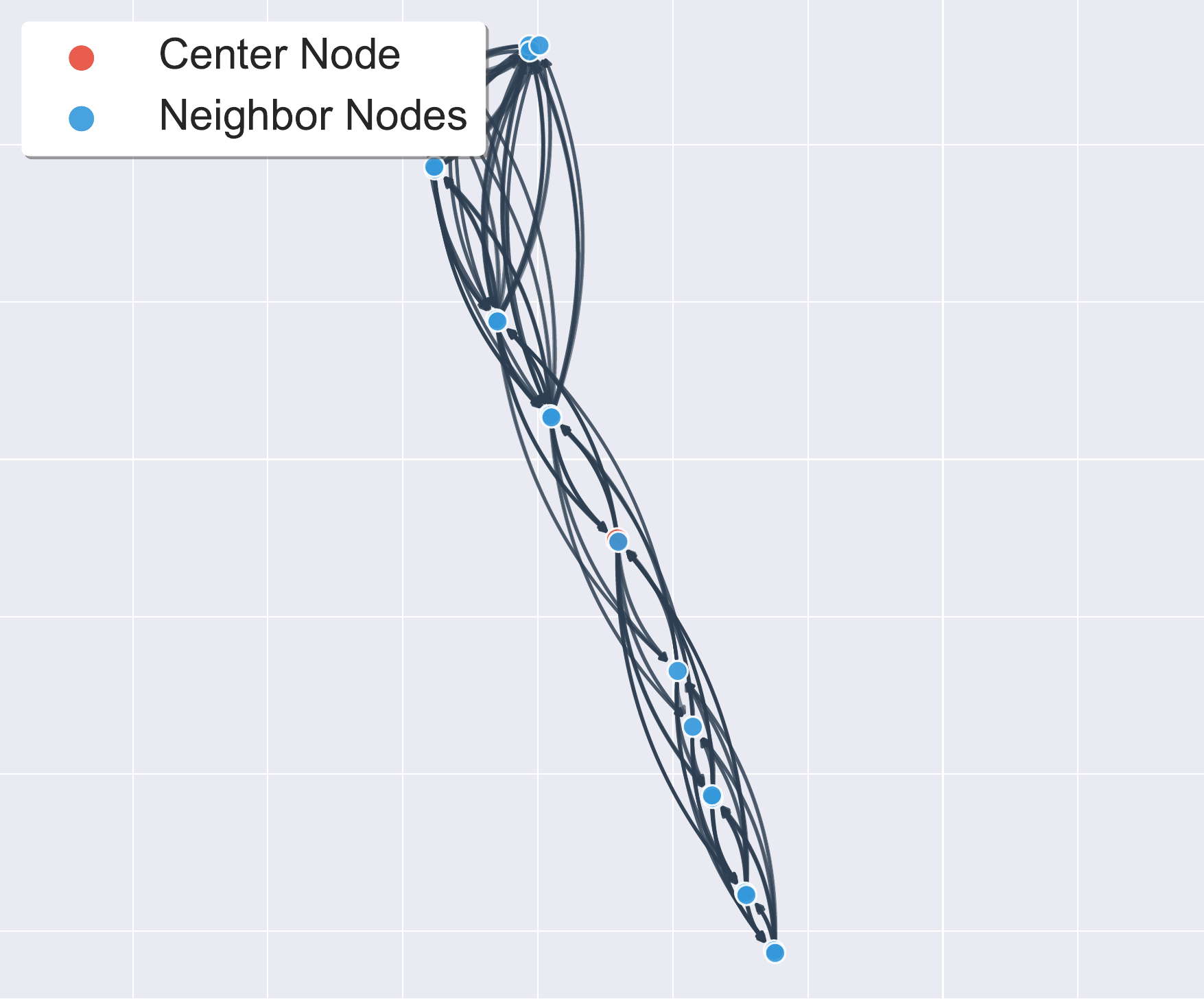}
    \caption*{(a) Fixed graph structure}
\end{minipage}
\hfill
\begin{minipage}{0.23\textwidth}
    \centering
    \includegraphics[width=\linewidth]{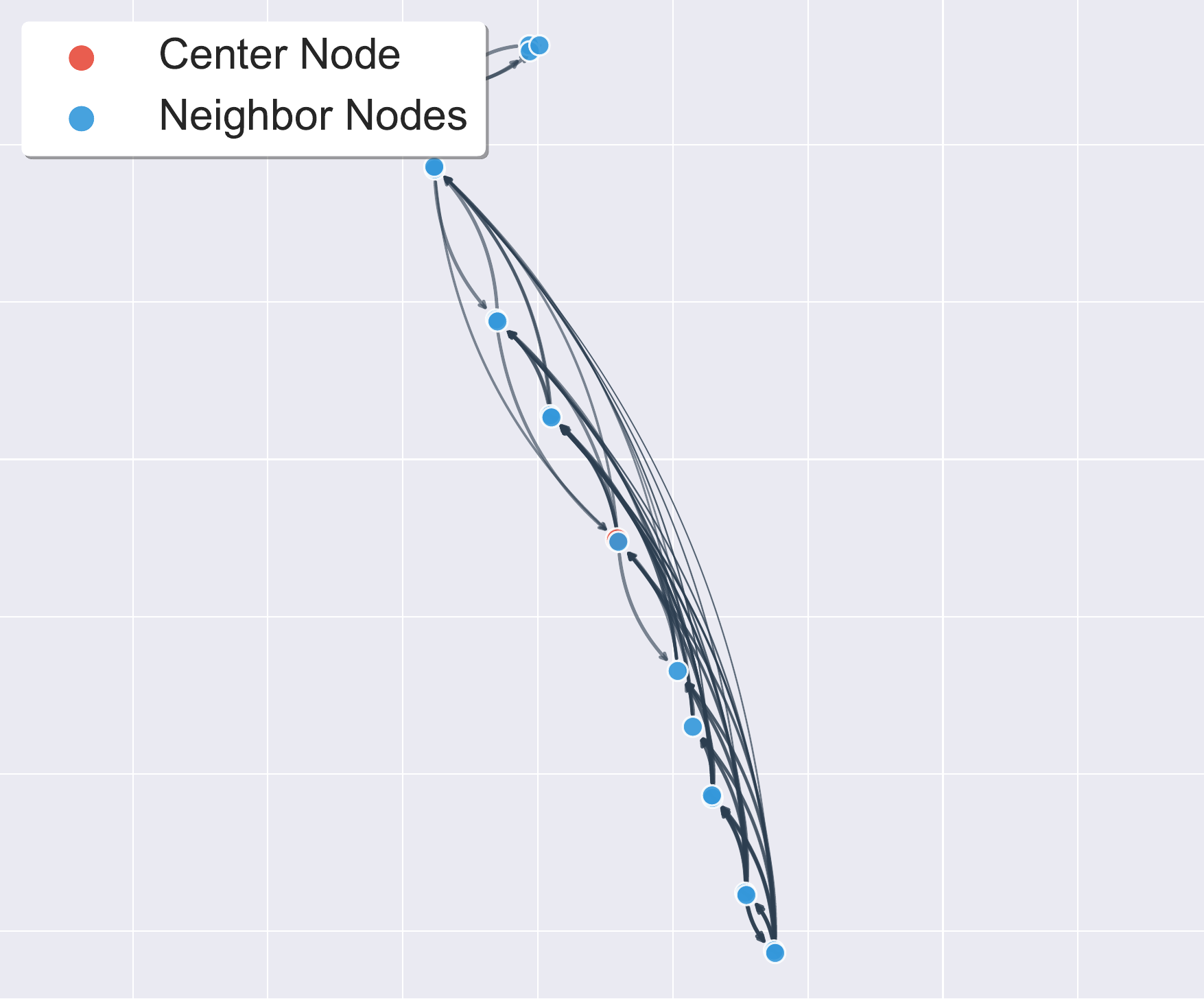}
    \caption*{(b) Learned graph structure}
\end{minipage}
\hfill
\begin{minipage}{0.475\textwidth}
    \centering
    \includegraphics[width=\linewidth]{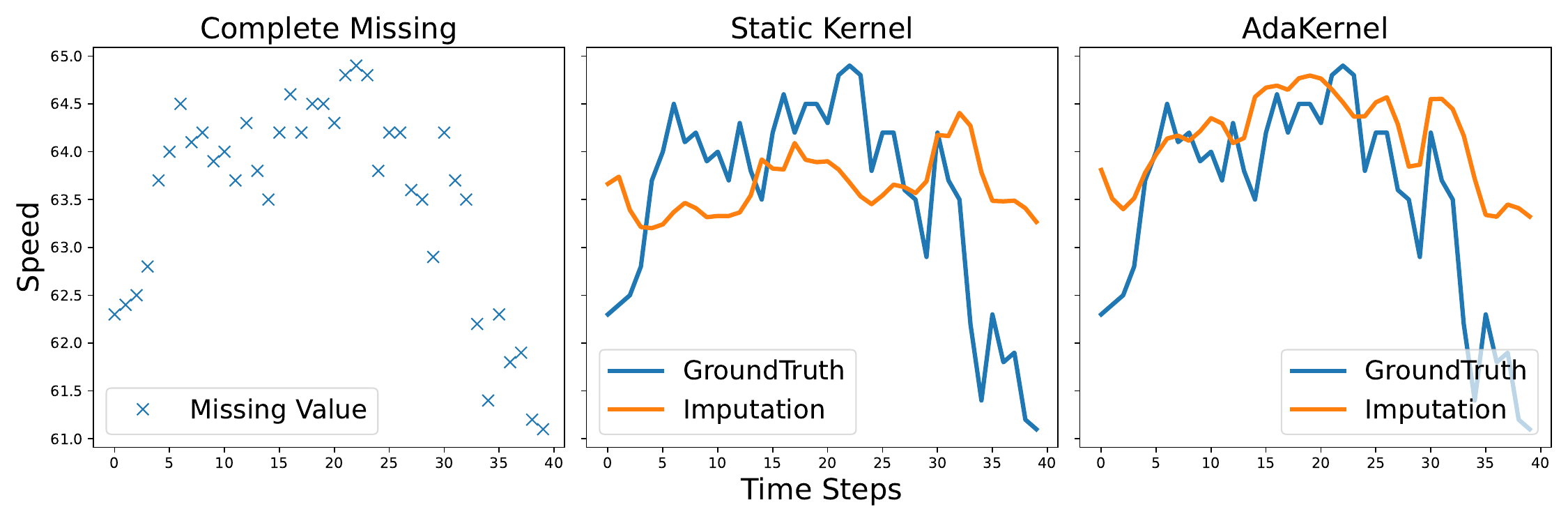}
    \caption*{(c) Kriging results}
\end{minipage}
\caption{KITS for kriging task on PeMS-BAY.}
\label{fig:kits_kriging_bay}
\end{figure*}

\begin{figure*}[t]
\centering
\begin{minipage}{0.23\textwidth}
    \centering
    \includegraphics[width=\linewidth]{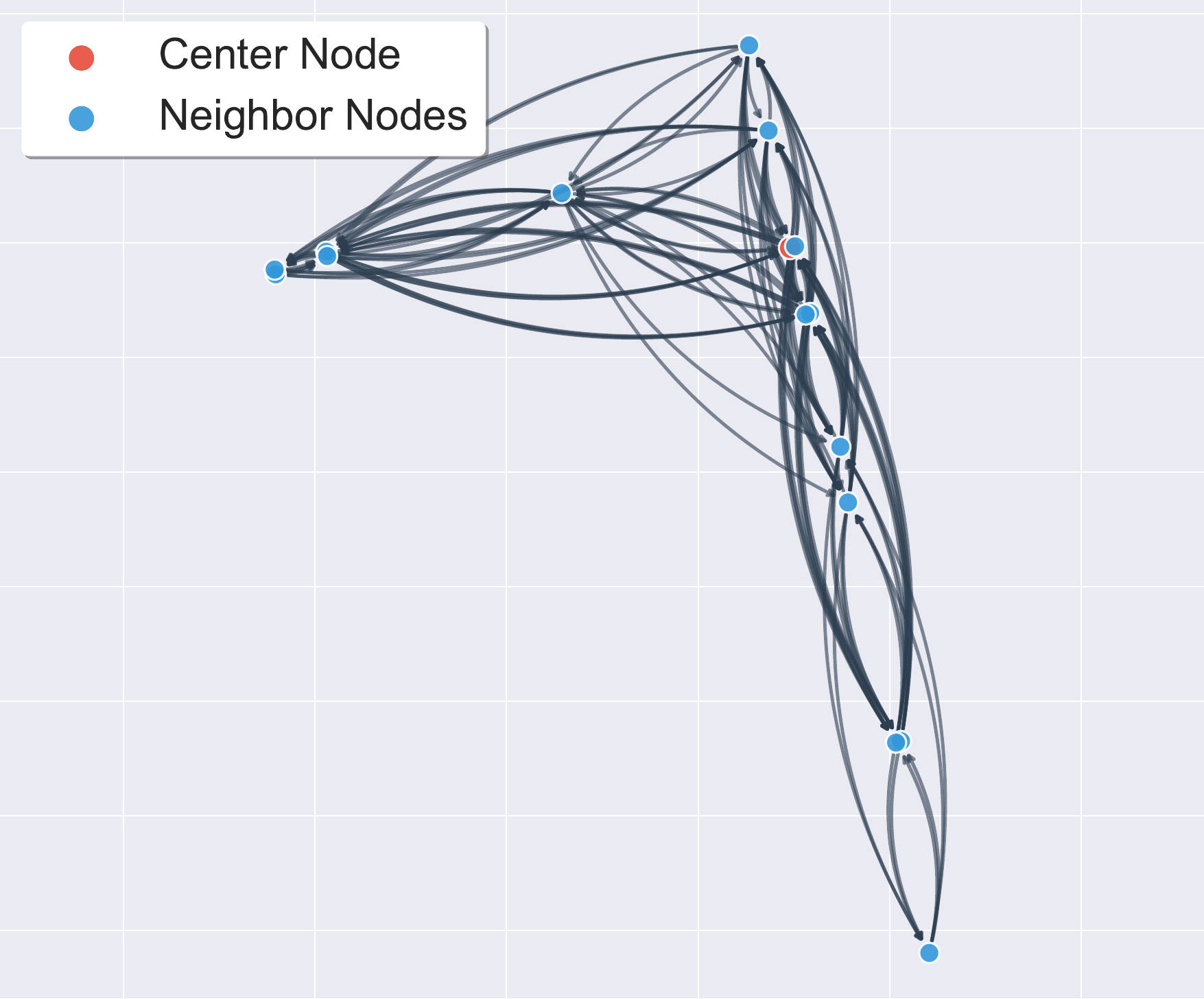}
    \caption*{(a) Fixed graph structure}
\end{minipage}
\hfill
\begin{minipage}{0.23\textwidth}
    \centering
    \includegraphics[width=\linewidth]{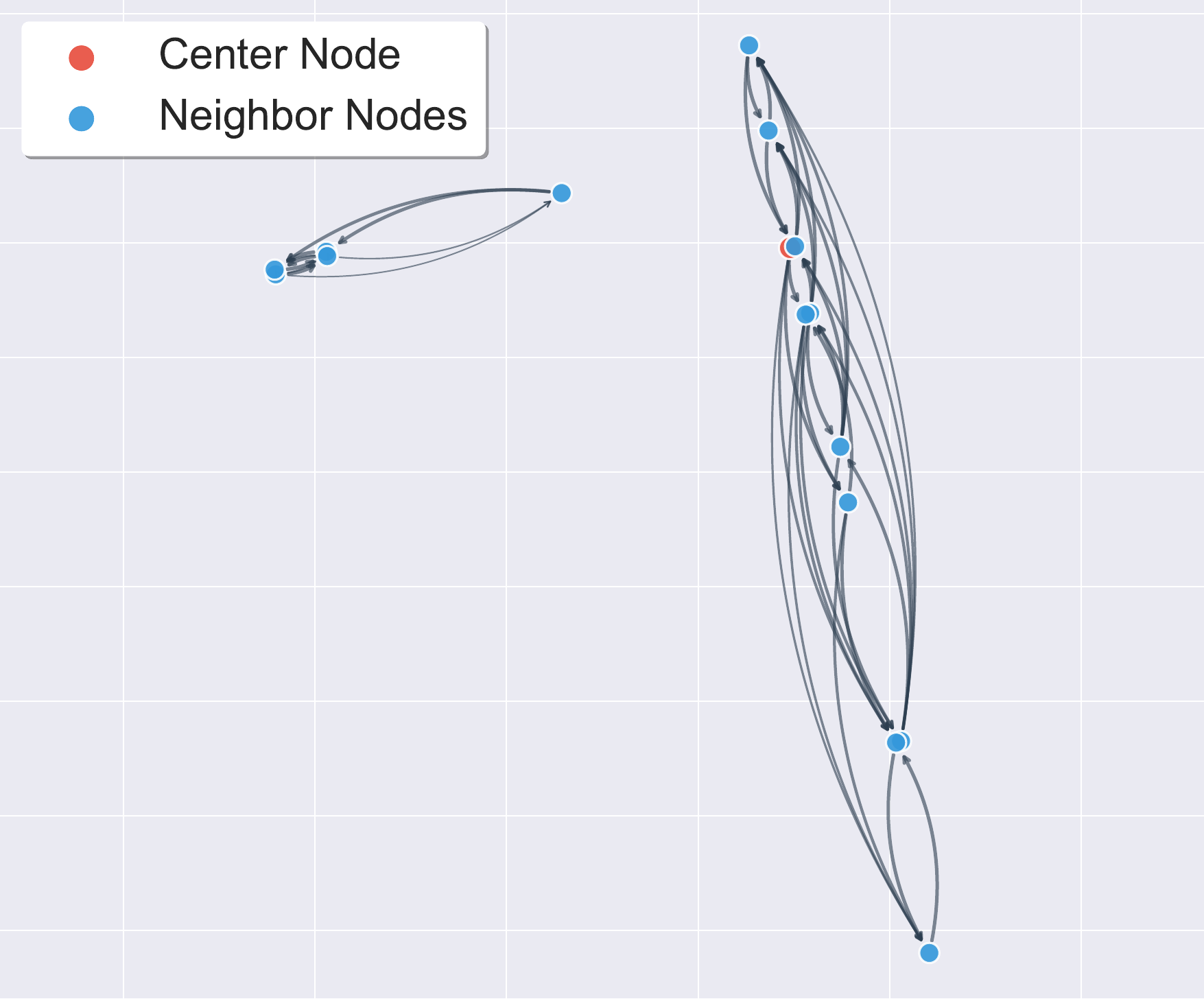}
    \caption*{(b) Learned graph structure}
\end{minipage}
\hfill
\begin{minipage}{0.475\textwidth}
    \centering
    \includegraphics[width=\linewidth]{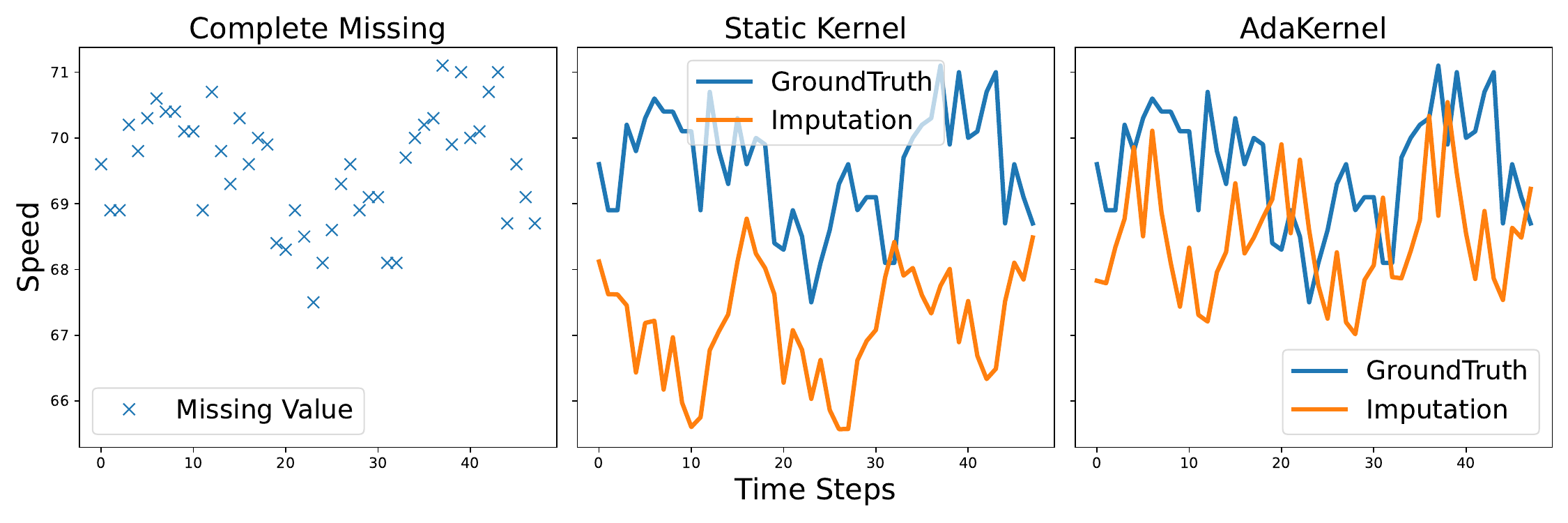}
    \caption*{(c) Kriging results}
\end{minipage}
\caption{IGNNK for kriging task on PeMS-BAY.}
\label{fig:ignnk_kriging_bay}
\end{figure*}

\begin{figure*}[t]
\centering
\begin{minipage}{0.23\textwidth}
    \centering
    \includegraphics[width=\linewidth]{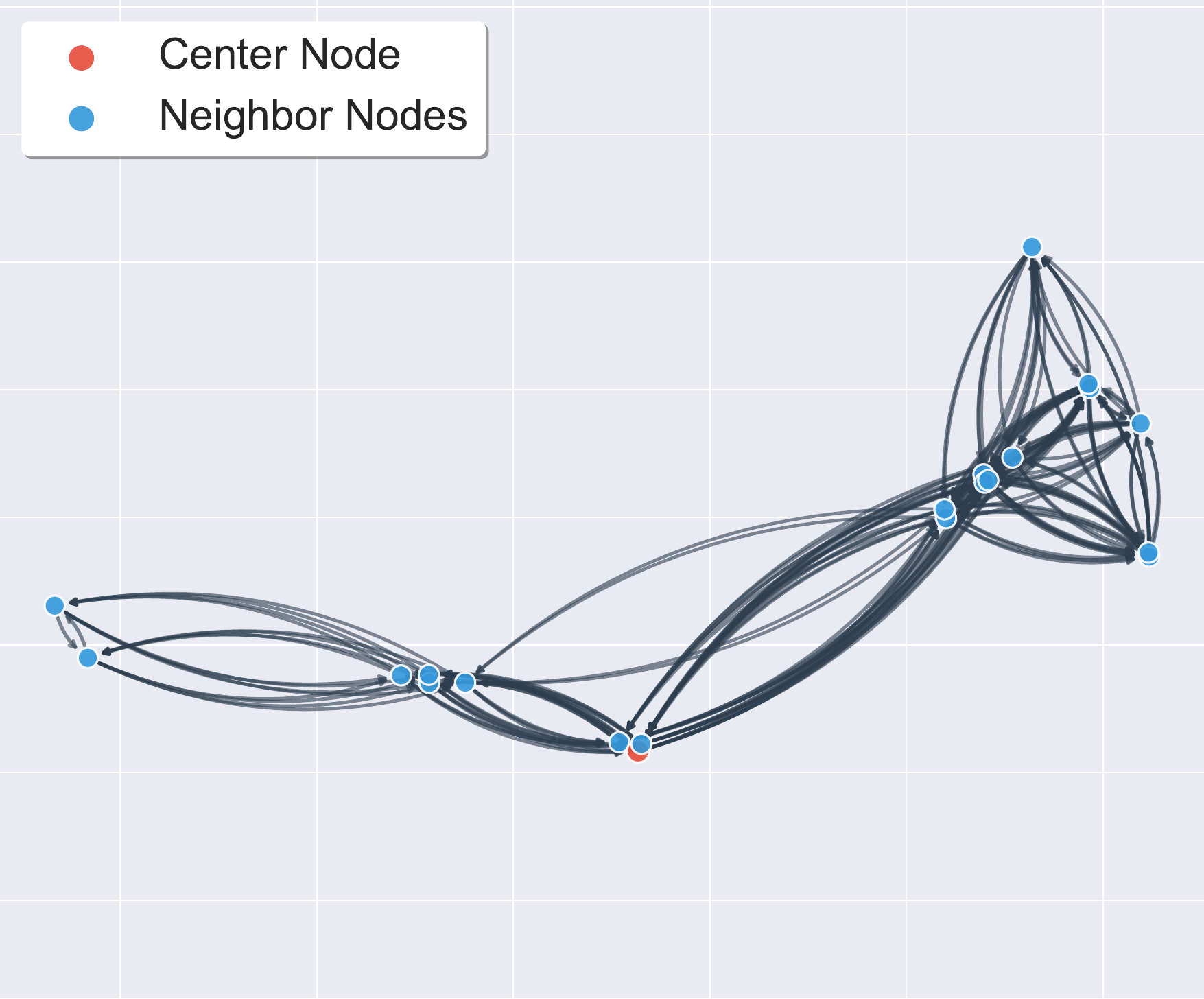}
    \caption*{(a) Fixed graph structure}
\end{minipage}
\hfill
\begin{minipage}{0.23\textwidth}
    \centering
    \includegraphics[width=\linewidth]{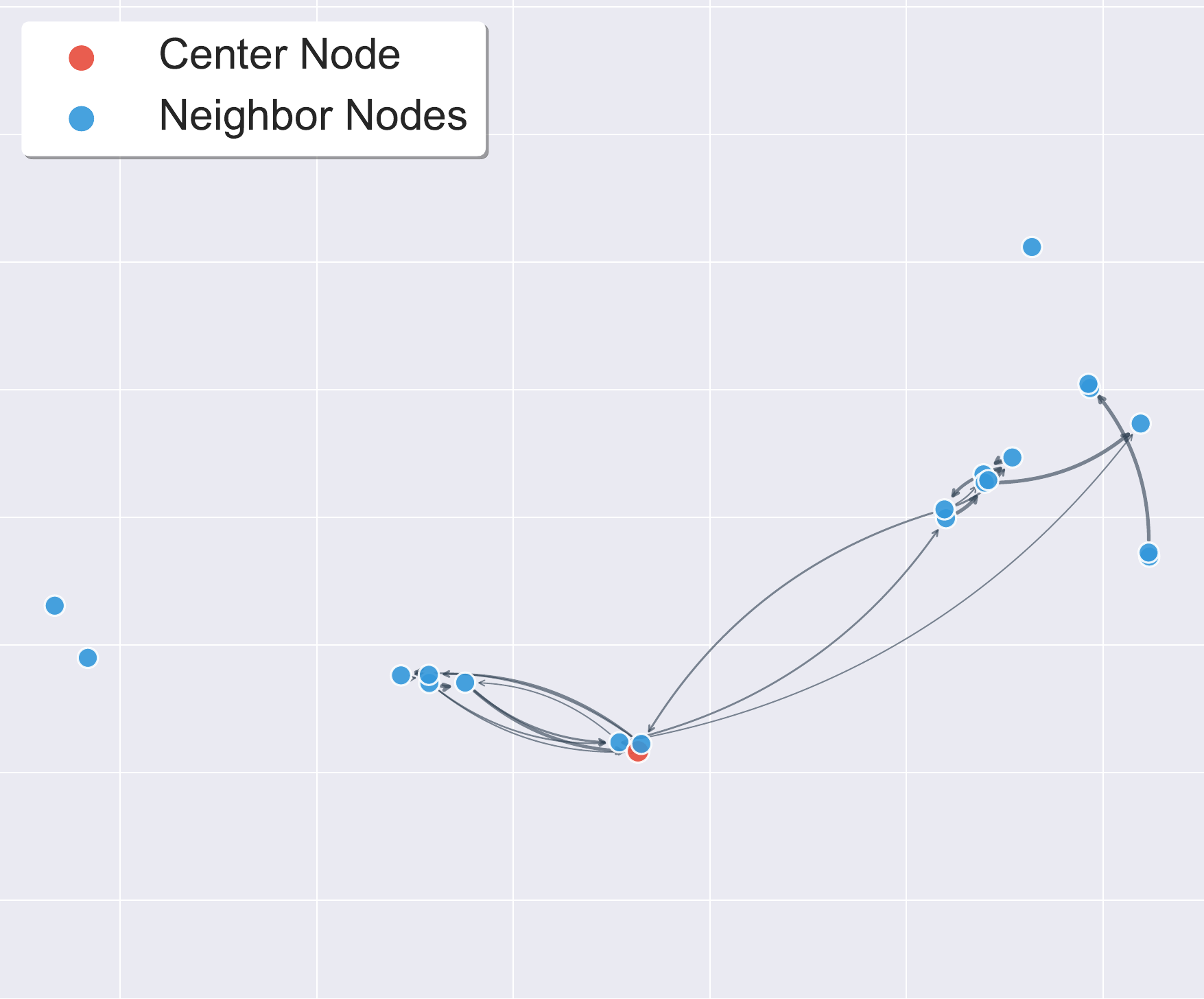}
    \caption*{(b) Learned graph structure}
\end{minipage}
\hfill
\begin{minipage}{0.475\textwidth}
    \centering
    \includegraphics[width=\linewidth]{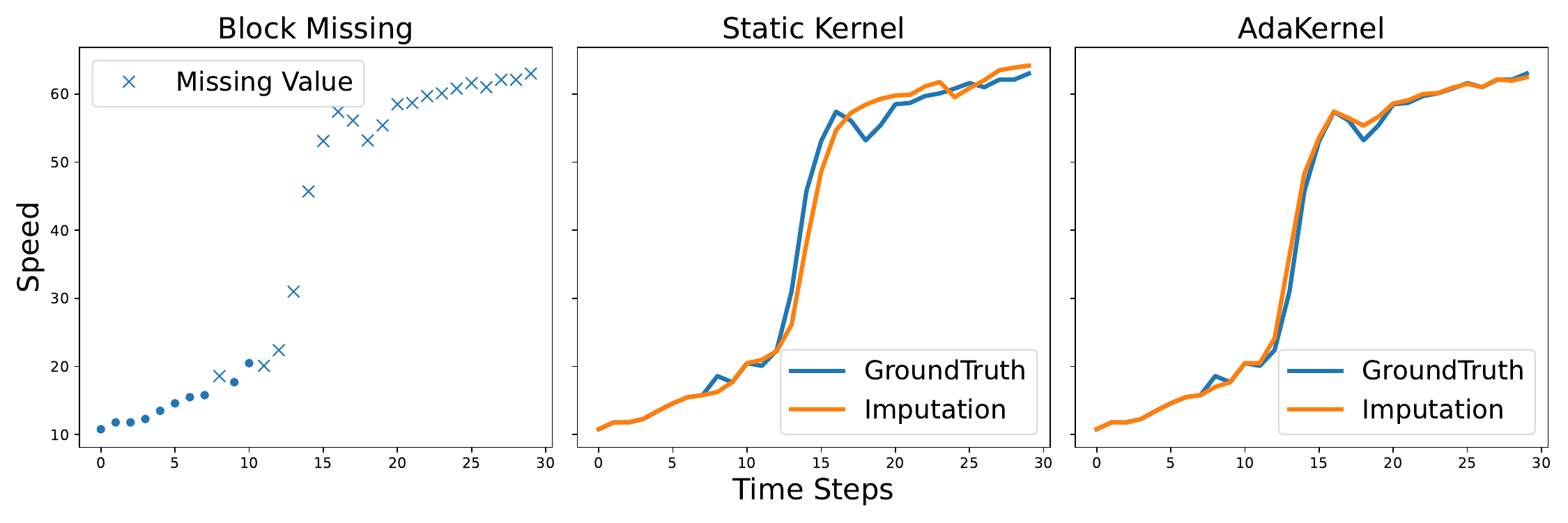}
    \caption*{(c) Imputation results}
\end{minipage}
\caption{GRIN for imputation task on PeMS-BAY (block missing).}
\label{fig:grin_imp_bay}
\end{figure*}

\begin{figure*}[t]
\centering
\begin{minipage}{0.23\textwidth}
    \centering
    \includegraphics[width=\linewidth]{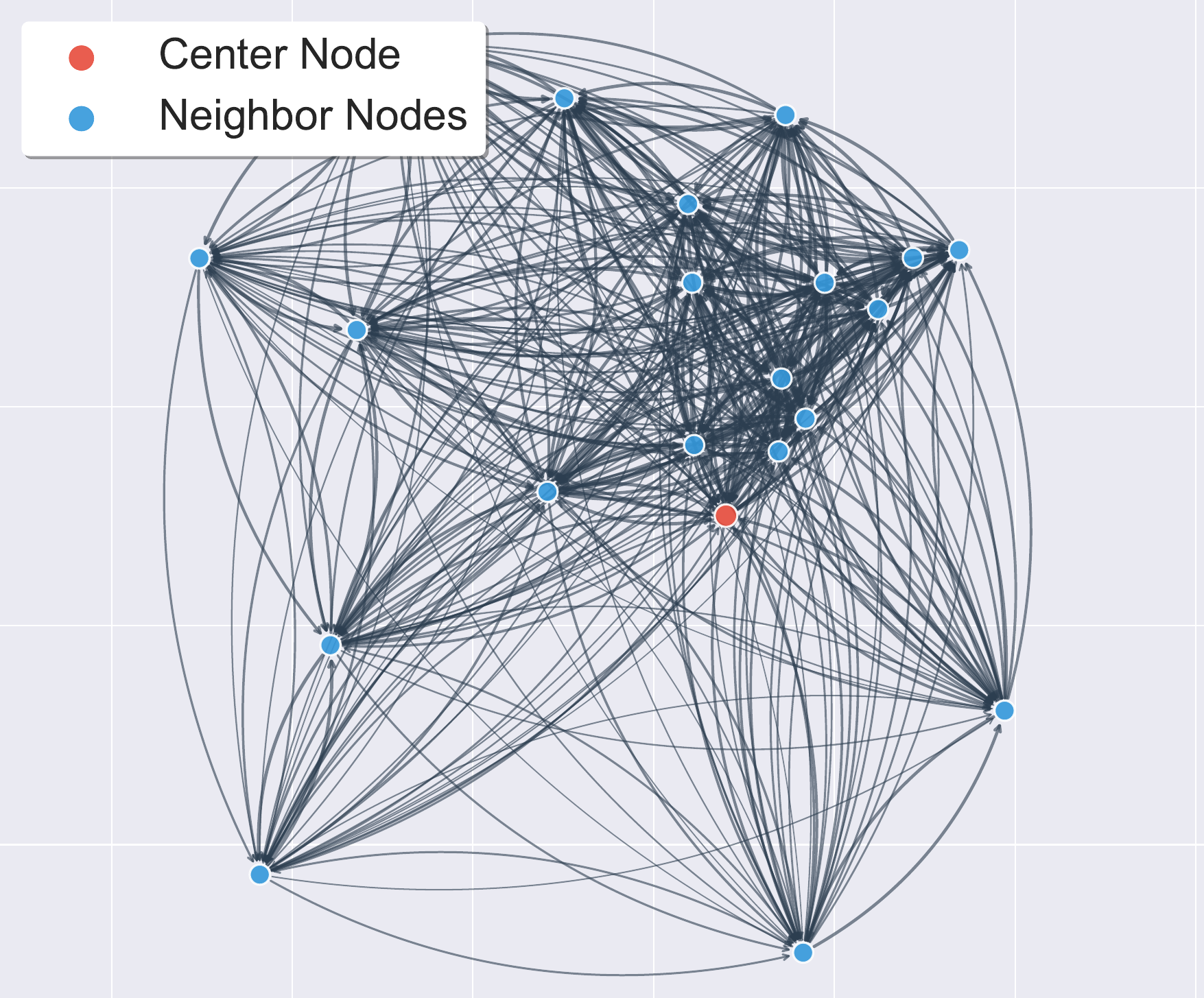}
    \caption*{(a) Fixed graph structure}
\end{minipage}
\hfill
\begin{minipage}{0.23\textwidth}
    \centering
    \includegraphics[width=\linewidth]{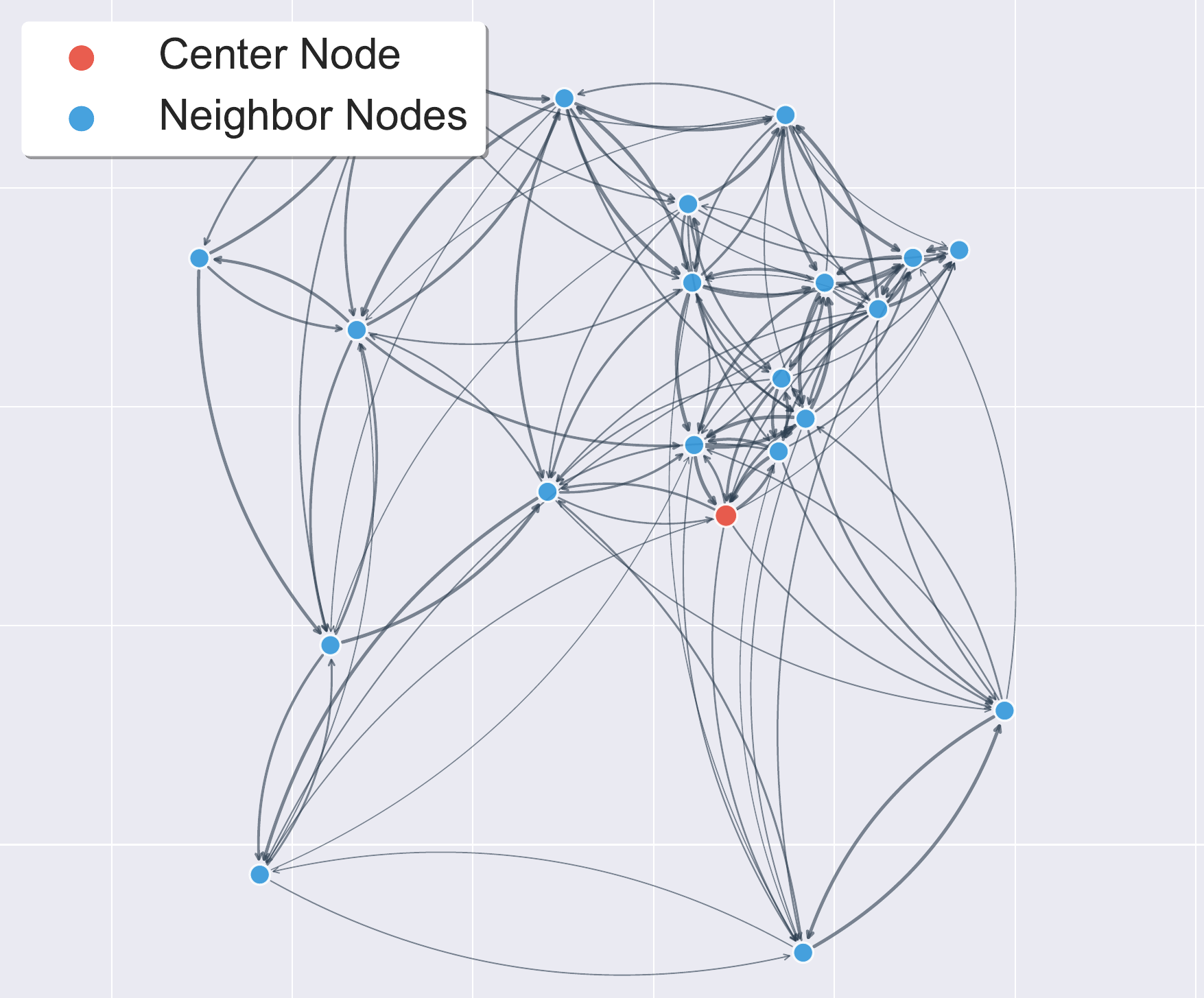}
    \caption*{(b) Learned graph structure}
\end{minipage}
\hfill
\begin{minipage}{0.475\textwidth}
    \centering
    \includegraphics[width=\linewidth]{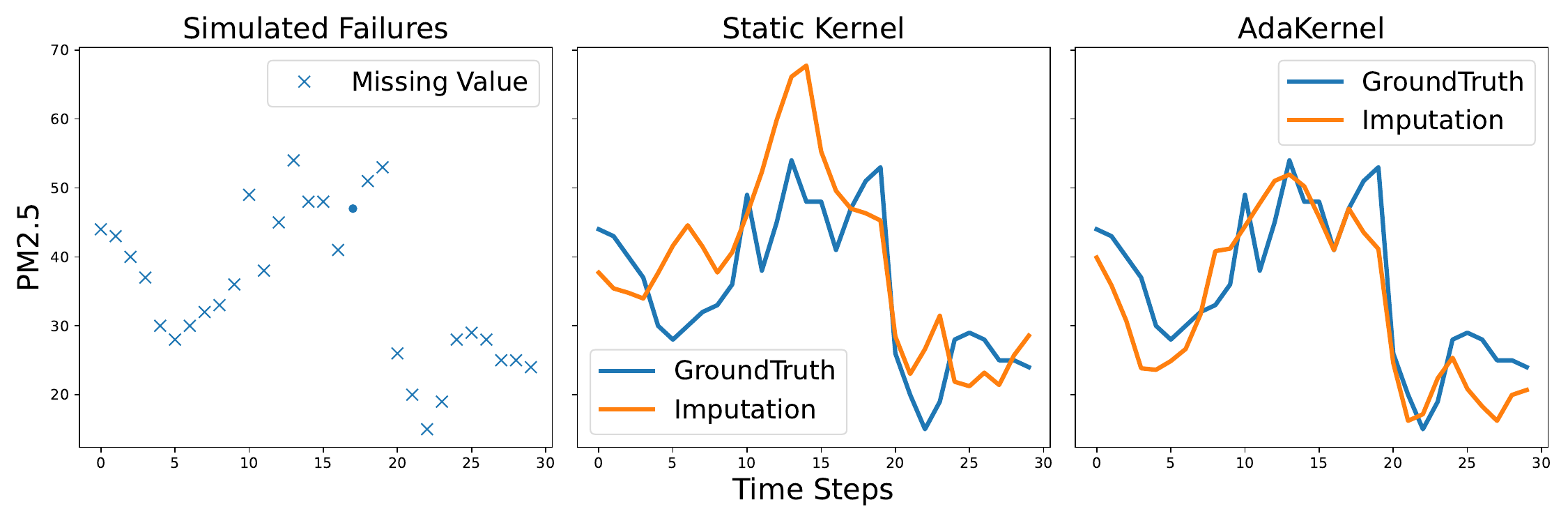}
    \caption*{(c) Imputation results}
\end{minipage}
\caption{GRIN for imputation task on AQI36.}
\label{fig:grin_imp_aqi36}
\end{figure*}

\begin{figure*}[t]
\centering
\begin{minipage}{0.23\textwidth}
    \centering
    \includegraphics[width=\linewidth]{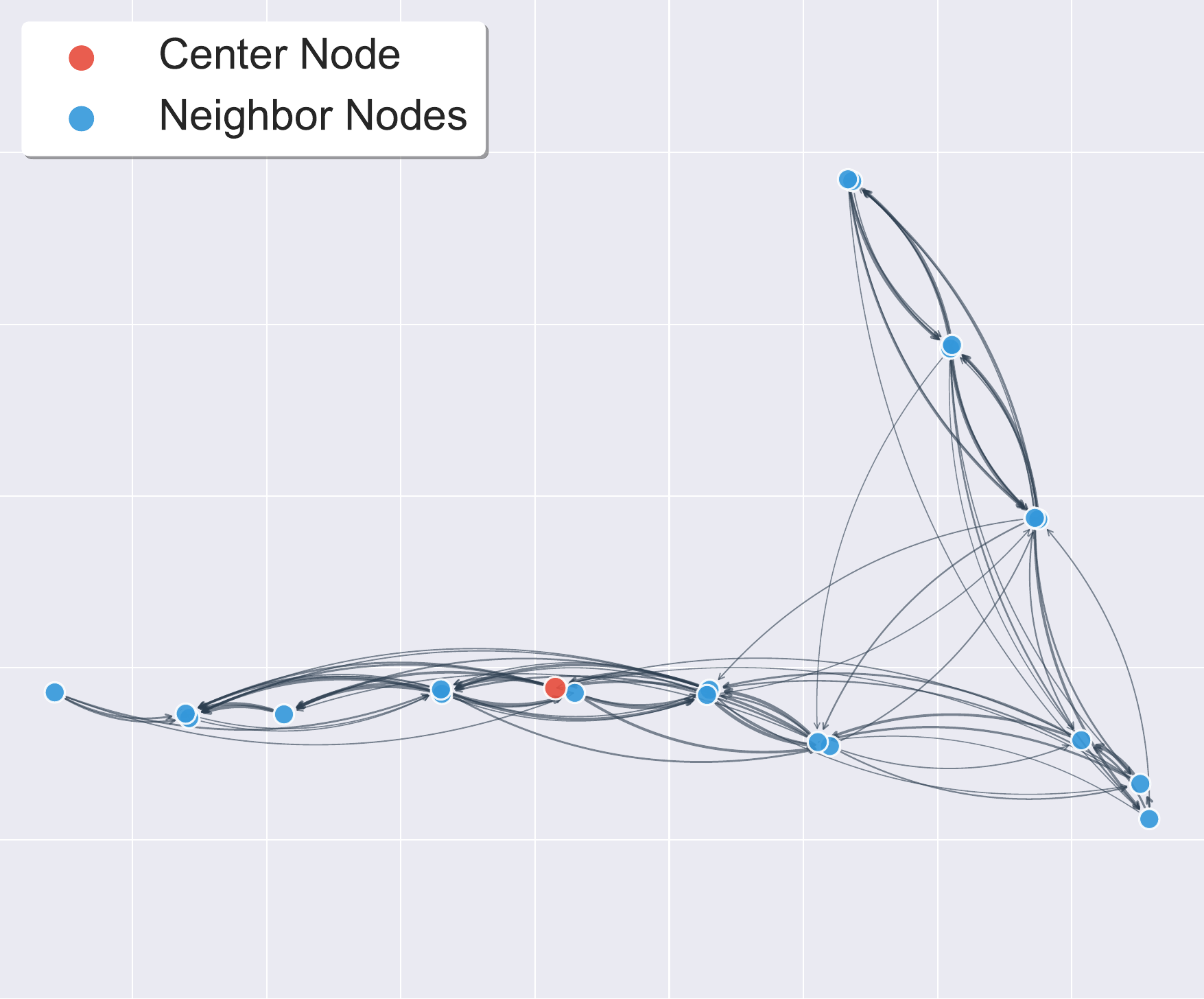}
    \caption*{(a) Fixed graph structure}
\end{minipage}
\hfill
\begin{minipage}{0.23\textwidth}
    \centering
    \includegraphics[width=\linewidth]{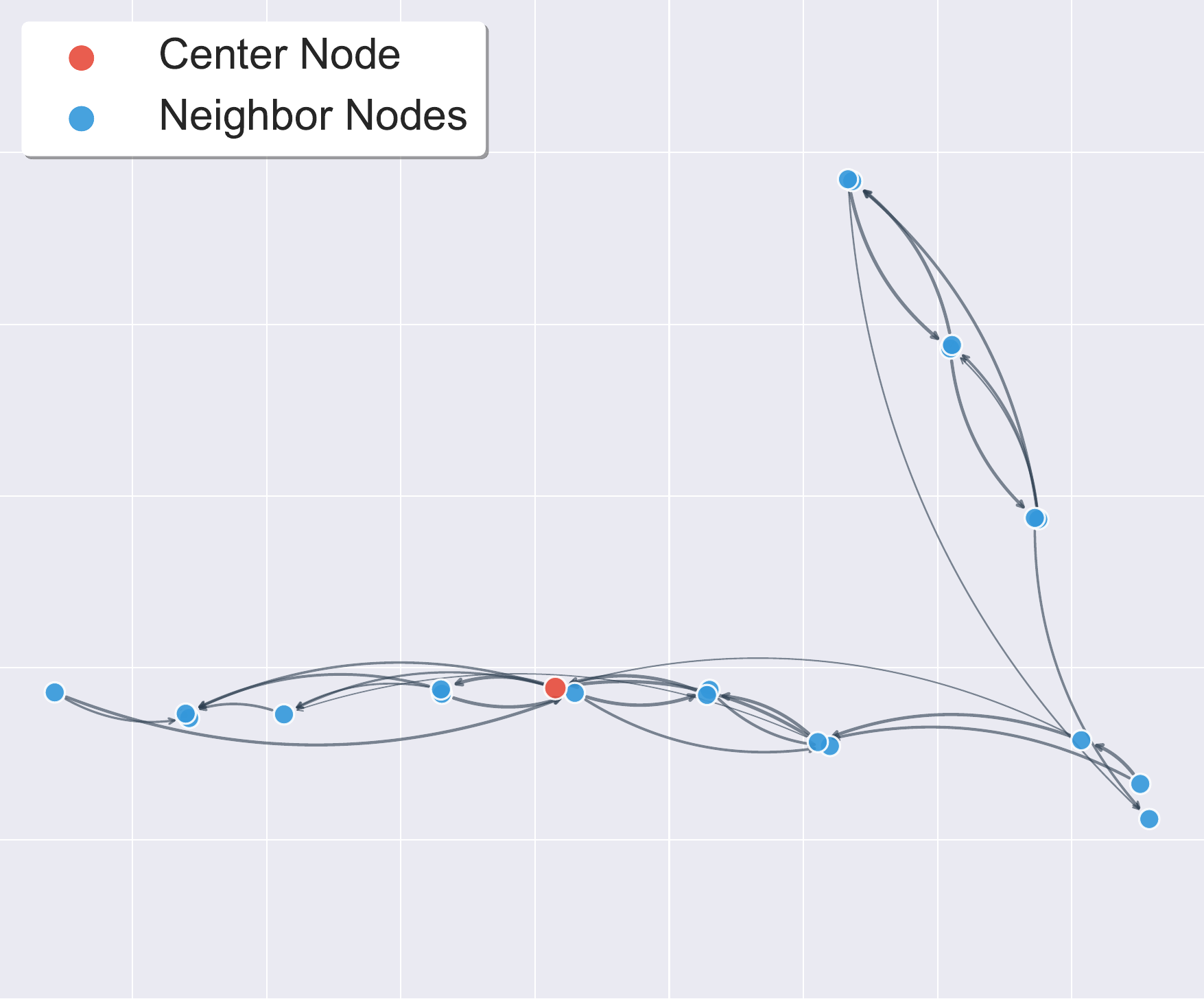}
    \caption*{(b) Learned graph structure}
\end{minipage}
\hfill
\begin{minipage}{0.475\textwidth}
    \centering
    \includegraphics[width=\linewidth]{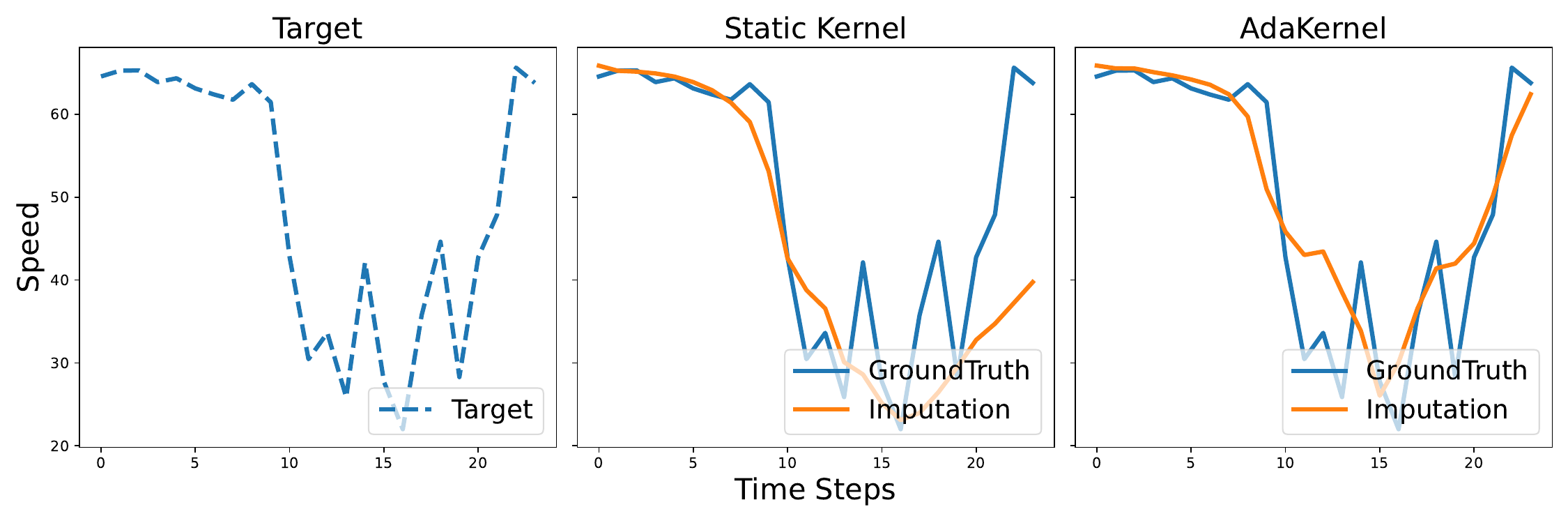}
    \caption*{(c) Forecasting results}
\end{minipage}
\caption{DCRNN for forecasting task on METR-LA.}
\label{fig:dcrnn_forecast_la}
\end{figure*}

\begin{figure*}[t]
\centering
\begin{minipage}{0.23\linewidth}
    \centering
    \includegraphics[width=\linewidth]{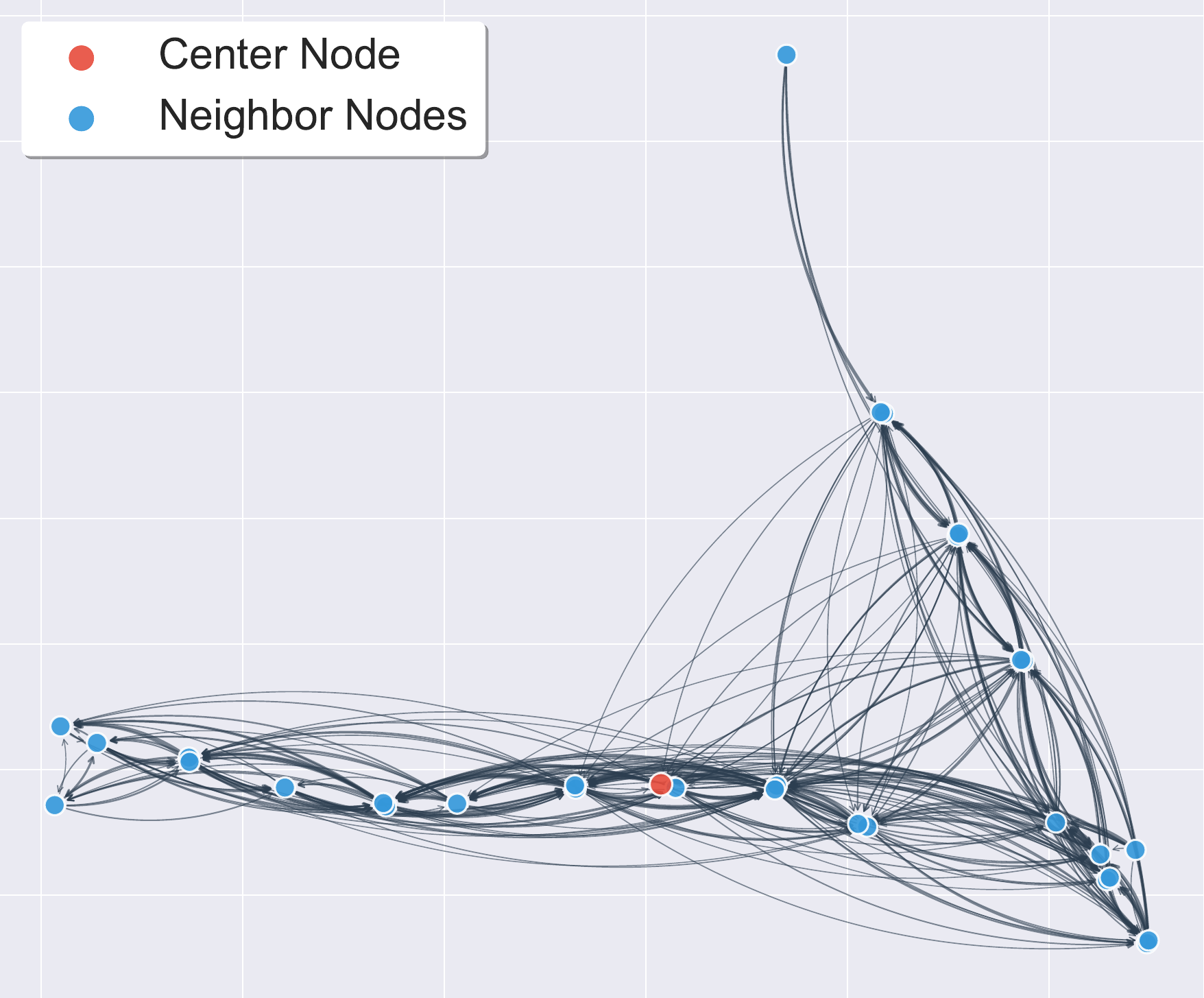}
    \caption*{(a) Fixed Matérn kernel}
\end{minipage}
\begin{minipage}{0.23\linewidth}
    \centering
    \includegraphics[width=\linewidth]{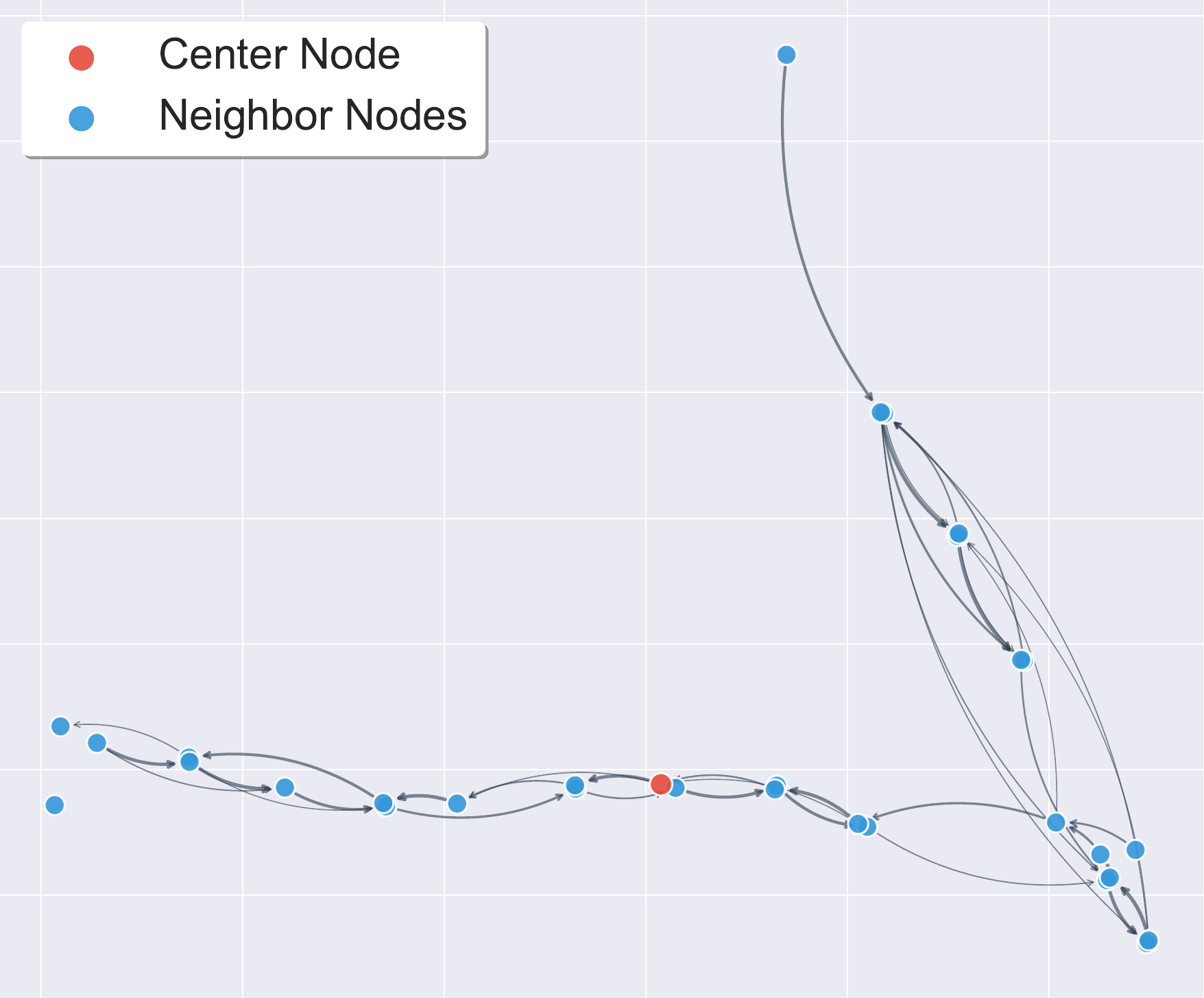}
    \caption*{(b) Learned Matérn kernel}
\end{minipage}
\begin{minipage}{0.23\linewidth}
    \centering
    \includegraphics[width=\linewidth]{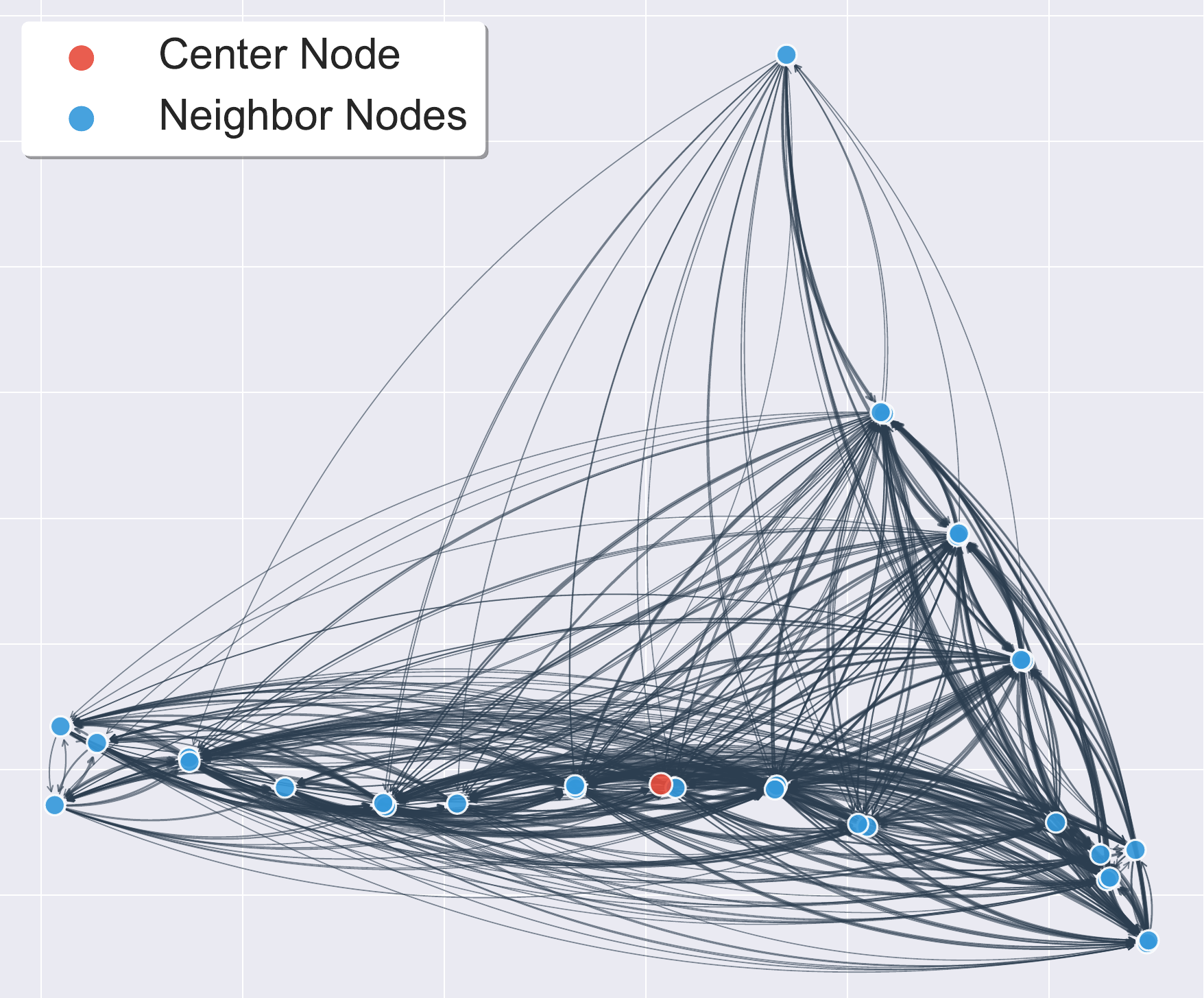}
    \caption*{(c) Fixed RQ kernel}
\end{minipage}
\begin{minipage}{0.23\linewidth}
    \centering
    \includegraphics[width=\linewidth]{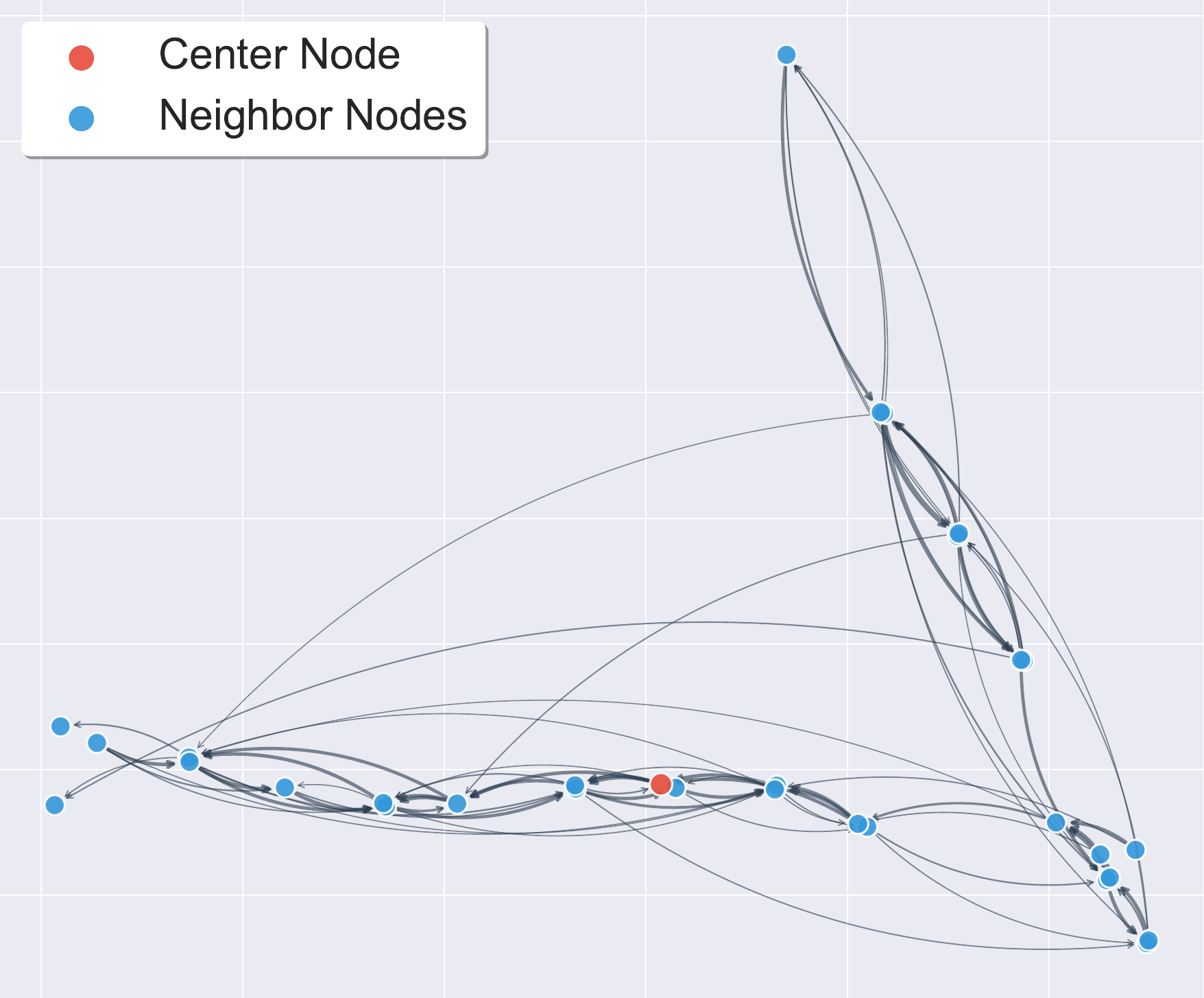}
    \caption*{(d) Learned RQ kernel}
\end{minipage}

\caption{Relationship between error improvement and nodes' average distances to their neighbors for both kriging and imputation tasks.}
\label{app:dist_analysis}
\end{figure*}

\begin{figure*}[t]
\centering

\begin{minipage}{0.48\linewidth}
    \centering
    \includegraphics[width=\linewidth]{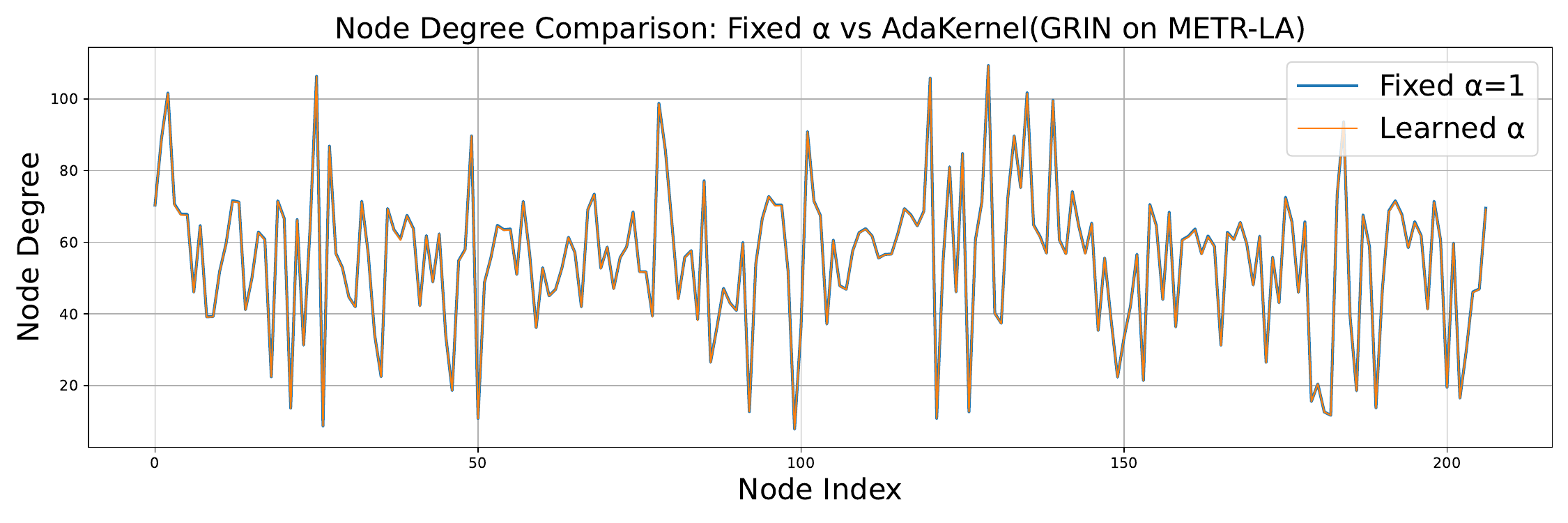}
    \caption*{(a) METR-LA}
\end{minipage}
\hfill
\begin{minipage}{0.48\linewidth}
    \centering
    \includegraphics[width=\linewidth]{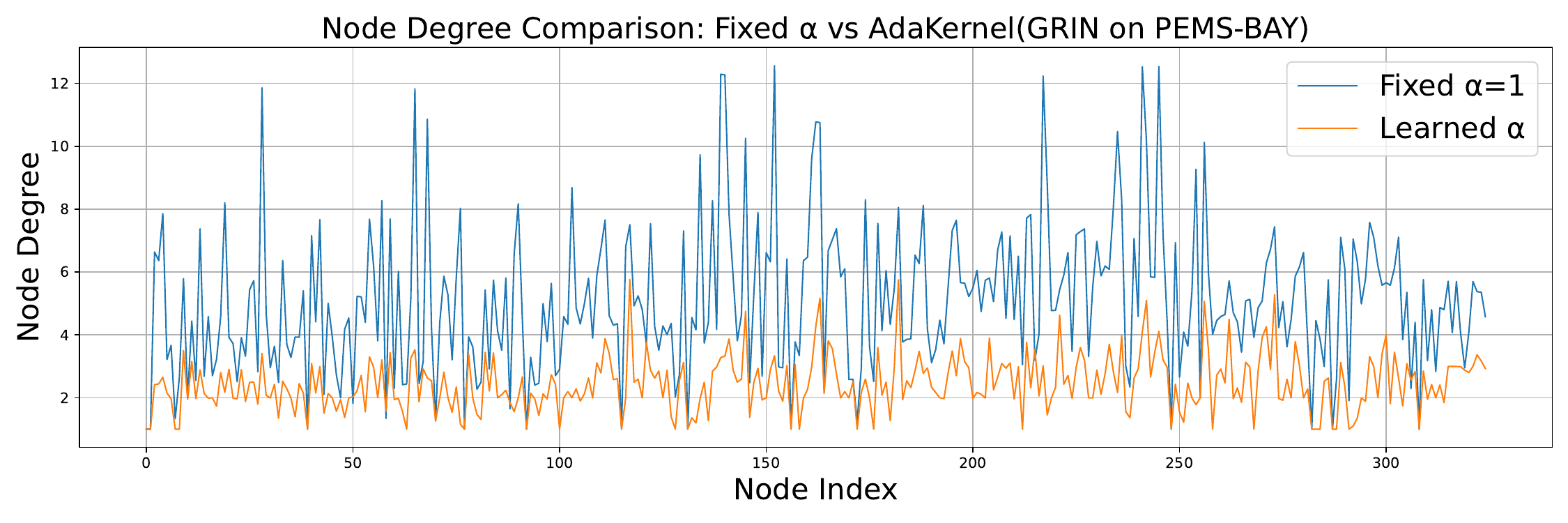}
    \caption*{(b) PEMS-BAY}
\end{minipage}

\caption{Node degree comparison of adjacency matrices with fixed $\alpha=1$ (blue) and AdaKernel (orange) in GRIN.}
\label{fig:degree_compare_grin}
\end{figure*}